\DeclareMathAlphabet{\mathcal}{OMS}{cmsy}{m}{n}
\newtheorem{assumption}[theorem]{Assumption}
\DeclareMathOperator*{\argmin}{arg\,min}
\newcommand{\N}{\mathbb{N}}
\newcommand{\R}{\mathbb{R}}
\newcommand{\bs}{\boldsymbol}
\newcommand{\bpsi}{\Psi}
\newcommand{\bphi}{\Phi}
\newcommand{\bx}{x}
\newcommand{\bX}{X}
\newcommand{\by}{y}
\newcommand{\bY}{Y}
\newcommand{\bze}{\zeta}
\newcommand{\bZ}{Z}
\newcommand{\bu}{u}
\newcommand{\bU}{U}
\newcommand{\mH}{\mathsf H}
\newcommand{\mF}{\mathsf F}
\newcommand{\mG}{\mathsf G}
\newcommand{\sqbasis}{\mathsf P}
\newcommand{\mM}{\mathsf M}
\newcommand{\mA}{\mathsf A}
\newcommand{\mB}{\mathsf B}
\newcommand{\mC}{\mathsf C}
\newcommand{\mD}{\mathsf D}
\newcommand{\mQ}{\mathsf Q}
\newcommand{\mR}{\mathsf R}
\newcommand{\mI}{\mathsf I}
\newcommand{\chol}{\mathsf L}
\newcommand{\tA}{\bs{\mathsf A}}
\newcommand{\tB}{\bs{\mathsf B}}
\newcommand{\tC}{\bs{\mathsf C}}
\newcommand{\tH}{\bs{\mathsf H}}
\newcommand{\mtar}{\nu_{\pi}}
\newcommand{\mref}{\mu }
\newcommand{\muni}{\mu_{\rm uni}}
\newcommand{\ptar}{\pi}
\newcommand{\pref}{f_{\bU}}
\newcommand{\puni}{f_{\rm uni}}
\newcommand{\km}{k-1}
\renewcommand{\ldots}{\makebox[1em][c]{.\hfil.\hfil.}}
\newcommand{\lowersup}[1]{ { \raisebox{-2pt}{$\scriptstyle #1$} } }
\newcommand{\lpx}[1]{L_\lambda^{#1}(\mathcal{X})}
\newcommand{\lpu}[1]{L_\omega^{#1}(\mathcal{U})}
\newcommand{\lpnormx}[2]{ \big\| {#1} \big\|_{\lpx{#2}} }
\newcommand{\lpnormu}[2]{ \big\| {#1} \big\|_{\lpu{#2}} }
\newcommand{\lpintx}[2]{ \left(\int_\mathcal{X} \left| #1\right|^{#2} \lambda(\bx) d\bx \right)^\frac1{#2} }
\newcommand{\lpintu}[2]{ \left(\int_\mathcal{U} \left| #1\right|^{#2} \omega(\bu) d\bu \right)^\frac1{#2} }
\newcommand{\chisq}{\chi^\lowersup{2}}
\pgfplotsset{%
      %log number format basis/.code 2 args={${\pgfmathprintnumber{#2}}$},
      every axis/.append style={width=.45\textwidth,
                                axis x line=bottom, axis y line=left,
                                x axis line style={very thick,->}, y axis line style={very thick,->},
                                tick align=inside, tick style={thick},
                                every x tick label/.style={font=\small},
                                every y tick label/.style={font=\small},
%                                 minor y tick num=9,
%                                 major grid style={black,opacity=.5}, minor grid style={thin,opacity=.5},
%                                 grid=both,
%                                 cycle list name=grape,
                                },
      every axis legend/.append style={
                                legend columns=1,
                                %row sep={2.2ex},
                                font=\small,
                                draw=none,
                                fill=white,
                                },
      every axis x label/.style={at={(0.5,-0.12)},below,fill=none,fill opacity=1,text opacity=1},
      every axis y label/.style={at={(-0.14,0.5)},fill=none,fill opacity=1,text opacity=1,rotate=90},
      }
\begin{document}

\title{Deep composition of tensor-trains using squared inverse Rosenblatt transports}

\titlerunning{Deep inverse Rosenblatt transport} % this must be below 45 chars

\author{Tiangang Cui \and Sergey Dolgov 
%
%\thanks{Communicated by Endre S\"{u}li.}
}

\institute {
T. Cui 
\at School of Mathematics, Monash University, Victoria 3800, Australia  \\ \email{tiangang.cui@monash.edu} \\ ORCID: \url{https://orcid.org/0000-0002-4840-8545}
\and
S. Dolgov (corresponding author)
\at Department of Mathematical Sciences, University of Bath, Bath, BA2 7AY, UK \\ \email{s.dolgov@bath.ac.uk} \\ ORCID: \url{https://orcid.org/0000-0002-1647-4214}
}

\date{\;}
% The correct dates will be entered by the editor

\maketitle

% REQUIRED
\begin{abstract}
Characterising intractable high-dimensional random variables is one of the fundamental challenges in stochastic computation. The recent surge of transport maps offers a mathematical foundation and new insights for tackling this challenge by coupling intractable random variables with tractable reference random variables.
This paper generalises the functional tensor-train approximation of the inverse Rosenblatt transport recently developed by Dolgov et al. (Stat Comput 30:603--625, 2020) to a wide class of high-dimensional non-negative functions, such as unnormalised probability density functions.
First, we extend the inverse Rosenblatt transform to enable the transport to general reference measures other than the uniform measure.
We develop an efficient procedure to compute this transport from a squared tensor-train decomposition which preserves the monotonicity.
More crucially, we integrate the proposed order-preserving functional tensor-train transport into a nested variable transformation framework inspired by the layered structure of deep neural networks. The resulting deep inverse Rosenblatt transport significantly expands the capability of tensor approximations and transport maps to random variables with complicated nonlinear interactions and concentrated density functions.
We demonstrate the efficiency of the proposed approach on a range of applications in statistical learning and uncertainty quantification, including parameter estimation for dynamical systems and inverse problems constrained by partial differential equations.

\keywords{Tensor-train \and Inverse problems \and Uncertainty quantification \and Rosenblatt transport \and Deep transport maps}
\subclass{%
65D15  \and %	Algorithms for functional approximation
65D32  \and %	Quadrature and cubature formulas
65C05  \and %	Monte Carlo methods
65C40  \and %	Computational Markov chains
65C60  \and %	Computational problems in statistics
62F15  \and %	Bayesian inference
15A69  \and %	Multilinear algebra, tensor products
15A23   %	Factorization of matrices
}
\end{abstract}

%!TEX root = ./sequential_TT_R.tex

\section{Introduction}

Exploration of high-dimensional probability distributions is a fundamental task in statistical physics, machine learning, uncertainty quantification, econometrics, and beyond.
In many practical scenarios, high-dimensional random variables of interest follow \emph{intractable} probability measures that exhibit nonlinear interactions and concentrate in some sub-manifolds. This way, one cannot directly simulate the random variables of interest but may be able to evaluate the unnormalised density function pointwise.

Suppose we have an intractable target probability measure $\mtar$ with the unnormalised density function $\pi(\bx)$ over a parameter space $\mathcal{X} \subseteq \R^d$, for example, the posterior measure in a Bayesian inference problem.
Various approaches have been proposed to characterise $\mtar$ using some reference probability measure $\mref$ defined over $\mathcal{U} \subseteq \R^d$, where independent and identically distributed random variables can be drawn from, e.g., a uniform or a Gaussian.
For example, Markov chain Monte Carlo (MCMC) methods \cite{MCMC:Liu_2001,robert2013monte} generate a Markov chain of random variables converging to $\mtar$ using $\mref$ as the proposal; and importance sampling and/or sequential Monte Carlo \cite{MCMC:KBJ_2014,mcbook} characterise  $\mtar$ using weighted samples drawn from $\mref$.
The recently developed \emph{transport map} idea, e.g., \cite{bigoni2019greedy,dafs-tt-bayes-2019,marzouk2016sampling,el2012bayesian,parno2018transport}, offers new insights for this task by identifying a measurable mapping, $T: \mathcal{U} \mapsto \mathcal{X}$, such that the pushforward of $\mref$, denoted by $T_\sharp\, \mref$, is a close approximation to $\mtar$.
Then, the mapping $T$ can be used to either accelerate classical sampling methods such as MCMC or to improve the efficiency of importance sampling.
In this work, we generalise the tensor-train (TT)  approach of \cite{dafs-tt-bayes-2019} to offer an order-preserving and multi-layered construction of transport maps that is suitable for high-dimensional random variables with nonlinear interactions and concentrated density functions.

\subsection{Outline and contributions}
The TT-based construction of~\cite{dafs-tt-bayes-2019} realises the mapping $T$ via a separable TT decomposition~\cite{oseledets2011tensor} of the target density function.
Since the separable tensor decomposition enables the marginalisation of the target density at a computational cost scaling linearly in the dimension of the random variables, it offers a computationally viable way to approximating marginal and conditional density functions of the target measure.
% %
In turn, the cumulative distribution functions (CDFs) of the marginals and conditionals define the Rosenblatt transport\footnote{This is also referred to as the Knothe--Rosenblatt rearrangement. It was independently proposed by Rosenblatt \cite{rosenblatt1952remarks} for statistical purposes and by Knothe \cite{knothe1957contributions} for proving the isoperimetric inequality. The setup of the TT-based approach closely follows the work of Rosenblatt.} that can couple the target measure with the uniform reference measure.
Section \ref{sec:background} presents the relevant background of the Rosenblatt transport, the functional form of the TT decomposition of multivariate functions~\cite{bigoni2016spectral,gorodetsky2019continuous,griebel2019analysis,hackbusch2012tensor}, and the TT-based construction of the inverse Rosenblatt transport.

The TT-based construction faces several challenges. First, the TT decomposition of the non-negative target density function often cannot preserve the non-negativity after rank truncations. 
This way, the resulting Rosenblatt transport may not preserve the monotonicity. 
Second, TT decomposition works best when the correlations between random variables are \emph{local}, i.e., the correlation decays with the distance between the indices of the variables. In the extreme case of independent random variables, the joint density factorises into the product of marginal densities.
However, high-dimensional random variables of interest often have concentrated density functions and exhibit complicated nonlinear interactions. In such cases, one may need a TT with high ranks to approximate the target probability density with sufficient accuracy, which in turn requires a rather large number of target density evaluations during the TT construction.

In Section \ref{sec:sirt}, we overcome the first challenge by proposing a new construction of inverse Rosenblatt transport by approximating the squared root of the target density in the TT format, followed by constructing the marginal and conditional densities from the square of the TT approximation.
The resulting \emph{squared inverse Rosenblatt transport} (SIRT) is order- and smoothness-preserving.
In addition, utilising the squared structure of the approximation, we also establish error bounds of SIRT in terms of various statistical divergences within the $f$-divergence family. 
These bounds are useful for bounding the statistical efficiency of posterior characterisation algorithms such as MCMC and importance sampling.

\begin{figure}[ht]
\centering
\begin{tikzpicture}
\node[] (p0) {\includegraphics[width=0.22\linewidth,height=0.22\linewidth,trim=2em 1.5em 0em 3em]{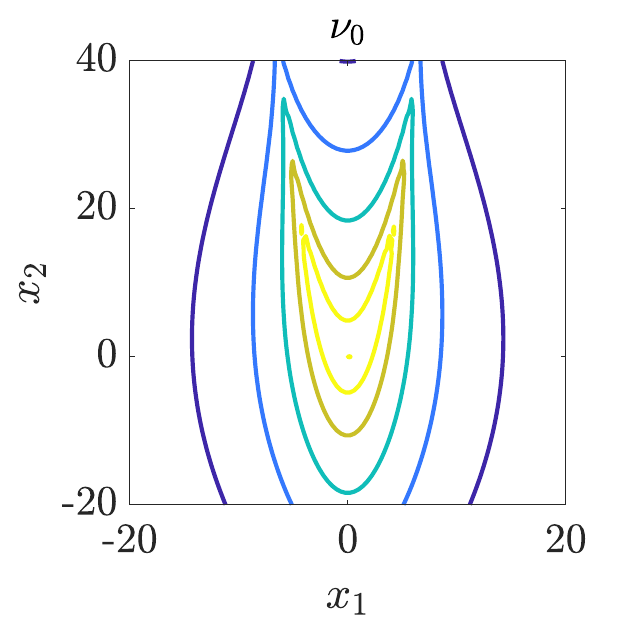}};

\node[anchor=west] (p1) at ($(p0.east)+(0.3,0)$) {\includegraphics[width=0.22\linewidth,height=0.22\linewidth,trim=4em 1.5em 1em 3em]{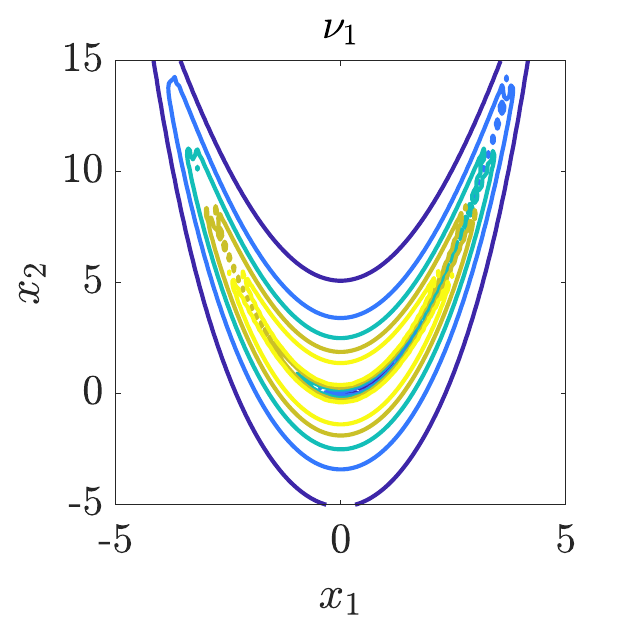}};

\node[anchor=north] (r1) at ($(p1.south)-(0,.6)$) {\includegraphics[width=0.22\linewidth,height=0.22\linewidth,trim=4em 1.5em 1em 4em]{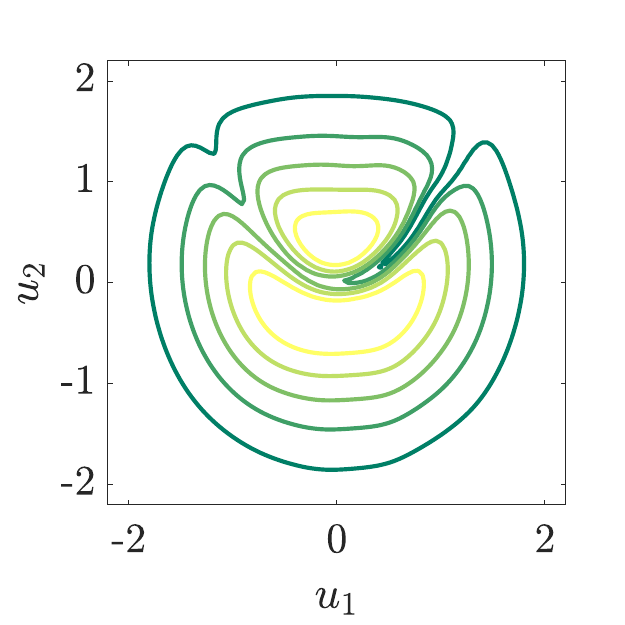}};

\node[anchor=west] (p2) at ($(p1.east)+(0.3,0)$) {\includegraphics[width=0.22\linewidth,height=0.22\linewidth,trim=4em 1.5em 1em 3em]{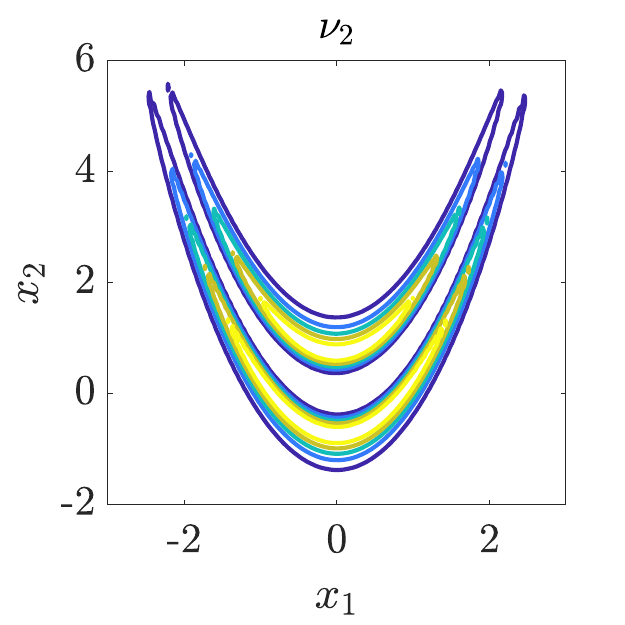}};
\node[anchor=north] (r2) at ($(p2.south)-(0,.6)$) {\includegraphics[width=0.22\linewidth,height=0.22\linewidth,trim=4em 1.5em 1em 4em]{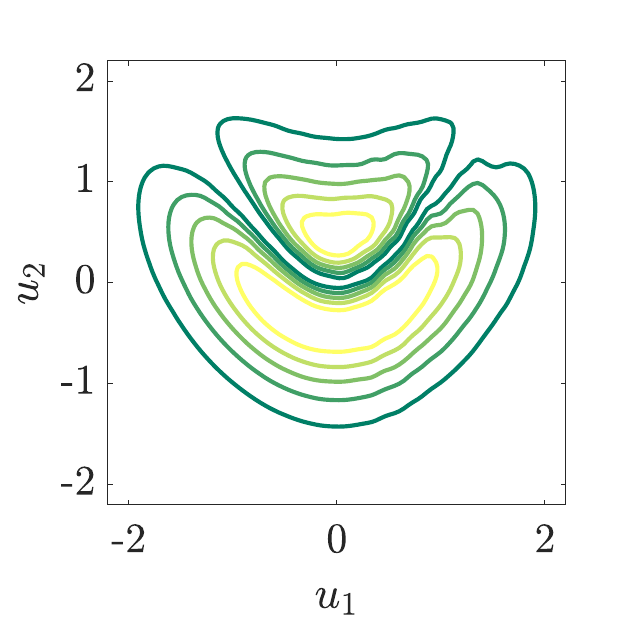}};

\node[anchor=west] (p3) at ($(p2.east)+(0.3,0)$) {\includegraphics[width=0.22\linewidth,height=0.22\linewidth,trim=4em 1.5em 1em 3em]{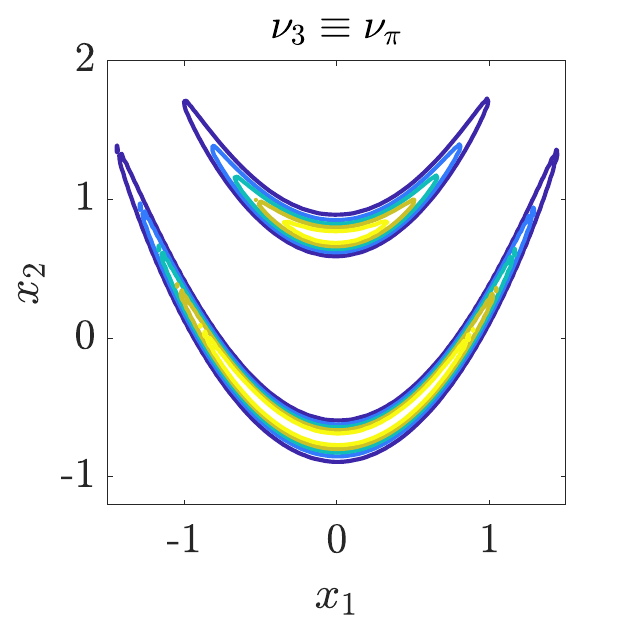}};
\node[anchor=north] (r3) at ($(p3.south)-(0,.6)$) {\includegraphics[width=0.22\linewidth,height=0.22\linewidth,trim=4em 1.5em 1em 4em]{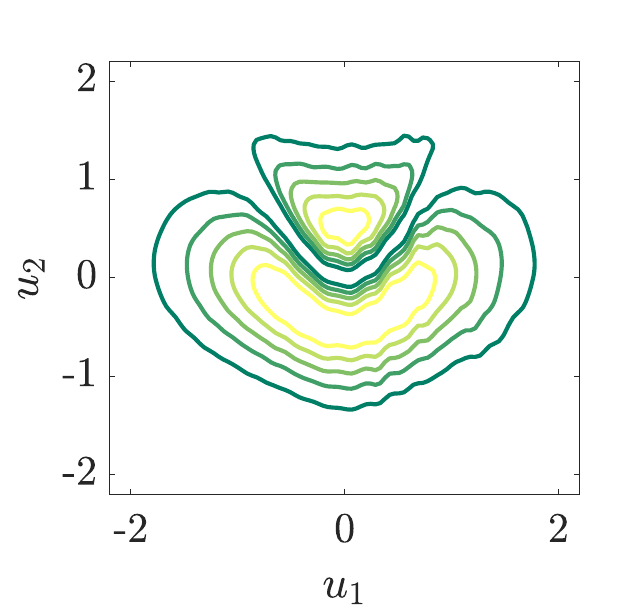}};

\node[anchor=north] (r) at ($(r1.south)+(2,-0.6)$) {\includegraphics[width=0.22\linewidth,height=0.22\linewidth,trim=4em 1.5em 1em 3em]{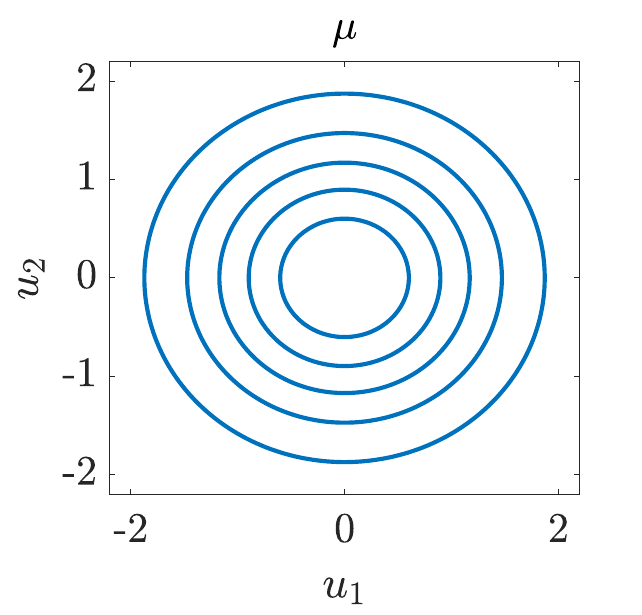}};

\draw[->,line width=1pt] (p1.south) -- (r1.north) node[midway,left] {\scriptsize$T_0^\sharp \nu_1$};
\draw[->,line width=1pt] (p2.south) -- (r2.north) node[midway,left] {\scriptsize$(T_0\circ T_1)^\sharp \nu_2$};
\draw[->,line width=1pt] (p3.south) -- (r3.north) node[midway,left] {\scriptsize$(T_0\circ T_1 \circ T_2)^\sharp \mtar$};

\draw[<->,line width=1pt] (p0.south) to[out=-90,in=180] (r.west) ;
\node[draw=white,fill=white] (n3) at ($(r.west)+(-3.5,2.5)$) {\scriptsize$(T_0)_\sharp \mref= \nu_0$};

\draw[<->,line width=1pt] (r1.south) to[out=-90,in=180]  (r.west) ;
\node[draw=white,fill=white] (n1) at ($(r1.south)+(-0.3,-0.45)$) {\scriptsize$(T_1)_\sharp \mref= T_0^\sharp \nu_1$};

\draw[<->,line width=1pt] (r2.south) to[out=-90,in=0] (r.east) ;
\node[draw=white,fill=white] (n2) at ($(r2.south)+(0.5,-0.45)$) {\scriptsize$(T_2)_\sharp \mref= (T_0\circ T_1)^\sharp \nu_2$};

\draw[<->,line width=1pt] (r3.south) to[out=-90,in=0] (r.east) ;
\node[draw=white,fill=white] (n3) at ($(r.east)+(2.3,0.7)$) {\scriptsize$(T_3)_\sharp \mref= (T_0\circ T_1 \circ T_2)^\sharp \mtar$};

\end{tikzpicture}
\caption{Illustration of DIRT. Top row shows a sequence of bridging measures towards the target measure $\mtar$. Each layer of DIRT identifies an incremental mapping that couples the reference measure (bottom row) with the pullback of the bridging measure under existing composition of mappings (middle row), which admits a simpler structure for constructing TT decomposition.}
\label{fig:dirt-cartoon}
\end{figure}

In Section \ref{sec:dirt}, we circumvent the second challenge by introducing a multi-layer \emph{deep inverse Rosenblatt transport} (DIRT) that builds a composition of SIRTs guided by a sequence of bridging measures with increasing complexity. We illustrate this idea in Figure \ref{fig:dirt-cartoon}.
At each layer of DIRT, we aim to obtain a composition of SIRTs, denoted by $T_0 \circ T_1 \circ \cdots \circ T_k$, such that the pushforward of the reference measure under this composition is a close approximation of the $k$-th bridging measure $\nu_k$. 
The existing composition $T_0 \circ T_1 \circ \cdots \circ T_k$ offers a nonlinear transformation of coordinates that can effectively capture the correlations and support of the next bridging measure $\nu_{k+1}$. 
As a result, the density of the pullback measure, $(T_0 \circ T_1 \circ \cdots \circ T_k)^\sharp \, \nu_{k+1}$, can have a much simpler structure for building the TT decomposition compared with the density of $\nu_{k+1}$. 
We can then factorise the density of $(T_0 \circ T_1 \circ \cdots \circ T_k)^\sharp \, \nu_{k+1}$ to define the incremental mapping $T_{k+1}$ such that $(T_{k+1})_\sharp \, \mref = (T_0 \circ T_1 \circ \cdots \circ T_k)^\sharp \, \nu_{k+1}$. 
This way, DIRT is capable of characterising random variables with concentrated density functions by factorising a sequence of less complicated density functions in transformed coordinates. 
To further improve the efficiency, we also present strategies that can embed \emph{general reference measures} rather than the uniform reference measure to avoid complicated boundary layers during DIRT construction.
Moreover, we can show that the DIRT construction is robust to TT approximation errors in various statistical divergences, in the sense that the error bounds on a range of divergences is a linear combination of errors of TT decompositions involved in the DIRT construction process.

In Section \ref{sec:sampling}, we integrate SIRT and DIRT into existing MCMC and importance sampling methods to further reduce the estimation and sampling bias due to TT approximation errors.
In Section \ref{sec:examples}, we demonstrate the efficiency and various aspects of DIRT on several Bayesian inverse problems governed by ordinary differential equations (ODEs) and partial differential equations (PDEs).
Using a predator-prey dynamical system (Section \ref{sec:pp}), we benchmark the impact of various tuning parameters of the functional TT decomposition such as the TT rank, the number of collocation points and the choice of the reference measure on the accuracy of the DIRT.
Using an elliptic PDE (Section \ref{sec:elliptic}), we are able to compare the single-layered SIRT with DIRT, in which DIRT shows a clear advantage in both the computational efficiency and the accuracy over the single-layered counterpart. In the same example, we also demonstrate the efficiency of TT with the Fourier basis compared to that with the piecewise-linear basis on concentrated posterior measures due to increasing number of measurements and decreasing measurement noises.
Furthermore, we can vary the discretisation of the underlying ODE or PDE models from layer to layer to accelerate the DIRT construction.
For an example involving a computationally expensive parabolic PDE (Section \ref{sec:heat}), we employ models with increasingly refined grids to construct DIRT that is otherwise computationally infeasible to build.

\subsection{Related work}

Apart from building transport maps using TT decompositions, most of other methods approximate the transport map $T$ by solving an optimisation problem such that $T$ minimises some statistical divergence between the target $\mtar$ and the pushforward $T_\sharp \, \mref$.
The mapping $T$ often has a triangular structure, which is computationally efficient for evaluating the Jacobian and the inverse of $T$, and can be represented using polynomials \cite{bigoni2019greedy,el2012bayesian,parno2018transport,peherstorfer2019transport}, kernel functions \cite{detommaso-SVN-2018,liu-stein-2016}, invertible neural networks~\cite{caterini2020variational,chen2019residualflows,pmlr-v119-cornish20a,Detommaso-HINT-2019,papamakarios2019normalizing,rezende2015variational}, etc.
In this setting, the objective function has to be approximated using a Monte Carlo average and minimised by some (stochastic) gradient-based method.
Depending on the objective function and how samples are obtained, the resulting methods may have very different structures. 

\begin{itemize}
\item \emph{Density approximation.} The work of \cite{bigoni2019greedy,el2012bayesian,peherstorfer2019transport} aims to minimise the Kullback--Leibler (KL) divergence of the pushforward $T_\sharp \, \mref$ from the target $\mtar$, in which the pushforward density naturally approximates the target density. 
In this case, the KL divergence is approximated using the Jacobian of $T$ and the target density function evaluated at samples drawn from the analytically tractable reference measure. 
The resulting optimisation problem may be highly nonlinear and non-convex. 
In each optimisation iteration, the target density function has to be re-evaluated as reference samples are transported by the updated map. 
Our TT-based methods also rely on approximations to the target density. However, TT approximations employ highly efficient deterministic sampling algorithms such as TT-Cross~\cite{oseledets2010tt}, which are free from either gradient or Monte Carlo. This way, TT-based methods may need less number of density evaluations to accurately approximate the target density.

\item \emph{Density estimation.} The strategy adopted by normalising flows (e.g.,~\cite{caterini2020variational,chen2019residualflows,pmlr-v119-cornish20a,Detommaso-HINT-2019,papamakarios2019normalizing,rezende2015variational}) and the work of \cite{parno2018transport,tabak2013family,trigila2016data} offer an alternative that can bypass evaluations of the target density. Instead, these methods assume availability of samples drawn from the target measure and construct objective functions using a given set of target samples. 
Many of these methods, particularly neural networks, were originally designed to approximate high-dimensional distributions of naturally available samples, such as images. 
However, in our context, the intractable target random variable $\bX$ cannot be simulated directly. One has to assume that there exists an auxiliary random variable $\bY$ such that the density function of $\bX$ is given by a conditional density $\pi(\bx|\by)$ and the pair of joint random variables $(\bX, \bY)$ can be simulated directly. 
This way, density estimations can be employed to first build a mapping from some higher dimensional reference measure to the joint measure of $(\bX, \bY)$, and then obtain the mapping $T$ by conditioning on a particular realisation of $\bY=\by$. We provide a concrete example of normalising flows and its comparisons with DIRT in Section \ref{sec:pp}.

\item \emph{Greedy methods.} In-between the fully data driven density estimation and the function driven density approximation is the greedy strategy, including the Stein variational gradient descent method \cite{liu-stein-2016}, its Newton variant \cite{detommaso-SVN-2018}, and the lazy maps \cite{bigoni2019greedy}. 
While greedy methods build transport maps sharing a similar composition structure with DIRT, they obtain the composition of mappings by iteratively minimising the KL divergence of the pushforward measure under the current composition of maps from the target $\mtar$. 
To relax the burden in optimisation, the class of mappings used in each layer of greedy methods is often restricted, for example, to reproducing kernel Hilbert space with Gaussian kernels \cite{detommaso-SVN-2018,liu-stein-2016} and sparse low-order polynomials  \cite{bigoni2019greedy}. As a result, greedy methods often need a rather large number of layers to accurately approximate concentrated target densities, and hence may lead to a large number of computationally costly target density evaluations.
Compared to the greedy strategy, the usage of bridging measures allows DIRT to construct TT decompositions in different layers with arbitrary accuracy.
The total error in DIRT is also accumulated linearly with the number of layers. 
We provide a numerical comparison on the performance of DIRT and the Stein variational Newton methods \cite{detommaso-SVN-2018} in Section \ref{sec:pp}.
\end{itemize}

%!TEX root = ./sequential_TT_R.tex

\section{Background}\label{sec:background}
In this section, we first introduce some notation and assumptions used throughout the paper. Then, we review the inverse Rosenblatt transport method that offers an algebraically exact transformation from the reference measure to the target measure. We will also discuss the role of the functional TT decomposition in the numerical construction of the (approximate) inverse Rosenblatt transport.

\subsection{Notation}\label{sec:notation}
We consider probability measures that are absolutely continuous with respect to the Lebesgue measure.
Suppose a mapping $S: \mathcal{X} \mapsto \mathcal{U}$ is a diffeomorphism and a probability measure $\nu$ has a density $p(\bx)$, the pushforward of $\nu$ under $S$, denoted by $S_\sharp\nu$, has the density:
\begin{equation}
S_\sharp p(\bu) = 
\big(p \circ S^{-1}\big)(\bu) \, \big| \nabla_{\bu} S^{-1}(\bu)\big|.
\end{equation}
Similarly, given a probability measure $\lambda$ with a density $q(\bu)$, the pullback of $\lambda$ under $S$, denoted by $S^\sharp \lambda$, has the density:
\begin{equation}
S^\sharp q(\bx) = %\big| \nabla_{\bx} T(\bx)\big|\,p\big(T(\bx)\big) 
\big(q \circ S \big)(\bx) \, \big| \nabla_{\bx} S(\bx)\big|.
\end{equation}
The short hand $\bX \sim \nu$ is used to refer a random variable $\bX$ with the law $\nu$. For a $\nu$-integrable function $q: \mathcal{X}\mapsto\mathbb{R}$, the expectation of $q$ is denoted by $\nu(q) = \int q(\bx) \nu(d\bx) $. 

We assume the parameter space $\mathcal{X}$ and the reference space $\mathcal{U}$ can be expressed as Cartesian products $\mathcal{X} = \mathcal{X}_1 \times \mathcal{X}_2 \times \cdots \times \mathcal{X}_d$  and $\mathcal{U} = \mathcal{U}_1 \times \mathcal{U}_2 \times \cdots \times \mathcal{U}_d$ respectively, where $\mathcal{X}_k \subseteq \mathbb{R}$ and $\mathcal{U}_k = [0, 1]$. 
Using product-form Lebesgue measurable weighting functions $\lambda(\bx) = \prod_{i = 1}^{d} \lambda_i(\bx_i) $ and $\omega(\bu)= \prod_{i = 1}^{d} \omega_i(\bx_i)$, the weighted $L^p$ norms on $\mathcal{X}$ and $\mathcal{U}$ can be expressed as
\[
\lpnormx{f}{p} = \lpintx{f(\bx)}{p} \quad \textrm{and} \quad \lpnormu{g}{p} = \lpintu{g(\bu)}{p},
\]
respectively. 
We define constants $\lambda_i(\mathcal{X}_i) = \int_{\mathcal{X}_i} \lambda_i(\bx_i) d\bx_i$ for $i = 1, \ldots, d$ and $\lambda(\mathcal{X}) = \prod_{i = 1}^d \lambda_i(\mathcal{X}_i)$. Likewise, we also define $\omega_i(\mathcal{U}_i)$ for $i = 1, \ldots, d$ and $\omega(\mathcal{U})$.

For a vector $\bx \in \R^d$ and an index $k \in \N$ such that $1 < k < d$, we express the first $k-1$ coordinates and the last $d-k$ coordinates of $\bx$ as
\begin{equation*}
\bx_{<k} \equiv [\bx_1, \ldots, \bx_{k-1}]^\top, \quad \textrm{and} \quad \bx_{>k} \equiv [\bx_{k+1}, \ldots, \bx_{d}]^\top,
\end{equation*}
respectively. Similarly, we write $\bx_{\leq k} = (\bx_{<k}, \bx_k)$, $\bx_{\geq k} = (\bx_{k}, \bx_{>k})$, $\bx_{\leq 1} = \bx_1$, and $\bx_{\geq d} = \bx_{d}$.
For any non-negative function $\ptar \in L^1_{\lambda(\mathcal{X})}$, we define its marginal functions as
\begin{equation}
\ptar_{\leq k}(\bx_{\leq k}) \equiv \int \ptar (\bx_{\leq k}, \bx_{>k}) \, \Big({\textstyle\prod_{i = k+1}^{d} \lambda_i(\bx_i)}\Big)\,d\bx_{>k}, \quad \textrm{for} \quad 1 \leq k < d,
\end{equation}
with $\ptar_{\leq d}(\bx_{\leq d}) = \ptar(\bx)$. The marginal functions should not be confounded with $\ptar_1(\bx)$, $\ptar_2(\bx)$, $\ldots$, $\ptar_k(\bx)$ where the subscript indexes a sequence of functions on $\mathcal{X}$.

\subsection{Inverse Rosenblatt transport} 

We start with a $d$-dimensional uniform reference probability measure, $\muni$, defined in a unit hypercube $\mathcal{U} = [0,1]^d$, which has the probability density function (PDF) $\puni(\bu) = 1$. 
We aim to characterise a target probability measure $\mtar$ with the PDF
\begin{equation}
f_{\bX}(\bx) = \frac1z\, \ptar(\bx)\,\lambda(\bx) \quad \textrm{and}\quad z = \int_{\mathcal{X}} \ptar(\bx) \lambda(\bx) d\bx.
\end{equation}
Here, $\ptar \in \lpx{1}$ is the unnormalised density function (with respect to  the weight $\lambda$) that is non-negative, i.e., $\ptar(\bx) \geq 0, \forall \bx \in \mathcal{X}$, and $z$ is the normalising constant that is often unknown.

Let $\bX:=(\bX_1,\ldots,\bX_d)$ be the target $d$-dimensional random variable with law $\mtar$ and $\bU$ be the reference $d$-dimensional random variable with law $\muni$. The {\it Rosenblatt transport} offers a viable way to constructing a map $F: \R^d \mapsto \R^d$ such that $F(\bX) = \bU$.
As explained in \cite{carlier2010knothe,spantini2018inference,villani2008optimal}, the principle of the Rosenblatt transport is the following. 
For $1 \leq k \leq d$, we denote the marginal PDF of the $k$-dimensional random variable $\bX_{\leq k}:=(\bX_1,\ldots,\bX_k)$ by
\[
f_{\bX_{\leq k}} ( \bx_{\leq k} )= \frac1z\,\ptar_{\leq k}(\bx_{\leq k}) \, \Big(\textstyle\prod_{i = 1}^{k} \lambda_i(\bx_i)\Big)
\] 
and the PDF of the conditional random variable $\bX_{k} | \bX_{< k}$ by
\begin{equation*}
f_{\bX_{k} | \bX_{< k}}( \bx_k | \bx_{<k}) = \frac{f_{\bX_{\le k}}( \bx_{<k}, \bx_k)}{f_{\bX_{< k}}( \bx_{<k})} = \frac{\ptar_{\le k}( \bx_{<k}, \bx_k)}{\ptar_{< k}( \bx_{<k})}\, \lambda_k(\bx_k).
\end{equation*}
This way, the CDF of $\bX_{1}$ and the conditional CDF of $\bX_{k} | \bX_{< k}$ can be expressed as
\begin{equation}\label{eq:marginal_CDF}
\!\!F_{\bX_1}(\bx_1 ) = \!\! \int_{-\infty}^{\bx_1} f_{\bX_{1}}( \bx_1)\,d\bx_{1} \;\; \text{and} \;\; F_{\bX_{k} | \bX_{< k}}(\bx_k | \bx_{<k}) = \!\!\int_{-\infty}^{\bx_k} f_{\bX_{k} | \bX_{< k}}( \bx_k | \bx_{<k}) \, d\bx_{k},\!
\end{equation}
respectively. 
Under mild assumptions \cite{carlier2010knothe}, the following sequence of transformations 
\begin{equation}\label{eq:forward_trans}
\left \{
\begin{array}{ll}
\bu_1 \;\;= \mathbb{P}[\bX_1 \leq \bx_1] & = F_{\bX_{1}}(\bx_1 ) \vspace{-6pt} \\ 
 \quad\;\;\;\;\,\vdots &  \vspace{-2pt}\\
\bu_k \;\;= \mathbb{P}[\bX_k \leq \bx_k | \bX_{<k} = \bx_{<k}] & = F_{\bX_{k} | \bX_{< k}}(\bx_k | \bx_{<k})  \vspace{-6pt}\\ 
 \quad\;\;\;\;\,\vdots & \vspace{-2pt}\\
\bu_d \;\;= \mathbb{P}[\bX_d \leq \bx_d | \bX_{<d} = \bx_{<d}] & = F_{\bX_{d} | \bX_{< d}}(\bx_d | \bx_{<d})
\end{array}
\right.
\end{equation}
defines uniquely a monotonically increasing map $F: \mathcal{X} \mapsto \mathcal{U}$ in the form of
\begin{equation}
F(\bx) = \left[F_{\bX_1}(\bx_1), \cdots, F_{\bX_{k} | \bX_{< k}}(\bx_k | \bx_{< k}), \cdots, F_{\bX_{d} | \bX_{< d}}(\bx_d | \bx_{<d})\right]^\top,
\label{eq:Rosenblatt}
\end{equation}
such that the random variable $\bU = F(\bX)$ is uniformly distributed in the unit hypercube $[0,1]^d$.
Since the $k$-th component of $F$ is a scalar-valued function depending on only the first $k$ variables, that is, $F_{\bX_{k} | \bX_{< k}}: \R^{k} \mapsto \R$, the map $F$ has a {\it lower-triangular} form.
Furthermore, the map $F$ (as well as its inverse) is almost surely differentiable and satisfies
\[
F^\sharp  \puni(\bx) = \big(\puni \circ F \big)(\bx)\,\big| \nabla_{\bx} F(\bx) \big|  = \big| \nabla_{\bx} F(\bx) \big| = f_{\bX}(\bx)
\]
$\mtar$-almost surely.

Suppose one can compute the Rosenblatt transport. Then, it provides a viable way to characterising the target measure. One can first generate uniform random variables $\bU \sim \muni$ and then applying the {\it inverse Rosenblatt transport} (IRT)
\[
\bX = F^{-1} \big(\bU \big)
\]
to obtain a corresponding target random variable $\bX \sim \mtar$. The inverse Rosenblatt transport $T \equiv F^{-1}: \mathcal{U} \mapsto \mathcal{X}$ is also {\it lower-triangular} and can be constructed by successively inverting the Rosenblatt transport for $k = 1, \ldots, d$:
\begin{equation}\label{eq:inverse_trans}
\bx = T(\bu) \equiv 
\left[ F_{\bX_1}^{-1}(\bu_1 ), \ldots, F_{\bX_{k} | \bX_{< k}}^{-1}(\bu_k | \bx_{<k}), \ldots, F_{\bX_{d} | \bX_{< d}}^{-1}(\bu_d | \bx_{<d}) \right]^\top.
\end{equation}
The evaluation of each $F_{\bX_{k} | \bX_{< k}}^{-1}(\bu_k | \bx_{<k})$ requires inverting only a scalar valued monotone function $\bu_k = F_{\bX_{k} | \bX_{< k}}(\bx_k | \bx_{<k})$, where $\bx_{<k}$ is already determined in the first $k-1$ steps.
Using the change-of-variables formula, the expectation of a  function $h : \mathcal{X} \mapsto \R$ can be expressed as 
\[
\mtar(h) = \muni(h\circ T).
\]
This way, the expectation over the intractable target probability measure can be expressed as the expectation over a reference uniform probability measure, and thus many efficient high-dimensional quadrature methods such as sparse grids~\cite{griebel-sparsegrids-2004} and quasi Monte Carlo~\cite{Kuo-QMC-2013} may apply.

\subsection{Functional tensor-train} \label{sec:ftt}
For high-dimensional target measures, it may be not computationally feasible to compute the marginal densities $\ptar_{\le k}$, and hence the marginal and the conditional CDFs in \eqref{eq:marginal_CDF} for building the inverse Rosenblatt transport.
To overcome this challenge, a recent work \cite{dafs-tt-bayes-2019} employed the TT decomposition~\cite{oseledets2011tensor} to factorise the density of the target measure in a separable form, which leads to a computationally scalable method for building the inverse Rosenblatt transport.
Here we first discuss the basics of the TT decomposition of a multivariate function.

\newcommand{\ttcorej}[3]{ {#3}^{(\alpha_{#1-1}, \alpha_{#1})}_{#1} (#2_{#1}) }
\newcommand{\ttcorejp}[3]{ {#3}^{(\alpha_{#1}, \alpha_{#1+1})}_{#1+1} (#2_{#1+1}) }
\newcommand{\ttcorejm}[3]{ {#3}^{(\alpha_{#1-2}, \alpha_{#1-1})}_{#1-1} (#2_{#1-1}) }
\newcommand{\ttcorez}[2]{ {#2}^{(\alpha_0, \alpha_1)}_{1} (#1_1) }

\newcommand{\ttcorejpr}[3]{ {#3}^{(\beta_{#1-1}, \beta_{#1})}_{#1} (#2_{#1}) }
\newcommand{\ttcorejppr}[3]{ {#3}^{(\beta_{#1}, \beta_{#1+1})}_{#1+1} (#2_{#1+1}) }
\newcommand{\ttcorezpr}[2]{ {#2}^{(\beta_0, \beta_1)}_{1} (#1_1) }

\newcommand{\tteigjp}[1]{ \lambda_{\alpha_{#1+1}} }
\newcommand{\tteigj}[1]{ \lambda_{ \alpha_{#1}} }
\newcommand{\tteigz}{ \lambda_{\alpha_1} }

Since multivariate functions can be viewed as continuous analogues of tensors \cite{hackbusch2012tensor}, one can factorise the unnormalised density function using functional-form of TT \cite{bigoni2016spectral,gorodetsky2019continuous,griebel2019analysis}.
Given a multivariate function $h : \mathcal{X} \mapsto \R$, where $\mathcal{X} = \mathcal{X}_1 \times \mathcal{X}_2 \times \ldots \times \mathcal{X}_d$, TT approximates $h(\bx)$ as
\begin{equation}
h(\bx) \approx \tilde{h}(\bx) \equiv
\sum_{\alpha_0=1}^{r_0}\sum_{\alpha_1 = 1}^{r_1} \cdots \sum_{\alpha_d = 1}^{r_d} \ttcorez{\bx}{\mH} \cdots \ttcorej{k}{\bx}{\mH} \cdots \ttcorej{d}{\bx}{\mH} , %\label{eq:FTT_r},
\end{equation}
with $r_0 = r_d = 1$, where the summation ranges $r_0, r_1, \ldots, r_d$ are called \emph{TT ranks}.
Each univariate function $\ttcorej{k}{\bx}{\mH}: \mathcal{X}_k \mapsto \R$ is represented as a linear combination of a set of
$n_k$ basis functions $\{\phi_{k}^{(1)}(\bx_k), \ldots,\phi_{k}^{(n_k)}(\bx_k)\}$. This way, we have
\begin{equation}
\ttcorej{k}{\bx}{\mH} = \sum_{i =1}^{n_k}\phi_{k}^{(i)}(\bx_k) \,\tA_k [\alpha_{k-1}, i, \alpha_k], %\label{eq:FTT_basis},
\end{equation}
where $\tA_k \in \R^{r_{k-1}\times n_k \times r_k}$ is a coefficient tensor.
Examples of the basis functions include piecewise polynomials, orthogonal functions, radial basis functions, etc.
In general, the TT decomposition $\tilde{h}(\bx)$ is only an approximation to the original function $h(\bx)$ because of truncated TT ranks and sets of basis functions used for representing each $\ttcorej{k}{\bx}{\mH}$.

\begin{remark}
For each $k$, grouping all the univariate functions $\ttcorej{k}{\bx}{\mH}$, we have a matrix valued function $\mH_{k}(\bx_k) : \mathcal{X}_k \mapsto \R^{r_{k-1} \times r_k}$ that is commonly referred to as the {\it $k$-th TT core}.
This way, the TT decomposition can also be expressed in the matrix form
\begin{equation}
\tilde{h}(\bx) = \mH_{1}(\bx_1) \cdots \mH_{k}(\bx_k) \cdots \mH_{d}(\bx_d).
\end{equation}
We follow the MATLAB notation to denote vector-valued functions consisting of the $\alpha_k$-th column and $\alpha_{k-1}$-th row of $\mH_{k}(\bx_k)$ by $\mH_{k}^{(\,:\,, \alpha_k)}(\bx_k) : \mathcal{X}_k \mapsto \R^{r_{k-1} \times 1}$ and $\mH_{k}^{(\alpha_{k-1}, \,:\,)}(\bx_k) : \mathcal{X}_k \mapsto \R^{1 \times r_{k}}$, respectively.
In some situations, it is convenient to represent the TT decomposition with grouped coordinates. For example, we can write  TT as the functional analogue of the compact singular value decomposition (SVD):
\begin{equation}
\tilde{h}(\bx) = \sum_{\alpha_k=1}^{r_k} \mH_{\leq k}^{(\alpha_k)}(\bx_{\leq k}) \, \mH_{> k}^{(\alpha_k)}(\bx_{>k}),
\end{equation}
where
\begin{align}
\mH_{\leq k}^{(\alpha_k)}(\bx_{\leq k})  & =  \mH_{1}(\bx_1) \cdots \mH_{k-1}(\bx_{k-1})\,\mH_{k}^{(\,:\,,\alpha_k )}(\bx_k) : \mathcal{X}_1 \times \cdots \times \mathcal{X}_k \mapsto \R, \label{eq:ftt_groupL} \\
\mH_{> k}^{(\alpha_k)}(\bx_{>k}) & = \mH_{k+1}^{(\alpha_k,\,:\,)}(\bx_{k+1})\,\mH_{k+2}(\bx_{k+2}) \cdots \mH_{d}(\bx_{d}) : \mathcal{X}_{k+1} \times \cdots \times \mathcal{X}_d \mapsto \R.\label{eq:ftt_groupR}
\end{align}
\end{remark}

Given a multivariate function, its TT decomposition can be computed using alternating linear schemes such as the classical alternating least squares method (e.g.,~\cite{kolda2009tensor,oseledets2010tt}), density matrix renormalization group methods \cite{holtz2012alternating,oseledets2011tensor,white1993density}, and the alternating minimal energy method \cite{dolgov2014alternating} together with the cross approximation \cite{goreinov2010find,goreinov1997theory,goreinov1997pseudo,mahoney2009cur} or the empirical interpolation \cite{barrault2004empirical,chaturantabut2010nonlinear}.
In Appendix \ref{sec:tt_cross}, we detail the cross algorithm used for constructing the functional TT decomposition.

\subsection{A TT-based inverse Rosenblatt transport} \label{sec:ttirt}
Using the functional TT previously discussed, now we review the TT-based construction of the inverse Rosenblatt transport \cite{dafs-tt-bayes-2019}.
Suppose one has the (approximate) TT decomposition $\tilde{\pi}(\bx)$ of the unnormalised target density $\pi(\bx)$ in the form of 
\[
\tilde{\pi}(\bx_1, \bx_2, \ldots, \bx_d) =
\mF_{1}(\bx_1) \cdots \mF_{k}(\bx_k) \cdots \mF_{d}(\bx_d),
\]
where $\mF_{k}(\bx_k) : \mathcal{X}_k \mapsto \R^{r_{k-1} \times r_k}$ is the $k$-th TT core. 
Then, we can approximate the target PDF by
\begin{equation}
\tilde{f}_{\bX}(\bx) = \frac{1}{\tilde c}\, \tilde \pi(\bx) \, \lambda(\bx), \quad \textrm{where}\quad {\tilde c} = \int_{\mathcal{X}} \tilde \pi(\bx) \lambda(\bx) d\bx.
\end{equation}
\begin{proposition}
For $k < d$, the $k$-th marginal PDF is given by
\[
\tilde{f}_{\bX_{\leq k}}(\bx_{\leq k}) = \frac{1}{\tilde c} \, \tilde{\pi}_{\leq k}(\bx_{\leq k}) \, \Big({\textstyle\prod_{i = 1}^{k}} \lambda_i(\bx_i)\Big),
\]
where $\tilde{\ptar}_{\leq k}(\bx_{\leq k}) = \mF_{1}(\bx_{1}) \cdots \mF_{k}(\bx_{k}) \bar{\mF}_{k+1} \cdots \bar{\mF}_{d}$, ${\tilde c} = \bar{\mF}_{1} \cdots \bar{\mF}_{d}$,
and the matrices $\bar{\mF}_{k}$ are
the integrated TT cores
\[
\bar{\mF}_{k} = \int_{\mathcal{X}_k} {\mF}_{k} (\bx_k) \, \lambda_{k}(\bx_{k})\, d\bx_{k} \in \R^{r_{k-1} \times r_k}, \quad \textrm{for} \quad k = 1, \ldots, d.
\] 
\end{proposition}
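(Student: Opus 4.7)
\bigskip

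\noindent\textbf{Proof plan.} The plan is to integrate the TT expression for $\tilde{\pi}$ directly against the tail variables and exploit the fact that each core $\mathcal{F}_j(\bx_j)$ depends only on its own coordinate, so that Fubini's theorem commutes the integration with the matrix products.

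First I would start from the definition of the marginal PDF with respect to the Lebesgue measure,
\[
\tilde{f}_{\bX_{\leq k}}(\bx_{\leq k}) = \int_{\mathcal{X}_{k+1}\times\cdots\times\mathcal{X}_d} \tilde{f}_{\bX}(\bx_{\leq k}, \bx_{>k})\, d\bx_{>k},
\]
substitute $\tilde{f}_{\bX}(\bx) = \tilde{c}^{-1}\,\tilde{\pi}(\bx)\,\lambda(\bx)$, and use the product form $\lambda(\bx) = \prod_{i=1}^{d} \lambda_i(\bx_i)$ to pull the factor $\prod_{i=1}^{k} \lambda_i(\bx_i)$ outside the integral over $\bx_{>k}$. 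What remains inside the integral is precisely the marginal function $\tilde{\pi}_{\leq k}(\bx_{\leq k})$ as defined in Section~\ref{sec:notation}.

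Next I would evaluate this marginal function by inserting the TT representation
$\tilde{\pi}(\bx) = \mathcal{F}_{1}(\bx_1)\cdots \mathcal{F}_{d}(\bx_d)$.
The prefix $\mathcal{F}_{1}(\bx_1)\cdots \mathcal{F}_{k}(\bx_k)$ is independent of $\bx_{>k}$ and hence passes outside the integral, leaving
\[
\tilde{\pi}_{\leq k}(\bx_{\leq k}) = \mathcal{F}_{1}(\bx_1)\cdots \mathcal{F}_{k}(\bx_k) \int \mathcal{F}_{k+1}(\bx_{k+1}) \cdots \mathcal{F}_{d}(\bx_{d}) \prod_{i=k+1}^{d} \lambda_i(\bx_i)\,d\bx_{>k}.
\]
Because each core $\mathcal{F}_{j}(\bx_j)$ is a matrix-valued function of the single variable $\bx_j$ and the weight is separable, Fubini's theorem applied entrywise allows the $d-k$ integrals to be interchanged with the matrix multiplications, yielding $\bar{\mathcal{F}}_{k+1}\,\bar{\mathcal{F}}_{k+2}\cdots\bar{\mathcal{F}}_{d}$ with $\bar{\mathcal{F}}_{j}$ as defined in the statement.

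The formula for the normalising constant $\tilde{c}$ follows from the same argument with $k=0$: integrating $\tilde{\pi}(\bx)\lambda(\bx)$ over all of $\mathcal{X}$ collapses every core to its integrated version, giving $\tilde{c} = \bar{\mathcal{F}}_{1}\cdots\bar{\mathcal{F}}_{d}$. Combining these two identities and reassembling the weight yields the claimed expression for $\tilde{f}_{\bX_{\leq k}}$. The only technical point requiring care is the entrywise application of Fubini to the product of matrix-valued functions, which is routine because each entry of the product $\mathcal{F}_{k+1}(\bx_{k+1})\cdots\mathcal{F}_{d}(\bx_{d})$ is a finite sum of products of scalar univariate functions and the weight $\prod_{i>k}\lambda_i(\bx_i)$ is nonnegative; integrability of $\tilde{\pi}$ with respect to $\lambda$ ensures the hypotheses of Fubini are met.
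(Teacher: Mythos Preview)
Your proposal is correct and follows essentially the same approach as the paper: both integrate the TT product against the tail weights, pull the prefix $\mathcal{F}_1(\bx_1)\cdots\mathcal{F}_k(\bx_k)$ outside, and use separability to factor the remaining integral into the product $\bar{\mathcal{F}}_{k+1}\cdots\bar{\mathcal{F}}_d$, with $\tilde c$ obtained by the same calculation with $k=0$. Your explicit mention of the entrywise Fubini argument is slightly more detailed than the paper's terse ``using the separable form of the tensor-train,'' but the underlying idea is identical.
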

\begin{proof}
The marginal function of $\tilde{\pi}(\bx)$ can be expressed by
\begin{align*}
\tilde{\ptar}_{\leq k}(\bx_{\leq k}) 
= \int_{\mathcal{X}_{>k}} \mF_{1}(\bx_{1}) \cdots \mF_{d}(\bx_{d}) \,\left(\textstyle{\prod_{i = k+1}^{d}} \lambda_i(\bx_i)\right)\,d\bx_{>k} .
\end{align*}
Using the separable form of the tensor--train, the marginal density then has the form 
\begin{align*}
& \hspace{-6pt} \tilde{\ptar}_{\leq k}(\bx_{\leq k}) \\
& = \mF_{1}(\bx_{1}) \cdots \mF_{k}(\bx_{k}) \bigg(\int_{\mathcal{X}_{k+1}} \!\!\!\!\!\!\mF_{k+1}(\bx_{k+1}) \lambda_{k+1}(\bx_{k+1})d\bx_{k+1}\bigg) \cdots  \left( \int_{\mathcal{X}_d} \!\!\!\! \mF_{d}(\bx_{d}) \lambda_{d}(\bx_d)d\bx_{d} \right) \\
& = \mF_{1}(\bx_{1}) \cdots \mF_{k}(\bx_{k}) \bar{\mF}_{k+1} \cdots \bar{\mF}_{d}.
\end{align*}
Since ${\tilde c} = \int_{\mathcal{X}} \tilde{f}(\bx) \lambda(\bx)d\bx$, we have ${\tilde c} = \bar{\mF}_{1} \cdots \bar{\mF}_{d}$ using a similar argument.
\end{proof}

The above proposition leads to the marginal PDF $\tilde{f}_{\bX_1} = \frac1{\tilde c}\, \tilde{\pi}_{\leq 1}(\bx_{1})\,\lambda_1(\bx_1)$ 
and the sequence of conditional probability densities
\begin{equation}
\tilde{f}_{\bX_k | \bX_{< k}}(\bx_k | \bx_{<k}) = \frac{\tilde{\ptar}_{\le k}( \bx_{<k}, \bx_k)}{\tilde{\ptar}_{< k}( \bx_{<k})}\,\lambda_k(\bx_k), \quad k=2, \ldots, d.
\end{equation}
This leads to the CDF and the sequence of conditional CDFs
\begin{equation}
\tilde{F}_{\bX_1}(\bx_1 ) = \!\! \int_{-\infty}^{\bx_1} \tilde{f}_{\bX_1}\,d\bx_1^\prime \;\; \text{and}\;\;
\tilde{F}_{\bX_k | \bX_{<k} }(\bx_k | \bx_{<k}) = \!\! \int_{-\infty}^{\bx_k}  \tilde{f}_{\bX_k | \bX_{< k}}(\bx_k | \bx_{<k})\, d\bx_k^\prime, 
\end{equation}
for $k=2, \ldots, d$, and hence the Rosenblatt transport $\bU = \tilde{F}(\bX)$. 
This equivalently defines the inverse Rosenblatt transport $\tilde{T} = \tilde{F}^{-1}$. 
This way, by drawing a reference random variable $\bU \sim \muni$ and evaluating $\bX = \tilde{T}(\bU)$, we obtain an approximate target random variable $\bX \sim \tilde{T}_\sharp \muni$. Note that the pushforward measure $\tilde{T}_\sharp \muni$ has the density $\tilde{f}_{\bX}(\bx)$.

To estimate the numerical complexity, let us introduce the maximal number of basis functions $n=\max_{k=1,\ldots,d} n_k$, TT rank $r = \max_{k=0,\ldots,d} r_k$, and suppose we need to draw $N$ samples from $\tilde \pi(\bx)$.
Note that we can precompute $\bar{\mF}_{k+1} \cdots \bar{\mF}_{d}$ with the total cost of $\mathcal{O}(dnr^2)$ operations, before any sampling starts.
Similarly, the conditioning requires the interpolation of $\mF_1(\bx_1) \cdots \mF_{k-1}(\bx_{k-1})$ at the current sample coordinates, which can be built up sequentially.
Each univariate interpolation needs $\mathcal{O}(nr^2)$ operations in general, but for a piecewise interpolation this can be reduced to $\mathcal{O}(r^2)$ operations per sample per coordinate.
Finally, the assembling of the conditional density requires the multiplication of $N$ vectors $\mF_1(\bx_1) \cdots \mF_{k-1}(\bx_{k-1}) \in \R^{r_{k-1}}$ with a vector-valued function $\mF_k(\bx_k)\bar{\mF}_{k+1} \cdots \bar{\mF}_{d} \in \R^{r_{k-1}}$.
The total complexity is therefore $\mathcal{O}(dnr^2 + Ndr^2 + Ndnr)$~\cite{dafs-tt-bayes-2019}.

Constructing the inverse Rosenblatt transport using the TT decomposition of the target density faces several challenges. First, the density function $\pi(\bx)$ is non-negative, however, its truncated TT decomposition  $\tilde{\pi}(\bx)$ can have negative values---a discrete analogue is that the truncated SVD of a matrix filled with non-negative entries can be negative.
The leads to a critical issue: if the set $\{ \bx \in \mathcal{X} \,|\, \tilde{\pi}(\bx) < 0\}$ has nonzero measure under $\mtar$, then the Rosenblatt transport constructed from $\tilde{\pi}(\bx)$ loses monotonicity.
A simple way to circumvent this is to take the modulus of each univariate conditional density $\tilde{f}_{\bX_k|\bX_{<k}}(\bx_k | \bx_{<k})$ and then renormalise the modulus before computing the CDF~\cite{dafs-tt-bayes-2019}.
However, the use of moduli and renormalisations may degrade the smoothness of marginal PDFs and conditional PDFs.
This way, the resulting inverse Rosenblatt transport and its induced PDF can lose accuracy and smoothness.
More importantly, the construction of the TT decomposition (see Section~\ref{sec:tt_cross} for details) requires evaluating the target density at parameter points where the target density is significant.
In practice, the high probability region of a high-dimensional target density, e.g., the posterior in the Bayesian inference context, can be hard to characterise.
Thus, it can be challenging to construct the TT decomposition for approximating the target density directly.
In the next section, we generalise the TT-based construction of the inverse Rosenblatt transport by tackling the aforementioned challenges.

%!TEX root = ./sequential_TT_R.tex

\section{Squared inverse Rosenblatt transport}\label{sec:sirt}
We first introduce the SIRT to overcome the negativity issue outlined above. 
Instead of directly decomposing the unnormalised target density $\ptar(\bx)$, we first obtain the (approximate) functional TT decomposition $\tilde{g}(\bx)$ of the square root of $\ptar(\bx)$ in the form of 
\begin{equation}\label{eq:ftt_sqrt}
\sqrt{\ptar}(\bx) \approx \tilde{g}(\bx) =
\mG_{1}(\bx_1) \cdots \mG_{k}(\bx_k) \cdots \mG_{d}(\bx_d),
\end{equation}
where $\mG_{k}(\bx_k) : \mathcal{X}_k \mapsto \R^{r_{k-1} \times r_k}$ is the $k$-th TT core. 
This leads to an alternative approximation to the target PDF:
\begin{equation}\label{eq:pdf_sirt}
\hat{f}_{\hat{\bX}}(\bx)  = \frac{1}{\hat z} \, \hat{\pi}(\bx)  \,\lambda(\bx), \;\; \textrm{with}\;\;  \hat \pi(\bx) = \gamma  + \tilde g(\bx)^2\;\; \textrm{and}\;\; \hat z = \gamma \,\lambda(\mathcal{X}) + \int_{\mathcal{X}} \tilde g(\bx)^2 \,\lambda(\bx)\,d\bx,
\end{equation}
where $\gamma > 0$ is a constant chosen according to the $L^2$ error of $\tilde g(\bx)$.
Similar to the process discussed in Section \ref{sec:ttirt}, we can obtain the SIRT, $\hat{\bX} = \hat{T}(\bU)$, by constructing the sequence of marginal functions $\hat{\pi}_{ \leq k}(\bx_{\leq k}) = \int_{\mathcal{X}} \hat{\pi}(\bx) \prod_{i=k+1}^{d}\lambda_i(\bx_i) d\bx_{> k}$ for  $k=1, \ldots, d-1$ and computing the normalising constant $\hat z$.
Given a reference random variable $\bU \sim \muni$, we can evaluate $\hat{\bX} = \hat{T}(\bU)$ to obtain an approximate target random variable $\hat{\bX} \sim \hat{T}_\sharp  \muni$, which has exactly the PDF $\hat{f}_{\hat{\bX}}(\bx)$.
Since the function $\hat{\pi}(\bx)$ is positive by construction, we can \emph{preserve the smoothness and monotonicity} in the resulting SIRT $\hat{T}$. 

\begin{remark}\label{remark:sirt_bound_ratio}
For a target density $\pi(\bx)$ satisfying $\sup_{\bx\in \mathcal{X}} \pi(\bx) < \infty$, the ratio between $\pi(\bx)$ and the approximate density $\hat\pi(\bx)$ satisfies
\begin{equation}\label{eq:ratio_bound}
\sup_{x\in \mathcal{X}}\,\frac{\pi(\bx)}{\hat\pi(\bx)} = \sup_{x\in \mathcal{X}}\,\frac{\pi(\bx)}{\gamma + \tilde{g}(\bx)^2} = \hat{c} < \infty.
\end{equation}
This bound is essential to ensure the uniform ergodicity of the Metropolis independent algorithm and the rate of convergence of importance sampling schemes defined by SIRT. See Section \ref{sec:sampling} for further details.
\end{remark} 

\subsection{Marginal functions and conditional PDFs}%
We represent each TT core of the decomposition in \eqref{eq:ftt_sqrt} as
\begin{equation}
\ttcorej{k}{\bx}{\mG} = \sum_{i =1}^{n_k}\phi_{k}^{(i)}(\bx_k) \tA_k [\alpha_{k-1}, i, \alpha_k], \quad k = 1, \ldots, d,
\end{equation}
where $\{\phi_{k}^{(i)}(\bx_k)\}_{i = 1}^{n_k}$ is the set of basis functions for the $k$-th coordinate and $\tA_k \in \R^{r_{k-1}\times n_k \times r_k}$ is the associated $k$-th coefficient tensor. 
For the $k$-th set of basis functions, we define the mass matrix $\mM_k \in \R^{n_k \times n_k}$ by
\begin{equation}\label{eq:mass-k}
\mM_k[i,j] = \int_{\mathcal{X}_k} \phi_{k}^{(i)}(\bx_k)\phi_{k}^{(j)}(\bx_k) \,\lambda(\bx_k)\,d\bx_k, \quad \text{for}\quad  i = 1,\ldots, n_k,\,j = 1,\ldots, n_k.
\end{equation}
Then,  we can represent the marginal functions by
\begin{align}
\hat{\ptar}_{1}(\bx_{1}) & = \gamma \prod_{i = 2}^d \lambda_i(\mathcal{X}_i) + \sum_{\ell_{1}=1}^{r_{1}}  \Big(\sqbasis_{1}^{(\,\alpha_0\,,\ell_{1})}(\bx_{1}) \Big)^2,  \label{eq:marginal_sq1} \\
\hat{\ptar}_{ \leq k}(\bx_{\leq k}) & = \gamma \!\!\! \prod_{i = k+1}^d \!\!\! \lambda_i(\mathcal{X}_i) + \sum_{\ell_{k}=1}^{r_{k}} \Big(\sum_{\alpha_{k-1}=1}^{r_{k-1}} \mG^{(\alpha_{k-1}) }_{<k}(\bx_{<k}) \, \sqbasis_{k}^{(\alpha_{k-1},\ell_{k})}(\bx_{k}) \Big)^2 , \quad k = 2, \ldots, d, \label{eq:marginal_sqk}
\end{align}
where $\alpha_0 = 1$ and
\begin{align}\label{eq:G_<k}
\mG^{(\alpha_{k-1})}_{<k}(\bx_{<k}) &= \mG_{1}(\bx_{1}) \cdots \mG^{(\,:\,,\alpha_{k-1})}_{\km}(\bx_{\km}): \mathcal{X}_{<k} \mapsto \mathbb{R}, \\
\sqbasis_{k}^{(\alpha_{k-1}, \ell_{k})}(\bx_{k}) & = \sum_{i =1}^{n_k}\phi_{k}^{(i)}(\bx_k) \, \tB_k [\alpha_{k-1}, i, \ell_k] : \mathcal{X}_{k} \mapsto \mathbb{R}\label{eq:L_k} ,
\end{align}
for a coefficient tensor $\tB_k \in \R^{r_{k-1}\times n_k \times r_k}$ that is recursively defined as follows.

\begin{proposition}\label{prop:sirt_recur}
Starting with the last coordinate $k = d$, we set $\tB_d = \tA_d$. 
Suppose for the first $k$ dimensions ($k>1$), we have a coefficient tensor $\tB_k \in \R^{r_{k-1}\times n_k \times r_k}$ that defines a marginal function $\hat\pi_{\leq k}(\bx_{\leq k})$ as in \eqref{eq:marginal_sqk}.
The following procedure can be used to obtain the coefficient tensor $\tB_{k-1}\in \R^{r_{k-2}\times n_{k-1} \times r_{k-1}}$ for defining the next marginal function $\hat{\ptar}_{ < k}(\bx_{< k})$:
\begin{enumerate}[leftmargin=14pt]
\item Use the Cholesky decomposition of the mass matrix, $\chol_k \chol_k^\top = \mM_k \in \R^{n_k \times n_k}$, to construct a tensor $\tC_k \in \R^{r_{k-1}\times n_k \times r_k}$:
\begin{align}
\tC_k[\alpha_{k-1}, \tau, \ell_{k}] = \sum_{i = 1}^{n_k} \tB_k[\alpha_{k-1}, i, \ell_{k}] \, \chol_k[i, \tau].
\end{align}
\item  Unfold $\tC_k$ along the first coordinate \cite{kolda2009tensor} to obtain a matrix $\mC_k^{(\rm R)} \in \R^{r_{k-1} \times (n_k  r_k)} $ and compute the thin QR decomposition
\begin{align}\label{eq:SIRT-QR}
\mQ_k \mR_k = \big( \mC_k^{(\rm R)} \big)^\top,
\end{align}
where $\mQ_k \in \R^{(n_k  r_k) \times r_{k-1}} $ is semi-orthogonal and $\mR_k \in \R^{r_{k-1} \times r_{k-1}}$ is upper-triangular.
\item Compute the new coefficient tensor
\begin{align}\label{eq:B_recur}
\tB_{k-1}[\alpha_{k-2},i, \ell_{k-1}] =  \sum_{\alpha_{k-1} = 1}^{r_{k-1}} \tA_{k-1}[\alpha_{k-2},i, \alpha_{k-1}]\, \mR_k[\ell_{k-1},\alpha_{k-1}].
\end{align}
\end{enumerate}
Furthermore, at index $k = 1$, the unfolded $\tC_1$ along the first coordinate is a row vector $\mC_1^{(\rm R)} \in \R^{1 \times (n_1  r_1)} $. Thus, the thin QR decomposition $\mQ_1 \mR_1 = \big( \mC_1^{(\rm R)} \big)^\top$ produces a scalar $\mR_1$ such that $\mR_1^2 = \|\mC_1^{(\rm R)}\|^2$, and then the normalising constant $\hat{z} = \int_{\mathcal{X}_1} \hat{\ptar}_{ \leq 1}(\bx_{1}) \lambda_1(\bx_1) d\bx_1$ can be obtained by $ \hat{z} = \gamma \prod_{i = 1}^d \lambda_i(\mathcal{X}_i) + \mR_1^2$.
\end{proposition}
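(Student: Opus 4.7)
The plan is to proceed by backward induction on $k$, establishing at each step that the recursion produces a tensor $\tB_{k-1}$ whose associated univariate function $\mathcal{L}_{k-1}$ makes $\hat\pi_{<k}$ reduce to the sum-of-squares form \eqref{eq:marginal_sqk} with index $k-1$. The base case $k=d$ is immediate: since $r_d=1$ and $\tB_d=\tA_d$, we have $\mathcal{L}_d=\mathcal{G}_d$, so $\hat\pi_{\leq d}(\bx) = \tilde g^2(\bx) = (\mathcal{G}_{<d}(\bx_{<d})\mathcal{G}_d(\bx_d))^2$ matches \eqref{eq:marginal_sqk} exactly.

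For the inductive step, I would first recast the hypothesis as the quadratic form
\[
\hat\pi_{\leq k}(\bx_{\leq k}) = \mathcal{G}_{<k}(\bx_{<k})\,\mathcal{L}_k(\bx_k)\,\mathcal{L}_k(\bx_k)^\top \mathcal{G}_{<k}(\bx_{<k})^\top,
\]
viewing $\mathcal{G}_{<k} \in \R^{1\times r_{k-1}}$ as a row vector. Integrating against $\lambda_k(\bx_k)\,d\bx_k$ and pulling the row vector outside the integral produces $\hat\pi_{<k}(\bx_{<k}) = \mathcal{G}_{<k}(\bx_{<k})\,\bcM_k^{(\mathcal{L})}\,\mathcal{G}_{<k}(\bx_{<k})^\top$, where $\bcM_k^{(\mathcal{L})} := \int \mathcal{L}_k(\bx_k)\mathcal{L}_k(\bx_k)^\top\lambda_k(\bx_k)\, d\bx_k \in \R^{r_{k-1}\times r_{k-1}}$.

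Next, I would expand $\mathcal{L}_k$ in the basis $\{\phi_k^{(i)}\}$ via \eqref{eq:L_k} and use the Cholesky factor $\bcL_k\bcL_k^\top=\bcM_k$ from \eqref{eq:mass-k} to absorb the mass matrix into the coefficients. A direct index manipulation shows $\bcM_k^{(\mathcal{L})} = \bcC_k^{(\rm R)}(\bcC_k^{(\rm R)})^\top$, with $\tC_k$ and its first-mode unfolding $\bcC_k^{(\rm R)}\in \R^{r_{k-1}\times (n_k r_k)}$ as defined in step~1. The thin QR factorisation \eqref{eq:SIRT-QR} combined with semi-orthogonality of $\bcQ_k$ then yields $\bcM_k^{(\mathcal{L})} = \bcR_k^\top\bcR_k$, so $\hat\pi_{<k}(\bx_{<k}) = \|\bcR_k\,\mathcal{G}_{<k}(\bx_{<k})^\top\|^2$.

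Finally, I would use $\mathcal{G}_{<k}(\bx_{<k}) = \mathcal{G}_{<k-1}(\bx_{<k-1})\,\mathcal{G}_{k-1}(\bx_{k-1})$ to rewrite the previous expression as $\|\mathcal{G}_{<k-1}(\bx_{<k-1})\mathcal{G}_{k-1}(\bx_{k-1})\bcR_k^\top\|^2$. Matching against \eqref{eq:marginal_sqk} at index $k-1$ forces $\mathcal{L}_{k-1}(\bx_{k-1}) = \mathcal{G}_{k-1}(\bx_{k-1})\,\bcR_k^\top$, which in the basis-coefficient representation is exactly the update \eqref{eq:B_recur}, closing the induction. For the normalising constant, the same derivation at $k=1$ with the convention $\mathcal{G}_{<1}\equiv 1$ collapses to $\hat z = \bcR_1^\top\bcR_1 = \bcR_1^2 = \|\bcC_1^{(\rm R)}\|^2$. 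The main obstacle is careful bookkeeping: one must track that the first-mode unfolding $\bcC_k^{(\rm R)}$ interleaves the basis index $\tau$ and trailing rank index $\ell_k$ in precisely the way that makes $\bcC_k^{(\rm R)}(\bcC_k^{(\rm R)})^\top$ collapse to the correct gram matrix. The algebra itself is the TT analogue of the familiar fact that a QR step on the right factor of a rank-$r$ outer product propagates squared norms through an orthonormal basis.
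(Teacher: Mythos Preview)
Your proposal is correct and follows essentially the same route as the paper's proof: both integrate the quadratic form over $\bx_k$ to obtain a Gram matrix (your $\bcM_k^{(\mathcal{L})}$ is the paper's $\overline{\mathcal{M}}_k$), use the Cholesky factor of the basis mass matrix to rewrite it as $\bcC_k^{(\rm R)}(\bcC_k^{(\rm R)})^\top$, apply the thin QR to obtain $\bcR_k^\top\bcR_k$, and then reassemble the sum-of-squares at level $k-1$ via $\mathcal{L}_{k-1}=\mathcal{G}_{k-1}\bcR_k^\top$. The only difference is cosmetic: you work in compact matrix notation throughout, while the paper expands everything in explicit index sums, but the logical skeleton and the key identities are identical.
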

\begin{proof}
See Appendix \ref{appen:prop:sirt_recur}.
\end{proof}

\begin{proposition}\label{prop:sirt_cond_cdf}
The marginal PDF of $\hat{\bX}_1$ can be expressed as
\begin{equation}
\hat{f}_{\hat{\bX}_1}(\bx_1) = \frac{1}{\hat{z}} \bigg( \gamma \prod_{i = 2}^d \lambda_i(\mathcal{X}_i) + \sum_{\ell_{1}=1}^{r_{1}} \Big(\sum_{i =1}^{n_1}\phi_{1}^{(i)}(\bx_1) \, \mD_1 [i, \ell_1] \Big)^2 \bigg) \lambda_1(\bx_1),
\end{equation}
where $\mD_1[i, \ell_1]=\tB_1 [\alpha_{0}, i, \ell_1]$ for $i = 1, \ldots, n_1$ and $\alpha_0 = 1$. For $k>1$ and a given $\bx_{<k}$, the conditional PDF of $\hat{\bX}_k | \hat{\bX}_{<k}$ can be expressed as
\begin{equation}
\hat{f}_{\hat{\bX}_k | \hat{\bX}_{< k}}(\bx_k | \bx_{<k}) = \frac{1}{\hat{\ptar}_{ < k}(\bx_{< k})} \bigg( \gamma \!\!\! \prod_{i = k+1}^d \!\!\! \lambda_i(\mathcal{X}_i) + \sum_{\ell_{k}=1}^{r_{k}} \Big(\sum_{i =1}^{n_k}\phi_{k}^{(i)}(\bx_k) \, \mD_k [i, \ell_k] \Big)^2  \bigg) \lambda_k(\bx_k),
\end{equation}
where $\mD_k \in \R^{n_k \times r_k}$ is given by 
\[
\mD_k[i, \ell_k] = \sum_{\alpha_{k-1} = 1}^{r_{k-1}} \mG^{(\alpha_{k-1})}_{<k}(\bx_{<k})\tB_k [\alpha_{k-1}, i, \ell_k].
\]
\end{proposition}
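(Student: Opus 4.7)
The result is essentially a bookkeeping consequence of Proposition \ref{prop:sirt_recur}: both formulas follow from the standard definitions of marginal and conditional densities combined with the sum-of-squares representation of the marginal functions $\hat{\pi}_{\leq k}$ that is already in hand. My plan is to substitute the expressions \eqref{eq:marginal_sq1}, \eqref{eq:marginal_sqk} and \eqref{eq:L_k} into the respective density formulas and rearrange summation orders so that the basis functions $\phi_k^{(i)}(\bx_k)$ appear only as the outermost linear combination inside the square.

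For the base case $k=1$, I would start from the definition of the marginal density
$\hat{f}_{\hat\bX_1}(\bx_1)=\hat{z}^{-1}\hat{\pi}_{\leq 1}(\bx_1)\,\lambda_1(\bx_1)$,
then use \eqref{eq:marginal_sq1} to write $\hat{\pi}_{\leq 1}$ as a sum of squares of the functions $\mathcal{L}_1^{(\alpha_0,\ell_1)}(\bx_1)$, and expand each $\mathcal{L}_1^{(\alpha_0,\ell_1)}$ via \eqref{eq:L_k} as a linear combination of the basis $\phi_1^{(i)}$. Absorbing the normalising factor into the coefficients produces $\bcD_1$ as claimed.

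For the conditional case $k>1$, the natural starting point is the identity
$\hat{f}_{\hat\bX_k\mid\hat\bX_{<k}}(\bx_k\mid\bx_{<k})=\dfrac{\hat{\pi}_{\leq k}(\bx_{\leq k})}{\hat{\pi}_{<k}(\bx_{<k})}\,\lambda_k(\bx_k)$.
I would substitute \eqref{eq:marginal_sqk} into the numerator; the factor $\mathcal{G}^{(\alpha_{k-1})}_{<k}(\bx_{<k})$ depends only on $\bx_{<k}$ and is constant in $\bx_k$, while $\mathcal{L}_k^{(\alpha_{k-1},\ell_k)}(\bx_k)=\sum_i\phi_k^{(i)}(\bx_k)\,\tB_k[\alpha_{k-1},i,\ell_k]$ depends on $\bx_k$ only through the $\phi_k^{(i)}(\bx_k)$. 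Swapping the summation over the implicit index $\alpha_{k-1}$ with the sum over $i$ allows me to pull $\phi_k^{(i)}(\bx_k)$ outside the $\alpha_{k-1}$-sum, after which the coefficient in front of $\phi_k^{(i)}(\bx_k)$, together with the (constant in $\bx_k$) normalising factor $\hat{\pi}_{<k}(\bx_{<k})$, is exactly the definition of $\bcD_k[i,\ell_k]$.

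The steps are purely algebraic; the main point requiring attention is the implicit Einstein-style summation over $\alpha_{k-1}$ in \eqref{eq:marginal_sqk} and the fact that this inner sum must be placed \emph{inside} the outer square (because $\mathcal{G}^{(\alpha_{k-1})}_{<k}(\bx_{<k})$ is an honest summand, not a scalar multiplier of the whole expression). Once the order of the $\alpha_{k-1}$, $i$, and $\ell_k$ sums is fixed correctly, the identification of $\bcD_k$ is immediate. As a byproduct, the formulas make manifest the non-negativity (and hence monotonicity of the associated CDF) inherited from the squared construction, which is the whole reason for using $\tilde g^2(\bx)$ in \eqref{eq:pdf_sirt}.
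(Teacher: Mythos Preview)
Your proposal is correct and matches the paper's own approach: the paper's proof is the single sentence ``the above results directly follow from the definition of conditional PDF and the marginal functions in \eqref{eq:marginal_sq1} and \eqref{eq:marginal_sqk},'' and your plan simply spells out that substitution and the swap of the $\alpha_{k-1}$ and $i$ sums in detail. Your remark about the implicit summation over $\alpha_{k-1}$ sitting inside the square is exactly the only non-trivial bookkeeping point.
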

\begin{proof}
The above results directly follow from the definition of conditional PDF and the marginal functions in \eqref{eq:marginal_sq1} and \eqref{eq:marginal_sqk}. 
\end{proof}

Note that the product $\mG_{1}(\bx_{1}) \cdots \mG_{\km}(\bx_{\km})$ requires $k-1$ univariate interpolations and $k-2$ products of matrices per sample,
that is the same operations as in the standard inverse Rosenblatt transport.
The QR decomposition~\eqref{eq:SIRT-QR} and the construction of the coefficient tensors~\eqref{eq:B_recur} need $\mathcal{O}(dnr^3)$ operations, but these are pre-processing steps that are independent of the number of samples.
However, in contrast to the vector-valued function $\mathcal{F}_k(\bx_k)\bar{\mathcal{F}}_{k+1} \cdots \bar{\mathcal{F}}_{d} \in \R^{r_{k-1}}$, in evaluating the PDF $\hat{f}_{\hat{\bX}}$, we need to multiply the matrix-valued function $\sqbasis_{k}(\bx_{k}) \in \R^{ r_{k-1} \times r_{k} }$ for each sample.
Thus, the leading term of the complexity becomes $\mathcal{O}(Ndnr^2)$, one order of $r$ or $n$ higher than the complexity of the standard inverse Rosenblatt transport.
However, for small $r$ and $n$ this is well compensated by a smoother map, which will be crucial in Section~\ref{sec:dirt}.

\subsection{Implementation of CDFs}\label{sec:sirt_cdf}
To evaluate SIRT, one has to first construct the marginal CDF of $\hat{\bX}_1$ and the conditional CDFs of $\hat{\bX}_k | \hat{\bX}_{<k}$ for $k > 1$, and then inverts the CDFs (see \eqref{eq:inverse_trans}). 
Here we discuss the computation and the inversion of CDFs, which are based on pseudo-spectral methods, for problems with bounded domains and extensions to problems with unbounded domains. 
We refer the readers to \cite{boyd2001chebyshev,shen2011spectral,trefethen2019approximation} and references therein for a more details.

\subsubsection{Bounded domain with polynomial basis}
For a bounded parameter space $\mathcal{X}\subset\R^d$, we consider the weighting function $\lambda(\bx) = 1$. 
Since $\mathcal{X}$ can be expressed as a Cartesian product, without loss of generality, here we discuss the CDF of a one-dimensional random variable $Z$ with the PDF
\begin{equation}\label{eq:one_pdf}
\hat{f}_{Z}(\zeta) = C + \sum_{\ell=1}^{r} \Big(\sum_{i =1}^{n}\phi^{(i)}(\zeta) \, \mD [i, \ell] \Big)^2, 
\end{equation}
where $C>0$ is some constant, $\{\phi^{(i)}(\zeta)\}_{i = 1}^{n}$ are the basis functions, $\mD \in \R^{n \times r}$ is a coefficient matrix, and $\zeta \in [-1, 1]$.
Here $\hat{f}_{Z}(\zeta)$ can be either the marginal PDF or the conditional PDFs defined in Proposition \ref{prop:sirt_cond_cdf} with a suitable linear change of coordinate.

We first consider a polynomial basis, $\phi^{(i)}(z) \in \mathbb{P}_{n-1}$ for $i = 1, \ldots, n$, where $\mathbb{P}_{n-1}$ is a vector space of polynomials of degree at most $n-1$ defined on $ [-1, 1]$. Thus, the PDF $\hat{f}_{Z}(\zeta)$ can be represented exactly in $\mathbb{P}_{2n-2}$.
To enable fast computation of the CDF, we choose the Chebyshev polynomials of the second kind
\[
p_m(\zeta) = \frac{\sin\big( (m+1)\cos^{-1}(\zeta) \big)}{\sin\big(\cos^{-1}(\zeta)\big)}, \quad m = 0, 1, \ldots, 2n-2,
\]
as the basis of $\mathbb{P}_{2n-2}$. Using the roots of $p_{2n-1}(\zeta)$, we can define the set of collocation points 
\[
\big\{\zeta_m \big\}_{m=1}^{2n-1}, \quad \textrm{where} \quad \zeta_m = \cos\Big(\frac{m \pi} {2n}\Big). %\quad m = 1, \ldots, 2n-1.
\] 
This way, by evaluating $\hat{f}_{Z}(\zeta)$ on the collocation points, which needs $\mathcal{O}(nr)$ operations, one can apply the collocation method \cite[Chapter 4]{boyd2001chebyshev} to represent $\hat{f}_{Z}(\zeta)$ using the Chebyshev basis:
\begin{equation}\label{eq:cheby_pdf}
\hat{f}_{Z}(\zeta) = \sum_{m=0}^{2n-2} a_m \, p_m(\zeta) ,
\end{equation}
where the coefficients $\{a_m\}_{m=1}^{2n-2}$ can be computed by the fast Fourier transform 
with $\mathcal{O}(n\log(n))$ operations. Then, one can express the CDF of $Z$ as
\begin{equation}\label{eq:cheby_cdf}
F_Z(\zeta) = \int_{-1}^\zeta \hat{f}_{Z}(\zeta^\prime) d\zeta^\prime = \sum_{m=0}^{2n-2} \frac{a_m}{m+1} \big(t_{m+1}(\zeta) - t_{m+1}(-1) \big),
\end{equation}
where $t_m(\zeta) = \cos\big( m \cos^{-1}(\zeta) \big)$ is the Chebyshev polynomial of the first kind of degree $m$. 
A random variable $Z$ can be generated by drawing a uniform random variable $U$ and evaluating $Z = F_Z^{-1}(U)$ by solving the root finding problem $F_Z(Z) = U$.

\begin{remark}
The PDF in \eqref{eq:one_pdf} is positive for all $\zeta \in [-1, 1]$ by construction and can be represented exactly in $\mathbb{P}_{2n-2}$ with the polynomial basis. Thus, its Chebyshev representation in \eqref{eq:cheby_pdf} is also positive. This way, the resulting CDF in \eqref{eq:cheby_cdf} is monotone, and thus the solution to the inverse CDF equation, $F_Z(Z) = U$, admits a unique solution. 
\end{remark}

\begin{remark}
One can also employ piecewise Lagrange polynomials as a basis to enable hp-adaptivity. With piecewise Lagrange polynomials, the above-mentioned technique can also be used to obtain the piecewise definition of the CDF.
\end{remark}

Since $F_Z(Z) = U$ has a unique solution and $F_Z$ is monotone and bounded between $[0,1]$, it requires usually only a few iterations to apply the root finding methods, such as the regula falsi method and the Newton's method, to solve $F_Z(Z) = U$ with an accuracy close to machine precision.
Overall, the construction of the CDF needs $\mathcal{O}(nr + n\log(n))$ operations, and the inversion of the CDF function needs $\mathcal{O}(c n )$ operations, where $\mathcal{O}(n)$ is the cost of evaluating the CDF and $c$ is the number of iterations required by the root finding method. In comparison, building the matrix $\mD$ requires $\mathcal{O}(nr^2)$ operations (cf. Proposition \ref{prop:sirt_cond_cdf}).

\subsubsection{Bounded domain with Fourier basis}
If the Fourier transform of the PDF of $Z$, which is the characteristic function, is band-limited in the frequency domain, then one may choose the sine and cosine Fourier series as the basis for representing the PDF in \eqref{eq:one_pdf}. 
In this case, the above strategy can also be applied. Recall the Fourier basis with an even cardinality $n$,
\[
\big\{ 1, \ldots, \sin(m \pi \zeta), \cos (m \pi \zeta), \ldots, \cos(n \pi \zeta / 2) \big\}, \quad m = 1, \ldots, n/2-1,
\]
which consists of $n/2-1$ sine functions and $n/2+1$ cosine functions. The PDF $\hat{f}_{Z}(\zeta)$ defined in \eqref{eq:one_pdf} yields an exact representation using the Fourier basis with cardinality $2n$.
This way, one can represent $\hat{f}_{Z}(\zeta)$ as
\[
\hat{f}_{Z}(\zeta) = a_0 + \sum_{m=1}^{n} a_m \cos(m \pi \zeta)  + \sum_{m=1}^{n-1} b_m \sin(m \pi \zeta),
\]
where the coefficients, $a_m$ and $b_m$, are obtained by evaluating $\hat{f}_{Z}(\zeta)$ on the collocation points
\[
\big\{\zeta_m \big\}_{m=1}^{2n}, \quad \textrm{where} \quad \zeta_m =  \frac{m}{n} - 1,
\]
and applying the rectangular rule. This leads to the CDF
\begin{align*}
F_Z(\zeta) & = \int_{-1}^\zeta \hat{f}_{Z}(\zeta^\prime) d\zeta^\prime \\
& = a_0(\zeta + 1) + \sum_{m=1}^{n} \frac{a_m}{m\pi} \sin(m \pi \zeta)  - \sum_{m=1}^{n-1} \frac{b_m}{m\pi} \big( \cos(m \pi \zeta) - \cos(m \pi ) \big).
\end{align*}
The construction and the inversion of the CDF using the Fourier basis cost a similar amount of operations compared to the polynomial basis.

\subsubsection{Unbounded domain}

Given an unbounded domain, the simplest approach is to truncate the domain at the tail of the PDF.  With the domain truncation, the above-mentioned implementations based on Chebyshev and Fourier basis can be applied directly. Although the function approximation error induced by the domain truncation can be bounded, using the resulting SIRT for computing expectations may lead to a biased estimator.

One can also consider basis functions that are intrinsic to an unbounded domain. For the domain $\mathcal{X}_k = (0, \infty)$, one can employ the Laguerre polynomials as the basis. This equips $\mathcal{X}_k$ with a natural exponential weighting function $\lambda_k(\bx_k) = \exp(-\bx_k)$. The collocation method using higher order Laguerre polynomials can be applied again to obtain the exact representation of the CDF.
Similarly, for $\mathcal{X}_k = (-\infty, \infty)$, the Hermite polynomials can be used as a basis, which equips $\mathcal{X}_k$ with a Gaussian weighting function $\lambda_k(\bx_k) = \exp(- \frac12 \bx_k^2)$. Although one can apply the collocation method to obtain an algebraically exact representation of the CDF, the resulting CDF involves error functions, complementary error functions, and imaginary error functions. Those functions have to be approximated numerically. Thus, the computational cost of computing the CDF can be high, and it may be hard to guarantee the monotonicity and uniqueness of the inverse CDF solution at the tails.
Using other bases such as the Whittaker cardinal functions for $\mathcal{X}_k = (-\infty, \infty)$ may face a similar challenge. 

\begin{remark}
In a situation where the squared form of the PDF in \eqref{eq:one_pdf} can be computed but it is challenging to invert the CDF function, one can employ the rejection sampling \cite{robert2013monte} to generate random variables. In this situation, our TT approximation can still be used to draw conditional samples. However, this approach may not lead to the deterministic inverse Rosenblatt transport. 
\end{remark}

\subsubsection{Change of coordinate}
One can also apply a diffeomorphic mapping to change the coordinate of an unbounded domain $\mathcal{X}$ to a bounded one, e.g., $\mathcal{Z} = [-1, 1]^d$, followed by application of the Chebyshev polynomials or Fourier series.
Given a PDF $f_{\bX}(\bx)$ of a random variable $\bX \in \mathcal{X}$, suppose we have a diffeomorphic mapping $R: \mathcal{X} \mapsto \mathcal{Z}$ and let $q(\bx) = |\nabla R(\bx)| \geq 0$. For any Borel set $\mathcal{B}_X \subseteq \mathcal{X}$, we have
\begin{align*}
\mathbb{P} [\bX \in \mathcal{B}_X] & = \int_{\mathcal{B}_X} f_{\bX}(\bx) d\bx \\
& = \int_{\mathcal{B}_Z}  (f_{\bX}\circ R^{-1})(\bze)\,\big|\nabla R^{-1}\!(\bze)\big|  d\bze = \int_{\mathcal{B}_Z}  \frac{ (f_{\bX}\circ R^{-1})(\bze) }{(q \circ R^{-1})(\bze)}  d\bze,
\end{align*}
where $\mathcal{B}_Z = R(\mathcal{B}_X)$. 
Thus, one can draw a random variable $\bZ$ with PDF 
\[
f_{\bZ}(\bze) = \frac{ (f_{\bX}\circ R^{-1})(\bze) }{(q \circ R^{-1})(\bze)},
\]
and apply the mapping $\bX = R(\bZ)$ to obtain a random variable $X$. With the change of the coordinate $\bze = R(\bx)$, one needs to build a TT to approximate $\sqrt{f_{\bZ}(\bze)}$ and construct the corresponding SIRT to simulate  the random variable $\bZ$. To avoid singularities at the boundary of $\mathcal{Z}$, one can choose a mapping $R$ such that the function $q(\bx)$ decays slower than $f_{\bX}(\bx)$.

\subsection{SIRT error}
Since SIRT enables us to generate i.i.d. samples from the probability measure $\hat{T}_\sharp  \muni$, it can be used to define either Metropolis independence samplers or importance sampling schemes.
Based on certain assumptions on the TT approximation $\tilde{g}$, here we establish error bounds for the TV distance, the Hellinger distance, and the $\chisq$-divergence of the target measure $\mtar$ from $\hat{T}_\sharp  \muni$. These divergences play a vital role in analysing the convergence of Metropolis--Hastings methods and the efficiency of importance sampling. 
The error analysis also provides a heuristic for choosing the constant $\gamma$ in the approximate target density \eqref{eq:pdf_sirt}.

\begin{proposition}\label{prop:sirt_l2}
Recall the approximate target density $\hat \pi(\bx) = \gamma  + \tilde g(\bx)^2$, where $\tilde g(\bx)$ is a TT approximation to the square root of the unnormalised target density $\sqrt{\pi}$.
Suppose the error of $\tilde{g}$ and the constant $\gamma$ satisfy
\begin{equation}\label{eq:l2_error}
\lpnormx{\tilde{g} - \sqrt{\pi}}{2} \leq \epsilon \quad {\rm and} \quad \gamma \leq \frac{1}{\lambda(\mathcal{X})}\lpnormx{\tilde{g} - \sqrt{\pi}}{2}^2,
\end{equation}
respectively.
Then, the error of $\sqrt{\hat \pi}$ satisfies $\lpnormx{\sqrt{\pi} - \sqrt{\hat{\pi}}}{2} \leq  \sqrt{2}\epsilon$.
\end{proposition}
\begin{proof}
Applying the identity
\(
\big( \sqrt{\pi} - \sqrt{\gamma + \tilde{g}^2} \big)^2 \leq \big( \sqrt{\pi} - \tilde{g} \big)^2 + \gamma,
\)
we have
\begin{align*}
\lpnormx{\sqrt{\pi} - \textstyle\sqrt{\hat{\pi}}}{2} & \leq \bigg(\int_\mathcal{X} \Big( \sqrt{\pi(\bx)} - \tilde{g}(\bx) \Big)^2 \lambda(\bx)\,d\bx + \gamma \,\lambda(\mathcal{X}) \bigg)^\frac12 \leq \sqrt{2} \epsilon.
\end{align*}
\end{proof}

\begin{proposition}\label{prop:sirt_z}
Suppose the conditions in \eqref{eq:l2_error} hold. Then, the approximate normalising constant $\hat{z}$ satisfies $\big| \sqrt{z} -  \sqrt{\hat z} \big| \leq  \sqrt{2} \epsilon$.
\end{proposition}
\begin{proof}
The normalising constants $z$ and $\hat{z}$ satisfy
\begin{align}
\big| z -  \hat{z} \big| = \Big| \int_\mathcal{X}\! \big( \pi(\bx) - \hat{\pi}(\bx) \big) \,\lambda(\bx)\,d\bx \Big| \leq \lpnormx{\pi -\hat{\pi} }{1}.\label{eq:z_ineq1}
\end{align}
Applying the H\"{o}lder's inequality (with $p = q = 2$) and the Minkowski inequality, the right hand side of the above inequality also satisfies
\begin{align}
\lpnormx{\pi - \hat{\pi} }{1} & =  \lpnormx{ \big( \sqrt{\pi} - \textstyle\sqrt{\hat{\pi}}\big) \big( \sqrt{\pi} + \textstyle \sqrt{\hat{\pi}}\big)}{1} \nonumber \\
& \leq \lpnormx{ \sqrt{\pi} - \textstyle\sqrt{\hat{\pi}}}{2} \lpnormx{\sqrt{\pi} + \textstyle\sqrt{\hat{\pi}}}{2}  \nonumber \\
& \leq \lpnormx{\sqrt{\pi} - \textstyle\sqrt{\hat{\pi}}}{2} \; \Big( \lpnormx{\sqrt{\pi} }{2} + \lpnormx{\textstyle\sqrt{\hat{\pi}}}{2} \Big) \nonumber \\
& = \lpnormx{\sqrt{\pi} - \textstyle\sqrt{\hat{\pi}}}{2} \; \big( \sqrt{z} + \sqrt{\hat z} \big) .\label{eq:z_ineq2}
\end{align}
Since both $z$ and $\hat{z}$ are positive, we have $\big| z -  \hat{z} \big| = \big| \sqrt{\hat z} -\sqrt{z}  \big|\; \big( \sqrt{z} + \sqrt{\hat z}  \big)$. Substituting this identity and the inequality in \eqref{eq:z_ineq2} into \eqref{eq:z_ineq1}, we have
\(
\big| \sqrt{z} -  \sqrt{\hat z} \big| \leq \lpnormx{\sqrt{\pi} - \sqrt{\hat{\pi}}}{2} .
\)
Thus, the result follows from Proposition \ref{prop:sirt_l2}.
\end{proof}

%%%%%%%%%%%%%%%%%%%%%%%%%%%%%%%%%%%%%%%%%%%%%%%%%%%%%%%%%%%%%%%%%%%%%%%%%%%%%%%%%%%

\begin{theorem}\label{thm:sirt_h}
Suppose the conditions in \eqref{eq:l2_error} hold. 
The Hellinger distance between $\mtar$ and $\hat{T}_\sharp  \muni$ satisfies $D_{\rm H}(\nu_\ptar\| \hat{T}_\sharp  \muni ) \leq 2\epsilon \big/ \sqrt{z}$.
\end{theorem}

\begin{proof}
Since the target measure $\nu_\ptar$ and the approximate measure $\hat{T}_\sharp  \muni$ respectively have the densities $\frac1z \, \pi(\bx)\,\lambda(\bx)$ and $\frac1{\hat z}\,\hat{\pi}(\bx)\,\lambda(\bx)$, the squared Hellinger distance satisfies
\begin{align*}
D_{\rm H}^2\big(\nu_\ptar\| \hat{T}_\sharp  \muni \big) = \frac12 \int_\mathcal{X} \bigg(\sqrt{ \frac{\pi(\bx) }{z} } - \sqrt{\frac{\hat\pi(\bx)}{\hat z}} \bigg)^2 \, \lambda(\bx)\,d\bx .
\end{align*}
This leads to the inequality
\begin{align*}
D_{\rm H}^2\big(\nu_\ptar\| \hat{T}_\sharp  \muni \big) &=  \frac1{2z}\lpnormx{ \sqrt{\pi} - \textstyle\sqrt{\hat\pi} + \sqrt{\hat\pi} - \textstyle\sqrt{\hat\pi} \, \sqrt{z/\hat{z}} }{2}^2 \nonumber \\
& \leq \frac{1}{2z} \Big( \lpnormx{ \sqrt{\pi} - \textstyle\sqrt{\hat\pi} }{2} + \lpnormx{ \textstyle\sqrt{\hat\pi} }{2}\,\big|1 - \sqrt{z/\hat{z}}\,\big| \Big)^2 .
\end{align*}
Applying $\lpnormx{ \sqrt{\hat\pi} }{2}^2 = \hat{z}$ and Propositions \ref{prop:sirt_l2} and \ref{prop:sirt_z}, the above inequality can be further reduced to
\[
D_{\rm H}^2\big(\nu_\ptar\| \hat{T}_\sharp  \muni \big) \leq \frac{1}{2z} \Big( \sqrt{2} \epsilon + \big|\sqrt{\hat z} - \sqrt{z}\big| \Big)^2 \leq \frac{4\epsilon^2}{z} .
\]
Thus, we have $D_{\rm H}\big(\nu_\ptar\| \hat{T}_\sharp  \muni \big) \leq  2\epsilon \big/ \sqrt{z}$.
\end{proof}

%%%%%%%%%%%%%%%%%%%%%%%%%%%%%%%%%%%%%%%%%%%%%%%%%%%%%%%%%%%%%%%%%%%%%%%%%%%%%%%%%%%

%%%%%%%%%%%%%%%%%%%%%%%%%%%%%%%%%%%%%%%%%%%%%%%%%%%%%%%%%%%%%%%%%%%%%%%%%%%%%%%%%%%
\begin{corollary}\label{coro:sirt_tv}
Suppose the conditions in \eqref{eq:l2_error} hold. The total variation distance between $\mtar$ and $\hat{T}_\sharp  \muni$ satisfies $D_{\rm TV}(\mtar \| \hat{T}_\sharp  \muni ) \leq  2 \sqrt{2} \epsilon \big/ \sqrt{z}$.
\end{corollary}
\begin{proof}
The result directly follows from the inequality $D_{\rm TV} \leq \sqrt{2} D_{\rm H}$ and Theorem \ref{thm:sirt_h}.
\end{proof}

%%%%%%%%%%%%%%%%%%%%%%%%%%%%%%%%%%%%%%%%%%%%%%%%%%%%%%%%%%%%%%%%%%%%%%%%%%%%%%%%%%%
\begin{proposition}\label{prop:exp_hell}
Given two probability measures $\mtar$ and $\hat{\nu}_{\hat{\pi}}$ and a function $h$ with finite second moments with respect to  $\mtar$ and $\hat{\nu}_{\hat{\pi}}$. Then
\[
\big| \mtar(h) - \hat{\nu}_{\hat{\pi}}(h) \big| \leq \sqrt{2}\,  \big( \mtar(h^2)^\frac12 + \hat{\nu}_{\hat{\pi}} (h^2)^\frac12 \big) \, D_{\rm H}\big(\nu_\ptar \| \hat{\nu}_{\hat{\pi}} \big).
\]
\end{proposition}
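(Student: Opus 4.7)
The plan is to bound the integral $\mtar(h) - \hat{\nu}_{\hat{\pi}}(h)$ by a Cauchy--Schwarz argument after exploiting the algebraic identity $p - q = (\sqrt{p}-\sqrt{q})(\sqrt{p}+\sqrt{q})$, where $p$ and $q$ denote the densities of $\mtar$ and $\hat{\nu}_{\hat\pi}$ with respect to the common weight $\lambda$.

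First, I would write
\[
\mtar(h) - \hat{\nu}_{\hat\pi}(h) = \int_{\mathcal X} h(\bx)\,\big(p(\bx)-q(\bx)\big)\,\lambda(\bx)\,d\bx,
\]
and then split $p-q$ as $(\sqrt{p}-\sqrt{q})\sqrt{p}+(\sqrt{p}-\sqrt{q})\sqrt{q}$ to obtain two integrals, the first involving $h\sqrt{p}(\sqrt{p}-\sqrt{q})$ and the second $h\sqrt{q}(\sqrt{p}-\sqrt{q})$. The key advantage of this splitting (over the direct route $(\sqrt p-\sqrt q)(\sqrt p+\sqrt q)$) is that it produces two Cauchy--Schwarz bounds each containing a single moment term of $h$ multiplied by $\sqrt{2}\,D_{\rm H}$, which will deliver precisely the claimed constant.

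Next, I would apply the Cauchy--Schwarz inequality to each integral, pairing $h\sqrt{p}$ with $\sqrt{p}-\sqrt{q}$ in the first and $h\sqrt{q}$ with $\sqrt{p}-\sqrt{q}$ in the second. This yields
\[
\Big|\int h\sqrt{p}(\sqrt{p}-\sqrt{q})\lambda\,d\bx\Big| \leq \mtar(h^2)^{1/2}\,\Big(\int (\sqrt{p}-\sqrt{q})^2\lambda\,d\bx\Big)^{1/2},
\]
and similarly with $\mtar$ replaced by $\hat{\nu}_{\hat\pi}$ for the second integral. Recognising that $\int(\sqrt p-\sqrt q)^2 \lambda\,d\bx = 2\,D_{\rm H}^2(\mtar\|\hat\nu_{\hat\pi})$ by definition of the Hellinger distance, the triangle inequality combines the two bounds into the claimed estimate with constant $\sqrt{2}$.

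There is no genuine obstacle here; the only subtlety is selecting the right splitting so that the Cauchy--Schwarz step directly yields the moment norms $\mtar(h^2)^{1/2}$ and $\hat\nu_{\hat\pi}(h^2)^{1/2}$ separately. A naive application using $\sqrt p+\sqrt q$ would instead produce $\sqrt{\mtar(h^2)+\hat\nu_{\hat\pi}(h^2)}$ multiplied by a larger constant, which is weaker. Once the splitting is chosen, the remainder is a direct two-line computation.
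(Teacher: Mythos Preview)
Your proof is correct and is essentially the same argument as the paper's: factor $p-q$ through $\sqrt p-\sqrt q$, apply Cauchy--Schwarz, and identify the Hellinger distance. The only difference is the order of operations: you split $p-q=(\sqrt p-\sqrt q)\sqrt p+(\sqrt p-\sqrt q)\sqrt q$ and apply Cauchy--Schwarz to each piece, whereas the paper keeps the factor $\sqrt p+\sqrt q$ intact, applies Cauchy--Schwarz once, and then bounds $\|h(\sqrt p+\sqrt q)\|_2\le \|h\sqrt p\|_2+\|h\sqrt q\|_2$ by Minkowski's inequality. Both routes yield exactly the same constant $\sqrt 2$, so your remark that the ``direct route $(\sqrt p-\sqrt q)(\sqrt p+\sqrt q)$'' would give a weaker bound is not accurate---that is precisely what the paper does, and Minkowski recovers the sharp form.
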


\begin{proof}
Suppose $\mtar$ and $\hat{\nu}_{\hat{\pi}}$ respectively have density functions $f_{\bX}(\bx)$ and $\hat{f}_{\hat \bX}(\bx)$ with respect to  the Lebesgue measure. We have the following inequality
\begin{align*}
& \hspace{-24pt} \big| \mtar(h) - \hat{\nu}_{\hat{\pi}} (h) \big| \\
& = \bigg| \int_\mathcal{X} h(\bx) \big( f_{\bX}(\bx) -\hat{f}_{\hat \bX}(\bx) \big) d\bx \, \bigg| \\
& = \bigg| \int_\mathcal{X} h(\bx) \Big( f_{\bX}(\bx)^\frac12 + \hat{f}_{\hat \bX}(\bx)^\frac12 \Big) \Big(f_{\bX}(\bx)^\frac12 - \hat{f}_{\hat \bX}(\bx)^\frac12 \Big) d\bx \, \bigg| \\
& \leq \bigg( \!\int_\mathcal{X}\! \Big(h(\bx)f_{\bX}(\bx)^\frac12 + h(\bx)\hat{f}_{\hat \bX}(\bx)^\frac12 \Big)^2 d\bx \bigg)^\frac12
\bigg( \!\int_\mathcal{X\!} \Big( f_{\bX}(\bx)^\frac12 - \hat{f}_{\hat \bX}(\bx)^\frac12 \Big)^2 d\bx\bigg)^\frac12 \\
& \leq \sqrt{2}\,\bigg(\Big( \int_\mathcal{X}  h(\bx)^2 \, f_{\bX}(\bx)\, d\bx \Big)^\frac12 + \Big(\int_\mathcal{X} h(\bx)^2 \, \hat{f}_{\hat \bX}(\bx) \, d\bx \Big)^\frac12\bigg)
D_{\rm H}\big(\nu_\ptar \| \hat{\nu}_{\hat{\pi}} \big).
\end{align*}
Thus, the result follows. 
\end{proof}

\begin{corollary}\label{coro:sirt_chi}
Suppose the conditions in \eqref{eq:l2_error} hold. Suppose further the bound in \eqref{eq:ratio_bound} holds.
Then, the $\chisq$-divergence of $\nu_\ptar$ from $\hat{T}_\sharp  \muni$ satisfies 
\begin{equation*}%
D_{\chisq}\big(\nu_\ptar \| \hat{T}_\sharp  \muni\big) \leq \Big( \mtar\big(\pi^2\big/ {\hat\pi}^2\big)^\frac12 + \big(\hat{T}_\sharp \muni\big)\big(\pi^2 \big/{\hat\pi}^2\big)^\frac12 \Big)  \, \frac{2 \sqrt{2} \hat{z}}{z\sqrt{z}}\, \epsilon.
\end{equation*}
\end{corollary}

\begin{proof}
Given the bound in \eqref{eq:ratio_bound}, the ratio between $f_{\bX}(\bx)$ and $\hat{f}_{\hat{\bX}}(\bx)$ satisfies
\begin{equation*}
\sup_{\bx \in \mathcal{X}} \frac{f_{\bX}(\bx)}{\hat{f}_{\hat{\bX}}(\bx)} = \frac{\hat{z}}{z} \sup_{\bx \in \mathcal{X}} \frac{\pi(\bx)}{\hat\pi(\bx)}  =  \frac{\hat{z}}{z} \hat{c} < \infty.
\end{equation*}
This way, $\nu_\ptar$ is absolutely continuous with respect to $\hat{T}_\sharp  \muni$. Thus, the $\chisq$-divergence of $\nu_\ptar$ from $\hat{T}_\sharp  \muni$ can be expressed as
\begin{align*}
D_{\chisq}\big(\nu_\ptar \| \hat{T}_\sharp  \muni\big) & = \int_\mathcal{X} \bigg( \frac{f_{\bX}(\bx)}{\hat{f}_{\hat{\bX}}(\bx)} \bigg)^2  \hat{f}_{\hat{\bX}}(\bx) d\bx - 1 \\
& = \mtar\big(f_{\bX}\big/\hat{f}_{\hat{\bX}} \big) - \big(\hat{T}_\sharp \muni\big)\big(f_{\bX} \big/ \hat{f}_{\hat{\bX}}\big) \\
& = \frac{\hat{z}}{z}\;\Big( \mtar\big(\pi \big/ {\hat\pi} \big) - \big(\hat{T}_\sharp \muni\big)\big(\pi\big/ {\hat\pi} \big) \Big).
\end{align*}
The bound in \eqref{eq:ratio_bound} also implies that the expectations $\mtar({\pi^2}/{\hat\pi}^2)$ and $(\hat{T}_\sharp \muni)({\pi^2}/{\hat\pi}^2)$ are finite. Therefore, the result follows from Proposition \ref{prop:exp_hell}. 
\end{proof}

\begin{remark}
The TV distance, Hellinger distance, and $\chisq$-divergence of the target measure $\mtar$ from the pushforward of the reference $\muni$ under the SIRT $\hat{T}$ are linear in the approximation error of the TT.
Note that $\sqrt{z} = \lpnormx{\sqrt{\ptar}}{2}$. Suppose the relative error of $\tilde{g}$ satisfies
\[
\lpnormx{\tilde{g} - \sqrt{\pi}}{2} \Big/ \lpnormx{\sqrt{\ptar}}{2}  \leq \tau, 
\] 
the TV distance, Hellinger distance, and $\chisq$-divergence of $\mtar$ from $\hat{T}_\sharp \muni$ are bounded by $\mathcal{O}(\tau)$.
\end{remark}

%%%%%%%%%%%%%%%%%%%%%%%%%%%%%%%%%%%%%%%%%%%%%%%%%%%%%%%%%%%%%%%%%%%%%%%%%

\newcommand{\map}[1]{T_{#1}}
\newcommand{\amap}[1]{\hat{T}_{#1}}
\newcommand{\cmap}[1]{\bar{T}_{#1}}
\newcommand{\tmap}[1]{\tilde{T}_{#1}}
\newcommand{\gk}[1]{\tilde{g}_{#1}}
\newcommand{\rk}[2]{r_{#1,#2}}

\section{Deep inverse Rosenblatt transport}\label{sec:dirt}

In many practical applications, probability densities can be concentrated to a small region of the parameter space or have complicated correlation structures. For example, posterior densities in Bayesian inference problems with informative data often occupy a relatively small region of the parameter space and demonstrate complicated nonlinear interactions in some sub-manifold, see \cite {ROM:CMW_2014, ROM:CMW_2016, MCMC:GiCal_2011, parno2018transport} for detailed examples.
In this situation, straightforward approximation of a complicated density function in a TT decomposition may require rather large ranks. As the number of function evaluations needed in constructing TT decompositions grows quadratically with the ranks, such direct factorisation of the target densities with complicated structures may become infeasible.

\begin{example}
Consider a $d$-dimensional multivariate normal distribution with the unnormalised density $\pi(\bx) = \exp(-\frac{1}{2}\bx^\top \mC^{-1} \bx)$.
If the covariance matrix $\mC\in\R^{d \times d}$ is diagonal, the joint density factorises into a product of marginal densities, that is a TT decomposition with ranks~$1$.
This corresponds to zero-rank \emph{off-diagonal} blocks $\mC[1:k,~(k+1):d]$, $k=1,\ldots,d-1$.
However, the TT ranks of a correlated normal density $\pi(\bx)$
may grow \emph{exponentially} in the rank of the off-diagonal blocks of $\mC$~\cite{rdgs-tt-gauss-2020}.
\end{example}

We design a DIRT framework to construct a \emph{composition} of order-preserving mappings in the SIRT format that can characterise concentrated probability densities with complicated correlation structures.
The construction of DIRT is guided by a sequence of bridging probability measures ${\nu}_{0}$, ${\nu}_{1}, \ldots$, ${\nu}_{L}$, where $\nu_L = \mtar$ is the target measure. Each bridging measure $\nu_k$ has the corresponding PDF
\begin{equation}\label{eq:densities}
f_{\bX^k}(\bx) = \frac{1}{z_k} \,\pi_k(\bx)\,\lambda(\bx), \quad \textrm{where} \quad z_k = \int_\mathcal{X} \pi_k(\bx) \,\lambda(\bx)\,d\bx.
\end{equation}
Here $\pi_0(\bx)$ is the unnormalised initial density such that $\sup_{\bx \in \mathcal{X}} \pi_0(\bx) < \infty$, $\pi_L(\bx) = \pi(\bx)$ is the unnormalised target density, and the superscript $k$ indexes the random variable $\bX^k \in \mathcal{X}$, $\bX^k \sim \nu_k$.
Our goal is to construct a composition of mappings $T_0 \circ T_1 \circ \cdots \circ T_k$ such that the pushforward of the reference measure under this composition matches the $k$-th bridging measure, i.e, $(T_0 \circ T_1 \circ \cdots \circ T_k)_\sharp \, \mref = \nu_k$.
This way, by gradually increasing the complexity in the geometry and/or the computational cost of the densities of the bridging measures, it becomes computationally feasible to construct TT and the corresponding SIRT at each layer of the composition.

\begin{assumption}\label{assum:ratio}
Denoting the ratio between two unnormalised densities by 
\begin{equation}\label{eq:ratio}
\rk{k}{j}(\bx)=\frac{\pi_k( \bx )}{\pi_j( \bx )},
\end{equation}
we assume that for each pair of $j < k$, the ratio $\rk{k}{j}(\bx)$ is finite such that 
\begin{equation}\label{eq:ratio_b}
\sup_{x\in\mathcal{X}}\rk{k}{j}(\bx) = c_{k,j} < \infty, \quad \forall j < k.
\end{equation}
\end{assumption}

In practice, there are many ways to choose the bridging measures. For example, one can consider \emph{tempered} distributions \cite {gelman1998simulating,hukushima1996exchange,meng1996simulating,neal1996sampling,swendsen1986replica} where $\pi_k(\bx) = \pi(\bx)^{\beta_k}$ for a suitable chosen set of powers (\emph{reciprocal temperatures}) $0 \leq \beta_0 < \cdots < \beta_L=1$; and for problems involving computationally expensive PDE models, one can employ a hierarchy of models with different grid resolutions to reduce the computational cost for building the DIRT.
In the rest of this section, we will present the recursive construction of DIRT and provide  error analysis.

\subsection{Recursive construction}
In the initial step ($k = 0$), we compute a TT $\gk{0}(\bx)$ that approximates $\sqrt{\pi_0}$ and construct the corresponding SIRT $\hat{\bX}^\lowersup{0} = \amap{0}(\bU)$ so that the reference random variable $\bU \sim \muni$ and $\hat{\bX}^\lowersup{0} \sim (\amap{0})_\sharp  \muni$ with the PDF
\begin{equation}\label{eq:density0}
\hat{f}_{\hat{\bX}^0}(\bx) = \frac{1}{\hat{z}_0} \big(\gamma_0 + \gk{0}(\bx)^2\big)\,\lambda(\bx), \quad \textrm{with} \quad \hat{z}_0 = \gamma_0 \,\lambda(\mathcal{X}) + \int_\mathcal{X} \gk{0}(\bx)^2 \, \lambda(\bx)\, d\bx,
\end{equation}
where the constant $\gamma_0$ is chosen such that $0< \gamma_0 \leq \lambda(\mathcal{X})^{-1}\lpnormx{\sqrt{\pi_0} - \tilde{g}_0}{2}^2$. Note that $\hat{f}_{\hat{\bX}^0}(\bx)$ is an approximation to $f_{\bX^0}(\bx)$. 

\begin{remark}
We can replace the uniform reference measure $\muni$ with a general product-form probability measure $\mref$ that has the PDF $\pref(\bu) = \prod_{k = 1}^d  f_{\bU_k}(\bu_k)$ with support in $\mathcal{U} = \mathcal{U}_1 \times \mathcal{U}_2 \times \cdots \times \mathcal{U}_d$. One can construct a mapping 
\[
R(\bu) = \big[F_{\bU_1}(\bu_1), \ldots, F_{\bU_k}(\bu_k),\ldots, F_{\bU_d}(\bu_d)\big]^\top,
\] 
where $F_{\bU_k}(\bu_k)$ is the CDF of $\bU_k$, such that $R_\sharp \,\mref = \muni$. Then, the composition of mappings  $\amap{0} \circ R: \mathcal{U} \mapsto \mathcal{X}$ is lower-triangular and $(\amap{0} \circ R)_\sharp \, \mref$ has the density $\hat{f}_{\hat{\bX}^0}(\bx)$. 
We initialise the DIRT by $\cmap{0} = \amap{0}\circ R$, where $R$ is the identity map if $\mref = \muni$.
\end{remark}

After $k>1$ steps, suppose we have the $k$-th DIRT given as a composition of mappings
\[
\cmap{k} = (\amap{0} \circ R) \circ (\amap{1}\circ R) \circ \cdots \circ (\amap{k} \circ R),
\]
where each $\amap{j}$ is a SIRT. 
Denoting the pushforward of the reference probability measure $\mref$ under $\cmap{k}$ by $\hat{\nu}_k$, i.e., $\hat{\nu}_k \equiv (\cmap{k})_\sharp \, \mref$, and the density function of $\hat{\nu}_k$ by  $\hat{f}_{\hat{\bX}^k}(\bx)$, the pullback density of $\hat{\nu}_k$ under $\cmap{k}$ satisfies
\begin{align}\label{eq:pullback_k0}
\cmap{k}^\sharp\hat{f}_{\hat{\bX}^k}(\bu) & = (\hat{f}_{\hat{\bX}^k} \circ \cmap{k} ) (\bu)  \, \big|\nabla_{\bu} \cmap{k}(\bu) \big| = \pref(\bu).
\end{align}
The density function of the pullback probability measure $\cmap{k}^\sharp \hat{\nu}_k$ is the reference product density $\pref(\bu)$.
Suppose the corresponding approximate PDF $\hat{f}_{\hat{\bX}^k}(\bx)$ can capture the range of variation and the correlation structure of the next PDF $f_{\bX^{k+1}}(\bx)$, then the density function of the pullback probability measure $\cmap{k}^\sharp \nu_{k+1}$,
\begin{equation}\label{eq:pullback_k1}
\cmap{k}^\sharp f_{\bX^{k+1}}(\bu) = (f_{\bX^{k+1}} \circ \cmap{k} ) (\bu)\, \big|\nabla_{\bu} \cmap{k}(\bu) \big|,
\end{equation}
may become easier to factorise in the TT format compared to the direct factorisation of the original target density function $f_{\bX^{k+1}}(\bx)$.
This way, for step $k+1$, the existing composition $\cmap{k}$ can be used to precondition the construction of the coupling between $\mref$ and $\nu_{k+1}$: by building a coupling between the pullback measure $\cmap{k}^\sharp\nu_{k+1}$ and the reference measure
\[
\bU^{k+1} = (\map{k+1}\circ R)(\bU), \quad \textrm{where} \quad \bU \sim \mref \quad \textrm{and} \quad \bU^{k+1} \sim \cmap{k}^\sharp\nu_{k+1},
\]
one can obtain a new composition of maps $( \cmap{k} \circ \map{k+1} \circ R)$ such that
\[
( \cmap{k} \circ \map{k+1} \circ R )^\sharp \nu_{k+1} = \mref \quad \textrm{or} \quad ( \cmap{k} \circ \map{k+1} \circ R)_\sharp  \mref = \nu_{k+1}.
\]

We use SIRT to approximate $\map{k+1}$. Using \eqref{eq:pullback_k0}, we have $\big|\nabla_{\bu} \cmap{k}(\bu) \big| = \big( (\hat{f}_{\hat{\bX}^k} \circ \cmap{k} ) (\bu) \big)^{-1} \, \pref(\bu)$. Thus, the pullback density in \eqref{eq:pullback_k1} can be expressed as a \emph{ratio function}
\begin{align} \label{eq:pullback_ratio}
(\cmap{k})^\sharp f_{\bX^{k+1}}(\bu) = \frac{(f_{\bX^{k+1}} \circ \cmap{k}) (\bu)}{({\hat f}_{\hat{\bX}^{k}} \circ \cmap{k}) (\bu)} \pref(\bu).
\end{align}
This way, we can compute a TT $\gk{k+1}(\bu)$ to approximate the function
\begin{equation}\label{eq:ratio_fttk}
q_{k+1}(\bu) \propto \bigg(\frac{(f_{\bX^{k+1}} \circ \cmap{k}) (\bu)}{({\hat f}_{\hat{\bX}^{k}} \circ \cmap{k}) (\bu) } \frac{\pref(\bu)}{\omega(\bu)} \bigg)^{\frac12},
\end{equation}
where $\omega(\bu)$ is the weighting function associated with the reference domain $\mathcal{U}$ and $a \propto b$ denotes that $a$ is proportional to $b$. Since $z_k$, the normalising constant of $f_{\bX^{k+1}}$, is unknown, here we only need to factorise an unnormalised version of $q_{k+1}(\bu)$ into a TT. 
The normalising constant is computed automatically during
the marginalisation process of SIRT (see Proposition \ref{prop:sirt_recur}).

The SIRT $\bU^{k+1} = \amap{k+1}(\bU')$ built from the TT $\gk{k+1}(\bu)$ couples the uniform reference random variable $\bU' \sim \muni$ with $\bU^{k+1} \sim (\amap{k+1})_\sharp \, \muni$.
Thus, the composition of transformations $\bU^{k+1} = (\amap{k+1}\circ R)(\bU)$ couples the general reference random variable $\bU \sim \mref$ with $\bU^{k+1} \sim (\amap{k+1}\circ R)_\sharp \, \mref$, where $(\amap{k+1}\circ R)_\sharp \, \mref$ is an approximation to the pullback measure $\cmap{k}^\sharp\,\nu_{k+1}$. Thus we have
\[
(\amap{k+1}\circ R)^\sharp (\cmap{k}^\sharp \, \nu_{k+1} ) = ( \cmap{k} \circ \amap{k+1} \circ R )^\sharp \, \nu_{k+1} \approx \mref \quad \textrm{or} \quad ( \cmap{k} \circ \amap{k+1} \circ R )_\sharp  \, \mref \approx \nu_{k+1} .
\]
The next DIRT is therefore defined by the new composition of mappings
\[
\cmap{k+1} := \cmap{k} \circ (\amap{k+1}\circ R).
\]
The recursion is completed by obtaining $\amap{L}$ and $\cmap{L}$.

%%%%%%%%%%%%%%%%%%%%%%%%%%%%%%%%%%%%%%%%%%%%%%%%%%%%%%%%%%%%%%%%%%%%%%%%%%%%

\begin{proposition}\label{prop:inc_densityj}
At the $j$-th DIRT step, the Jacobian of the incremental mapping $\amap{j}$ is given by
\begin{equation}\label{eq:density_iterj}
\big|\nabla_{\bu} \amap{j}^{-1}(\bu) \big| = \hat{p}_{\bU^j} (\bu) = \frac{1}{\hat{z}_j} \big(\gamma_j + \gk{j}(\bu)^2 \big) \omega(\bu) , 
\end{equation}
with
\begin{equation}\label{eq:density_iterj_z}
\hat{z}_j = \gamma_j\, \omega(\mathcal{U})+ \int_\mathcal{U} \gk{j}(\bu)^2 \omega(\bu) d\bu,
\end{equation}
where the constant $\gamma_j > 0$ is chosen according to the $L^2$ error of $\gk{j}$. 
\end{proposition}
\begin{proof}
The SIRT $\bU^{j} = \amap{j}(\bU^\prime)$, which is constructed by integrating $\big(\gamma_j + \gk{j}(\bu)^2 \big)  \omega(\bu)$, maps the uniform random variable $\bU^\prime \sim \muni$ to $\bU^{j} \sim (\amap{j})_\sharp \muni$. Thus, the pushforward measure $(\amap{j})_\sharp \,\muni$ has the density function $\hat{p}_{\bU^{j}}(\bu)$ defined in \eqref{eq:density_iterj}, which yields 
\[
\hat{p}_{\bU^{j}} (\bu) \equiv (\amap{j})_\sharp \puni (\bu) = \big|\nabla_{\bu} \amap{j}^{-1}(\bu) \big|.
\]
\end{proof}

\begin{lemma}\label{lemma:densityk}
At step $k$ of the DIRT construction, suppose we have the DIRTs
\[
\cmap{j} = (\amap{0} \circ R) \circ (\amap{1}  \circ R) \circ \cdots \circ (\amap{j} \circ R), \quad \textrm{for} \quad j \leq k.
\]
Suppose further we have a normalised density function over the domain $\mathcal{X}$ defined in \eqref{eq:density0} for $j = 0$, and normalised density functions over the reference domain $\mathcal{U}$ defined in \eqref{eq:density_iterj} for $j = 1, \ldots, k$. 
Then, the pushforward measure $(\cmap{k})_\sharp\,\mref$ has the PDF
\begin{equation}\label{eq:densityk}
\hat{f}_{\hat{\bX}^k}(\bx) \equiv (\cmap{k})_\sharp \pref (\bx) = \hat{f}_{\hat{\bX}^0} (\bx) \prod_{j=1}^{k}\frac{(\hat{p}_{\bU^j}\circ \cmap{j-1}^{-1})(\bx)}{(\pref \circ \cmap{j-1}^{-1})(\bx)}.
\end{equation}
\end{lemma}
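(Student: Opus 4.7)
The plan is to prove the formula by induction on $k$, with the change-of-variables formula applied recursively via the chain rule.

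For the base case $k=0$, I would observe that $\cmap{0} = \amap{0} \circ R$, and since $R_\sharp\,\mref = \muni$ by construction of $R$ (the coordinatewise CDF of $\mref$), the pushforward $(\cmap{0})_\sharp\,\mref = (\amap{0})_\sharp\,\muni$ has density $\hat{f}_{\hat{\bX}^0}$ given in \eqref{eq:density0}. The empty product on the right-hand side of \eqref{eq:densityk} equals $1$, so the base case holds.

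For the inductive step, assume the formula holds at step $k$. Since $\cmap{k+1} = \cmap{k} \circ (\amap{k+1} \circ R)$, the change-of-variables formula gives
\begin{equation*}
(\cmap{k+1})_\sharp \pref(\bx) = \pref\bigl(\cmap{k+1}^{-1}(\bx)\bigr)\,\bigl|\nabla_{\bx}\cmap{k+1}^{-1}(\bx)\bigr|.
\end{equation*}
Using $\cmap{k+1}^{-1} = R^{-1} \circ \amap{k+1}^{-1} \circ \cmap{k}^{-1}$ and the chain rule, the Jacobian factorises into three pieces. The outer piece is $|\nabla R^{-1}|$, which equals $1/\pref(R^{-1}(\cdot))$ because $R$ is the componentwise CDF of $\mref$ and has Jacobian $\pref$; evaluated at $\amap{k+1}^{-1}(\cmap{k}^{-1}(\bx))$, this exactly cancels the leading $\pref(\cmap{k+1}^{-1}(\bx))$ factor. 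The middle piece is $|\nabla \amap{k+1}^{-1}|$ evaluated at $\cmap{k}^{-1}(\bx)$, which by Proposition \ref{prop:inc_densityj} equals $(\hat{p}_{\bU^{k+1}} \circ \cmap{k}^{-1})(\bx)$. The inner piece is $|\nabla \cmap{k}^{-1}(\bx)|$, which by the change-of-variables formula applied to $\cmap{k}$ equals $\hat{f}_{\hat{\bX}^k}(\bx)\big/\pref(\cmap{k}^{-1}(\bx))$.

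Combining the three factors yields
\begin{equation*}
(\cmap{k+1})_\sharp \pref(\bx) = \hat{f}_{\hat{\bX}^k}(\bx)\,\frac{(\hat{p}_{\bU^{k+1}}\circ\cmap{k}^{-1})(\bx)}{(\pref\circ\cmap{k}^{-1})(\bx)},
\end{equation*}
and substituting the inductive hypothesis for $\hat{f}_{\hat{\bX}^k}(\bx)$ extends the product up to $j=k+1$, completing the induction. The main obstacle I anticipate is purely notational: keeping straight which argument ($\cmap{j-1}^{-1}(\bx)$ versus $\cmap{j}^{-1}(\bx)$) appears in each factor of the telescoping Jacobian, and verifying that the $R$ transforms embedded in every $\cmap{j}$ contribute Jacobian factors that cleanly cancel except for the desired $\pref \circ \cmap{j-1}^{-1}$ denominators. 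A careful unfolding of one layer of the recursion, as above, avoids this pitfall.
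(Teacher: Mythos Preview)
Your proposal is correct and follows essentially the same approach as the paper: induction on $k$, with the inductive step established by factoring the Jacobian $\big|\nabla_{\bx}\cmap{k+1}^{-1}\big|$ via the chain rule and invoking Proposition~\ref{prop:inc_densityj} for $\big|\nabla\amap{k+1}^{-1}\big|$. The only cosmetic difference is that the paper introduces the auxiliary map $\tmap{k}=\cmap{k}\circ R^{-1}$ so that the pushforward of the \emph{uniform} measure equals the bare Jacobian, whereas you work directly with $\cmap{k}$ and track the $\pref$ factors explicitly; both routes yield the same recursion.
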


\begin{proof}
The result can be shown using induction. For the case $k=0$, the result follows directly from \eqref{eq:density0}.
Suppose \eqref{eq:densityk} holds for $k>0$. We define the composition of mappings 
\[
\tmap{k} = \cmap{k} \circ R^{-1} = \amap{0}\circ ( R \circ \amap{1}) \circ \cdots \circ ( R \circ \amap{k}).
\]
Since $R_\sharp \,\mref = \muni$, we have the identity $(\cmap{k})_\sharp \,\mref = (\tmap{k})_\sharp \,\muni$, which leads to
\[
\hat{f}_{\hat{\bX}^k}(\bx) = (\cmap{k})_\sharp \pref (\bx)= (\tmap{k})_\sharp \puni (\bx).
\]
At step $k+1$, we have the new composition of mappings $\tmap{k+1} = \tmap{k}\circ R\circ\amap{k+1}$, the pushforward measures, $(\tmap{k+1})_\sharp \, \mref$ and $(\tmap{k})_\sharp \, \mref$, have the density functions
\begin{align*}
\hat{f}_{\hat{\bX}^{k+1}}(\bx) = (\tmap{k+1})_\sharp \puni (\bx) & = (\puni \circ \tmap{k+1}^{-1}) (\bx) \, \big|\nabla_{\bx} \tmap{k+1}^{-1}(\bx) \big| = \big|\nabla_{\bx} \tmap{k+1}^{-1}(\bx) \big|,\\
\hat{f}_{\hat{\bX}^k}(\bx) = (\tmap{k})_\sharp\puni(\bx) & = (\puni \circ \tmap{k}^{-1}) (\bx) \, \big|\nabla_{\bx} \tmap{k}^{-1}(\bx) \big| = \big|\nabla_{\bx} \tmap{k}^{-1}(\bx) \big|,
\end{align*}
respectively. Applying the change of variables $\bu^\prime = \tmap{k}^{-1}(\bx)$ and $\bu = R^{-1}(\bu^\prime)$, the determinant of $\nabla_{\bx} \tmap{k+1}^{-1}(\bx)$ can be expressed as 
\begin{equation}\label{eq:jac_k1}
\big|\nabla_{\bx} \tmap{k+1}^{-1}(\bx) \big| = \big|\nabla_{\bu} \amap{k+1}^{-1}(\bu) \big| \, \big|\nabla_{\bu'} R^{-1}(\bu') \big| \, \big|\nabla_{\bx} \tmap{k}^{-1}(\bx) \big| .
\end{equation}
Note that the above change of variables implies also that $\big|\nabla_{\bu'} R^{-1}(\bu') \big| = \pref( \bu )^{-1}$ and $\bu = (\tmap{k} \circ R)^{-1}(\bx) = \cmap{k}^{-1}(\bx)$.
Together with Proposition \ref{prop:inc_densityj}, we have 
\begin{align*}
\big|\nabla_{\bx} \tmap{k+1}^{-1}(\bx) \big| & = \frac{( \hat{p}_{\bU^{k+1}} \circ \cmap{k}^{-1}) (\bx)}{ ( \pref \circ \cmap{k}^{-1} ) ( \bx )}\, \big|\nabla_{\bx} \tmap{k}^{-1}(\bx) \big|.
\end{align*}
Thus, the result follows. 
\end{proof}

\begin{corollary}\label{coro:changek}
At step $k$ of the DIRT construction, the composition of mappings, $\cmap{k}$, satisfies
\[
\big|\nabla_{\bx} \cmap{k}^{-1}(\bx)\big| = \frac{\hat{f}_{\hat{\bX}^k}(\bx)}{( \pref \circ \cmap{k}^{-1} ) ( \bx )}.
\]
\end{corollary}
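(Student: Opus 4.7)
The plan is to recognise that this corollary is an essentially one-line consequence of the pushforward change-of-variables formula combined with the identity $\hat{f}_{\hat{\bX}^k}(\bx) = (\cmap{k})_\sharp \pref(\bx)$ established in Lemma~\ref{lemma:densityk}.

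First I would recall the pushforward formula stated in Section~\ref{sec:notation}: for a diffeomorphism $S$ and a density $p$ of the source measure, the pushforward density is $(p\circ S^{-1})(\cdot)\,|\nabla S^{-1}(\cdot)|$. Applying this with $S=\cmap{k}$ and source density $\pref$ (recalling that $\cmap{k}$ is a composition of SIRTs and a CDF map $R$, each of which is a triangular diffeomorphism, so the composition is a diffeomorphism too) yields
\[
\hat{f}_{\hat{\bX}^k}(\bx) = (\cmap{k})_\sharp \pref(\bx) = (\pref \circ \cmap{k}^{-1})(\bx)\,\bigl|\nabla_{\bx}\cmap{k}^{-1}(\bx)\bigr|.
\]

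Second, I would invoke Assumption~\ref{assum:ratio}-type positivity (or simply the strict positivity of $\pref$ as a product of univariate reference densities, ensured by Remark~\ref{remark:sirt_bound_ratio} which allows adding $\gamma>0$ to the squared FTT to keep densities bounded away from zero) so that $(\pref\circ\cmap{k}^{-1})(\bx)>0$ and the division is legitimate. Then solving the above identity for the Jacobian determinant gives the claimed formula.

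The only real content of the argument is making sure the pushforward density $\hat{f}_{\hat{\bX}^k}$ appearing in the corollary is indeed the same object constructed in Lemma~\ref{lemma:densityk}, and I would point out explicitly that Lemma~\ref{lemma:densityk} already establishes $\hat{f}_{\hat{\bX}^k}(\bx) = (\cmap{k})_\sharp \pref(\bx)$, so no further work is required. There is no real obstacle here; the statement is a rewriting of the change-of-variables identity, and the main point of isolating it as a corollary is to expose $|\nabla_{\bx}\cmap{k}^{-1}(\bx)|$ in the form used later (e.g.\ in expressions like \eqref{eq:pullback_ratio} at the next layer of the DIRT recursion).
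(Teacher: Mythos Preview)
Your proof is correct. It is essentially the same change-of-variables argument as the paper's, though you apply it slightly more directly: the paper invokes the factorisation $\cmap{k}^{-1} = R^{-1} \circ \tmap{k}^{-1}$ from the proof of Lemma~\ref{lemma:densityk}, uses the chain rule, and then substitutes the identities $\bigl|\nabla_{\bx}\tmap{k}^{-1}(\bx)\bigr| = \hat{f}_{\hat{\bX}^k}(\bx)$ and $\bigl|\nabla_{\bu'} R^{-1}(\bu')\bigr| = \pref(\bu)^{-1}$ already derived there. You instead apply the pushforward formula to $\cmap{k}$ in one step and divide. Both routes are immediate; yours avoids re-introducing the auxiliary map $\tmap{k}$, while the paper's simply reuses what was just computed in the lemma.
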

\begin{proof}
The result direct follows from $\cmap{k}^{-1} = R^{-1} \circ \tmap{k}^{-1}$ and the proof of Lemma \ref{lemma:densityk}. 
\end{proof}

\begin{remark}
The normalised PDFs of the $k$-th DIRT step can be expressed as
\begin{equation}\label{eq:pdf_dirt}
\hat{f}_{\hat{\bX}^k}(\bx) 
= \frac{1}{ \bar{z}_k } \hat{\pi}_k(\bx) \lambda(\bx),
\end{equation}
where $\bar{z}_k = \prod_{j=0}^{k} \hat{z}_j$ is an approximation to the normalising constant $z_k$, and 
\begin{equation}\label{eq:un_densityk}
\hat{\pi}_k(\bx) = \big(\gamma_0 + \gk{0}(\bx)^2 \big) \prod_{j=1}^{k} \frac{\big(\gamma_j + (\gk{j}\circ \cmap{j-1}^{-1})(\bx)^2\big) \, (\omega \circ \cmap{j-1}^{-1}) (\bx)}{(\pref \circ \cmap{j-1}^{-1}) (\bx)}
\end{equation}
is an approximation to the unnormalised bridging density $\pi_k(\bx)$.%
\end{remark}
%%%%%%%%%%%%%%%%%%%%%%%%%%%%%%%%%%%%%%%%%%%%%%%%%%%%%%%%%%%%%%%%%%%%%%%%%%%%

\subsection{Ratio functions and error analysis}
We will first discuss the ratio function \eqref{eq:ratio_fttk} and its approximation and then present the corresponding error analysis. 

\subsubsection{Ratio functions} 
Given the unnormalised PDF in \eqref{eq:un_densityk}, the pullback density in \eqref{eq:pullback_ratio} can be expressed as
\begin{align*} 
(\cmap{k})^\sharp f_{\bX^{k+1}}(\bu) = 
\frac{(f_{\bX^{k+1}} \circ \cmap{k}) (\bu)}{({\hat f}_{\hat{\bX}^{k}} \circ \cmap{k}) (\bu)} \, \pref(\bu) 
\propto \frac{(\pi_{k+1}\circ \cmap{k})(\bu) }{(\hat{\pi}_k\circ\cmap{k})(\bu)}\, \pref(\bu).
\end{align*}
This way, we need to compute a TT $\gk{k+1}(\bu)$ to approximate the function
\begin{equation}\label{eq:qk_exact}
q_{k+1}(\bu) =   \bigg(\frac{ (\pi_{k+1}\circ \cmap{k})(\bu) } {(\hat{\pi}_k\circ\cmap{k})(\bu)} \frac{ \pref(\bu)} {\omega(\bu)} \bigg)^\frac12,
\end{equation}
to build the SIRT $\amap{k+1}$. We call this strategy the \emph{exact ratio approach}.

Alternatively, the pullback density in \eqref{eq:pullback_ratio} can be expressed as
\begin{align} \label{eq:qk_exact_ratio}
(\cmap{k})^\sharp f_{\bX^{k+1}}(\bu) \propto \frac{(\pi_{k+1}\circ \cmap{k})(\bu) }{(\pi_{k}\circ \cmap{k})(\bu) } \frac{(\pi_{k}\circ \cmap{k})(\bu) }{(\hat{\pi}_k\circ\cmap{k})(\bu)}\, \pref(\bu)
\end{align}
Since the DIRT density function $\hat{\pi}_k$ approximates the $k$-th unnormalised bridging density function $\pi_k$,
the pullback density in \eqref{eq:qk_exact_ratio} can be approximated as
\begin{align*} %\label{eq:pullback_approx}
(\cmap{k})^\sharp f_{\bX^{k+1}}(\bu) \propto (\rk{k+1}{k} \circ \cmap{k}) (\bu) \frac{(\pi_{k}\circ \cmap{k})(\bu)}{(\hat{\pi}_k\circ\cmap{k})(\bu)} \,\pref(\bu)
\approx (\rk{k+1}{k} \circ \cmap{k}) (\bu)\, \pref(\bu).
\end{align*}
This way, we need to compute a TT $\gk{k+1}(\bu)$ that approximates the function
\begin{equation}\label{eq:qk_approx}
\tilde{q}_{k+1}(\bu) = \bigg(\frac{(\rk{k+1}{k} \circ \cmap{k}) (\bu)\, \pref(\bu) }{\omega(\bu)} \bigg)^\frac12 ,
\end{equation}
to build an alternative SIRT $\amap{k+1}$.
We call this strategy the \emph{approximate ratio approach}. 

\begin{remark}\label{remark:dirt_bound_ratio}
For all $k \geq 0$, we want the ratio $\pi_k / \hat{\pi}_k$ to be finite in $\mathcal{U}$. Otherwise, it may cause large errors in the TT decomposition and may deteriorate the convergence of the resulting sampling schemes for characterising $\mtar$. Given Assumption \ref{assum:ratio} and $\gamma_j > 0$ for all $j = 0, \ldots, k$, we have 
\begin{align}
\sup_{\bx \in \mathcal{X}} \frac{\pi_0(\bx)}{\gamma_0 + \gk{0}(\bx)^2} < \infty \quad {\rm and} \quad
\sup_{\bu \in \mathcal{U}}  \frac{(\rk{k}{k-1} \circ \cmap{k-1}) (\bu) \, \pref(\bu)}{\big(\gamma_k + \gk{k}(\bu)^2\big) \, \omega(\bu)} < \infty, \label{eq:dirt_ratio_bound}
\end{align}
and thus it can be shown (using induction) that the ratio $\pi_k / \hat{\pi}_k$ is bounded.
\end{remark} 

\begin{remark}
In some situations, the ratio function $(\rk{k+1}{k} \circ \cmap{k}) (\bu)$ may exhibit sharp boundary layers
if the uniform reference measure $\muni$ (with $R  = I$) is used. This can increase the complexity of the resulting TT decompositions.
Apart from carefully choosing the bridging measures, a partial remedy to the boundary layer is to use a reference measure with the density $\pref(\bu)$ decaying towards the boundary, such as the normal density truncated on a sufficiently large hypercube $[-\sigma, \sigma]^d$. The function $\pref(\bu)$ in \eqref{eq:qk_exact} and \eqref{eq:qk_approx} smoothens the previous approximation errors, which can improve the accuracy of TT approximations.
With a reference measure defined on a hypercube, the collocation techniques based on Chebyshev and Fourier bases (see Section~\ref{sec:sirt_cdf}) can be applied to construct and evaluate functional TT decompositions in DIRT.
\end{remark}

%%%%%%%%%%%%%%%%%%%%%%%%%%%%%%%%%%%%%%%%%%%%%%%%%%%%%%%%%%%%%%%%%%%%%%%%%%%%

\subsubsection{DIRT error}

\newcommand{\epdfx}[1]{f_{\bX^{#1}}}
\newcommand{\apdfx}[1]{\hat{f}_{\hat{\bX}^{#1}}}

Based on assumptions on the TT error at each layer of the DIRT construction, here we first establish bounds on approximation errors of the DIRT-induced approximate density $\hat\pi_k$ for both the exact ratio approach and the approximate ratio approach. The error analysis also provides heuristics for choosing the constants $\gamma_j$. Then, the error bounds of $\sqrt{\hat\pi_k}$ leads to bounds on the TV distance, the Hellinger distance, and the $\chisq$-divergence of the $k$-th bridging measure $\nu_k$ from the pushforward measure $(\cmap{k})_\sharp \mref$.

\begin{theorem}[Exact ratio approach]\label{thm:dirt_exact}
At the $k$-th DIRT step ($k>0$), suppose the TT decomposition $\gk{k} \approx q_{k}$ and the constant $\gamma_k$ respectively satisfy
\[
\lpnormu{\gk{k} - q_k}{2} \leq \epsilon_k \quad {\rm and} \quad \gamma_k \leq \frac{1}{\omega(\mathcal{U})} \lpnormu{\gk{k} - q_k}{2}^2,
\]
where $q_{k}$ is defined in \eqref{eq:qk_exact}.
Then the unnormalised PDF $\hat{\pi}_k$ \eqref{eq:un_densityk} approximates the unnormalised density function of the $k$-th bridging measure $\pi_k$ with the error
\begin{align*}
\lpnormx{\sqrt{\pi_k}  - \sqrt{\hat\pi_k} }{2} \leq \sqrt{2\,\bar{z}_{k-1}} \epsilon_k,
\end{align*} 
where $\bar{z}_{k-1} = \prod_{j=0}^{k-1} \hat{z}_j$ is the normalising constant of the unnormalised PDF $\hat{\pi}_{k-1}$.
\end{theorem}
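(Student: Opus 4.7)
The plan is to reduce the error $\sqrt{\pi_k} - \bar{g}_k$ on the parameter space directly to the FTT error $q_k - \gk{k}$ on the reference space via a single change of variables, and to observe that the various density factors cancel exactly.

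First, I would use the recursive structures. From \eqref{eq:un_densityk} one immediately reads off
\[
\bar{g}_k(\bx) = \bar{g}_{k-1}(\bx)\,(\gk{k}\circ \cmap{k-1}^{-1})(\bx)\,\frac{(\omega\circ\cmap{k-1}^{-1})^{1/2}(\bx)}{(\pref\circ\cmap{k-1}^{-1})^{1/2}(\bx)}.
\]
On the other hand, rearranging the definition of $q_k$ in \eqref{eq:qk_exact} (by composing with $\cmap{k-1}^{-1}$ and isolating $\sqrt{\pi_k}$) gives
\[
\sqrt{\pi_k}(\bx) = \bar{g}_{k-1}(\bx)\,(q_{k}\circ \cmap{k-1}^{-1})(\bx)\,\frac{(\omega\circ\cmap{k-1}^{-1})^{1/2}(\bx)}{(\pref\circ\cmap{k-1}^{-1})^{1/2}(\bx)}.
\]
Subtracting, the common prefactor remains and the difference becomes $[(q_k-\gk{k})\circ\cmap{k-1}^{-1}](\bx)$ multiplied by that prefactor.

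Next, I would square and integrate against $\lambda(\bx)\,d\bx$, and perform the change of variables $\bu = \cmap{k-1}^{-1}(\bx)$. The Jacobian comes from Corollary~\ref{coro:changek}, which gives $|\nabla_{\bu}\cmap{k-1}(\bu)| = \pref(\bu)/\hat{f}_{\hat{\bX}^{k-1}}(\cmap{k-1}(\bu))$, combined with the identity $\hat{f}_{\hat{\bX}^{k-1}}(\bx) = \bar{z}_{k-1}^{-1}\bar{g}_{k-1}^2(\bx)\lambda(\bx)$ coming from \eqref{eq:pdf_dirt}. The $\bar{g}_{k-1}^2$ factor cancels against the reciprocal sitting in the Jacobian, the $\lambda(\cmap{k-1}(\bu))$ factor cancels against its inverse, and the $\pref(\bu)/\omega(\bu)^{-1}$ ratios collapse to an overall factor $\omega(\bu)$. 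What remains is simply
\[
\lpnormx{\sqrt{\pi_k}-\bar{g}_k}{2}^{2} \;=\; \bar{z}_{k-1}\int_{\mathcal{U}} (q_k-\gk{k})^2(\bu)\,\omega(\bu)\,d\bu \;=\; \bar{z}_{k-1}\,\lpnormu{\gk{k}-q_k}{2}^{2},
\]
and the hypothesis $\lpnormu{\gk{k}-q_k}{2}\le\epsilon_k$ yields the claimed bound after taking square roots.

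The calculation is essentially algebraic once the correct factorisations are identified; no inequalities are invoked beyond the FTT accuracy hypothesis itself. The only delicate step is the bookkeeping in the change of variables: one must track simultaneously the weighting $\lambda(\bx)$, the Jacobian formula from Corollary~\ref{coro:changek}, and the explicit form of $\hat{f}_{\hat{\bX}^{k-1}}$ to verify that every density and reference-weight factor cancels cleanly, leaving only the prefactor $\bar{z}_{k-1}$. Remark~\ref{remark:dirt_bound_ratio} ensures the relevant ratios remain finite, so the change of variables is legitimate.
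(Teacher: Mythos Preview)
Your proposal is correct and takes essentially the same approach as the paper: factor $\sqrt{\pi_k}-\bar g_k$ through $\bar g_{k-1}$ using the recursive form \eqref{eq:un_densityk} and the definition of $q_k$, then change variables via Corollary~\ref{coro:changek} to obtain the exact identity $\lpnormx{\sqrt{\pi_k}-\bar g_k}{2}^2 = \bar z_{k-1}\,\lpnormu{q_k-\gk{k}}{2}^2$. Your observation that the argument yields an equality (the only inequality being the FTT hypothesis itself) is exactly what the paper's computation shows as well.
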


\begin{proof}
We start with the identity
\begin{align*}
\lpnormx{\sqrt{\pi_k}  - \sqrt{\hat\pi_k} }{2} & = \bigg(\int_\mathcal{X} \bigg( \Big(\frac{\pi_{k}(\bx)}{\hat\pi_{k-1}(\bx)}\Big)^\frac12 - \Big(\frac{\hat\pi_{k}(\bx)}{\hat\pi_{k-1}(\bx)}\Big)^\frac12 \bigg)^2 \hat\pi_{k-1}(\bx) \,\lambda(\bx)\,d\bx\bigg)^\frac12 \nonumber\\
& = \sqrt{\bar{z}_{k-1} } \bigg( \int_\mathcal{X} \bigg( \Big(\frac{\pi_{k}(\bx)}{\hat\pi_{k-1}(\bx)}\Big)^\frac12 - \Big(\frac{\hat\pi_{k}(\bx)}{\hat\pi_{k-1}(\bx)}\Big)^\frac12 \bigg)^2 \apdfx{k-1}(\bx)\,d\bx \bigg)^\frac12 .
%\label{eq:theorem_exact_1}
\end{align*}
Applying the change of variables $\bu = \cmap{k-1}^{-1}(\bx)$ and Equations \eqref{eq:un_densityk} and \eqref{eq:qk_exact}, we have
\begin{align*}
\frac{\pi_{k}(\bx)}{\hat\pi_{k-1}(\bx)} & = \frac{(\pi_{k}\circ \cmap{k-1})(\bu)}{(\hat\pi_{k-1}\circ \cmap{k-1})(\bu)} = \frac{ q_k (\bu)^2 \,\omega (\bu)}{\pref(\bu)},\\
\frac{\hat\pi_{k}(\bx)}{\hat\pi_{k-1}(\bx)}  & = \frac{(\hat\pi_{k}\circ \cmap{k-1})(\bu)}{(\hat\pi_{k-1}\circ \cmap{k-1})(\bu)} =\frac{\big(\gamma_k + \gk{k}(\bu)^2\big)\,  \omega (\bu)}{\pref(\bu)}.
\end{align*}
In addition, Corollary \ref{coro:changek} implies 
\(
\apdfx{k-1}(\bx)\,d\bx = \pref(\bu)\,d\bu.
\)
Thus, we have
\begin{align*}
\lpnormx{\sqrt{\pi_k}  - \sqrt{\hat\pi_k} }{2} & = \sqrt{\bar{z}_{k-1}} \Big( \int_\mathcal{U} \Big( q_k (\bu) - \big(\gamma_k + \gk{k}(\bu)^2\big)^\frac12 \Big)^2 \omega(\bu) d\bu \Big)^\frac12 \\
& \leq \sqrt{\bar{z}_{k-1}} \Big( \int_\mathcal{U} \Big( q_k (\bu) - \gk{k}(\bu) \Big)^2 \omega(\bu) d\bu + \gamma_k \,\omega(\mathcal{U}) \Big)^\frac12 \\
& \leq \sqrt{2\,\bar{z}_{k-1}} \epsilon_k,
\end{align*}
where the second last inequality follows from the same proof of Proposition \ref{prop:sirt_l2}.
\end{proof}

%%%%%%%%%%%%%%%%%%%%%%%%%%%%%%%%%%%%%%%%%%%%%%%%%%%%%%%%%%%%%%%%%%%%%%%%%%%%

\begin{theorem}[Approximate ratio approach]\label{thm:dirt_approx}
Suppose the initial TT decomposition $\gk{0} \approx \sqrt{\pi_0}$ and the constant $\gamma_0$ satisfy
\[
\lpnormu{\gk{0} - \sqrt{\pi_0}}{2} \leq \epsilon_0 \quad {\rm and} \quad \gamma_0 \leq \frac{1}{\lambda(\mathcal{X})} \lpnormu{\gk{0} - \sqrt{\pi_0} }{2}^2,
\]
respectively. At the $k$-th DIRT step ($k>0$), suppose further the TT decompositions $\gk{j} \approx \tilde{q}_{j}$  and the constants $\gamma_j$ satisfy
\begin{align}
\lpnormu{\gk{j} - \tilde{q}_j}{2} \leq \epsilon_j \quad \textrm{and} \quad
\gamma_j \leq \frac{1}{\omega(\mathcal{U})} \lpnormu{\gk{j} - \tilde{q}_j }{2}^2, \;\; \textrm{for} \;\; j = 1, \ldots, k,
\end{align}
respectively, where $\tilde{q}_{j}$ is defined in \eqref{eq:qk_approx}.
Then the unnormalised PDF of DIRT defined by \eqref{eq:un_densityk} approximates the $k$-th unnormalised bridging density function with the error
\begin{align*}
\lpnormx{\sqrt{\pi_k}  - \textstyle\sqrt{\hat\pi_k} }{2} \leq \sqrt{2\,c_{k,0}} \epsilon_0 +  \sum_{j=1}^{k} \sqrt{2\,c_{k,j}\bar{z}_{j-1}} \epsilon_j,
\end{align*} 
where $c_{k,j} = \sup_{\bx\in\mathcal{X}} \rk{k}{j}(x)$ is given in Assumption \ref{assum:ratio}, and $\bar{z}_{k} = \prod_{j=0}^{k} \hat{z}_j$.
\end{theorem}
% %

\begin{proof}
The difference between $\sqrt{\pi_k}$ and $\sqrt{\hat\pi_k}$ can be written as
\begin{align*}
\sqrt{\pi_k} - \textstyle\sqrt{\hat\pi_k} 
= & \textstyle \left(\sqrt{\frac{\pi_k}{\pi_0}\pi_0}  - \sqrt{\frac{\pi_k}{\pi_0} \hat\pi_0} \right) + \sum_{j=1}^{k} \Big( \sqrt{\frac{\pi_k}{\pi_j} \frac{\pi_j}{\pi_{j-1}} \hat\pi_{j-1} } - \sqrt{\frac{\pi_k}{\pi_j} \frac{\hat\pi_{j}}{\hat\pi_{j-1}} \hat\pi_{j-1} } \Big) \\
= & \textstyle \sqrt{\rk{k}{0}}\, (\sqrt{\pi_0} - \textstyle\sqrt{\hat\pi_0} ) +
\sum_{j=1}^{k}\sqrt{\rk{k}{j}} \Big(\sqrt{\rk{j}{j-1}} - \sqrt{\frac{\hat\pi_{j}}{\hat\pi_{j-1}}} \Big)  \sqrt{\hat\pi_{j-1}}  . %\\
\end{align*}
Then, we have $\lpnormx{\sqrt{\pi_k}  - \sqrt{\hat\pi_k} }{2} \leq I_0 + \sum_{j=1}^{k} I_j$,
where
\begin{align*}
I_0  = \lpnormx{\sqrt{\rk{k}{0}} \, \big(\sqrt{\pi_0} - \textstyle\sqrt{\hat\pi_0} \big) }{2} \;\; {\rm and} \;\;
I_j  = \lpnormx{\sqrt{\rk{k}{j}} \Big(\sqrt{\rk{j}{j-1}} - \textstyle\sqrt{\frac{\hat\pi_{j}}{\hat\pi_{j-1}}} \Big)  \sqrt{\hat\pi_{j-1}} }{2}.
\end{align*}
Recalling that $c_{k,j} = \sup_{\bx\in\mathcal{X}} \rk{k}{j}(x)$ and applying Proposition \ref{prop:sirt_l2}, we have
\[
I_0 \leq \sqrt{c_{k,0}} \,\lpnormx{\sqrt{\pi_0} - \textstyle\sqrt{ \gamma_0 + \gk{0}^2} }{2} \leq \sqrt{2\,c_{k,0}} \epsilon_0 .
\]
Applying the change of variables $\bu = \cmap{j-1}^{-1}(\bx)$ and Corollary \ref{coro:changek} for each $j>0$, we have
\[
\rk{j}{j-1}(\bx) = (\rk{j}{j-1} \circ  \cmap{j-1}\big)(\bu) = \frac{\tilde{q}_j(\bu)^2\, \omega (\bu) }{\pref(\bu) } ,\quad
\frac{\hat\pi_{j}(\bx)}{\hat\pi_{j-1}(\bx)} = \frac{ \big( \gamma_j + \gk{j}(\bu) ^2\big)  \, \omega (\bu) }{\pref(\bu) } 
\]
and $\apdfx{j-1}(\bx)\,d\bx = \pref(\bu)\,d\bu$.
This leads to
\begin{align*}
I_j & \leq \sqrt{c_{k,j}}  \bigg( \!\int_\mathcal{X}\! \bigg(\rk{j}{j-1}(\bx)^\frac12 - \Big(\frac{\hat\pi_{j}(\bx)}{\hat\pi_{j-1}(\bx)} \Big)^\frac12 \bigg)^2 \hat{\pi}_{j-1}(\bx)\,\lambda(\bx)\,d\bx  \bigg)^\frac12\\
& = \sqrt{c_{k,j} \bar{z}_{j-1}}  \bigg( \!\int_\mathcal{X}\! \bigg(\rk{j}{j-1}(\bx)^\frac12 - \Big(\frac{\hat\pi_{j}(\bx)}{\hat\pi_{j-1}(\bx)} \Big)^\frac12 \bigg)^2 \apdfx{j-1}(\bx)\,d\bx \bigg)^\frac12 \\
& = \sqrt{c_{k,j}\bar{z}_{j-1}} \bigg( \int_\mathcal{U} \Big( \tilde{q}_j(\bu) - \big( \gamma_j + \gk{j}(\bu) ^2\big)^\frac12 \Big)^2 \omega(\bu) d\bu \bigg)^\frac12 \nonumber \\
& \leq \sqrt{c_{k,j}\bar{z}_{j-1}} \bigg( \int_\mathcal{U} \Big( \tilde{q}_j(\bu) -  \gk{j}(\bu) \Big)^2 \omega(\bu) d\bu + \gamma_j \,\omega(\mathcal{U}) \bigg)^\frac12 \nonumber \\
& \leq \sqrt{2\,c_{k,j}\bar{z}_{j-1}} \epsilon_j.
\end{align*}
Thus, the result follows. 
\end{proof}

%%%%%%%%%%%%%%%%%%%%%%%%%%%%%%%%%%%%%%%%%%%%%%%%%%%%%%%%%%%%%%%%%%%%%%%%%%%%

\begin{remark}
At first glance, it appears that Theorem~\ref{thm:dirt_exact} gives smaller errors than Theorem~\ref{thm:dirt_approx}. However, this assumes that the two ratio functions in \eqref{eq:qk_exact} and \eqref{eq:qk_approx} are approximated with the \emph{same} TT error $\epsilon_k$.
Ideally this should also require the same number of degrees of freedom in TT cores.
In practice this may not be the case: the exact ratio~\eqref{eq:qk_exact_ratio} carries the previous approximation errors in the term $(\pi_k \circ \bar T_k)(\bu) \big/ ({\hat\pi}_k \circ \bar{T}_k)(\bu)$, which can have a complicated structure that is difficult to approximate in TT.
In contrast, the approximate ratio involves only the target densities. For example, if the bridging densities $\pi_k$ were introduced by tempering, the ratio $r_{k+1,k} = \pi^{\beta_{k+1}-\beta_{k}}$ is just another tempered density. For this reason, DIRT built by the approximate ratio approach may be more accurate in practice.
\end{remark}

\begin{corollary}
Given $\pi_k$ and $\hat\pi_k$ constructed using either the exact or the approximate ratio functions, we suppose the error of $\sqrt{\hat\pi_k}$ satisfies 
\[
\lpnormx{\sqrt{\pi_k}  - \textstyle\sqrt{\hat\pi_k} }{2} \leq e_k.
\]
Then, the Hellinger distance and the total variation distance between the $k$-th bridging measure $\nu_k$ and the pushforward  measure $(\cmap{k})_\sharp  \mref$ satisfy
\[
D_{\rm H}\big(\nu_k\| (\cmap{k})_\sharp  \mref \big) \leq \frac{\sqrt{2} e_k}{\sqrt{z_k} }\quad {\rm and} \quad D_{\rm TV}\big(\nu_k\| (\cmap{k})_\sharp  \mref \big) \leq  \frac{2 \, e_k}{ \sqrt{z_k} },
\]
respectively.
The $\chisq$-divergence of $\nu_k$ from $(\cmap{k})_\sharp  \mref$ satisfies 
\begin{equation*}
D_{\chisq}\big(\nu_k\| (\cmap{k})_\sharp  \mref \big) \leq \Big( \nu_k \big(\pi_k^2 \big/ \hat\pi_k^2\big)^\frac12 + \big((\cmap{k})_\sharp  \mref \big)\big(\pi_k^2 \big/ \hat\pi_k^2 \big)^\frac12 \Big)  \, \frac{2 \bar{z}_k}{z_k\sqrt{z_k}}\, e_k.
\end{equation*}
\end{corollary}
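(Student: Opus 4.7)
The plan is to observe that this corollary is essentially the DIRT analogue of the earlier SIRT results (Proposition \ref{prop:sirt_z}, Theorem \ref{thm:sirt_h}, Corollaries \ref{coro:sirt_tv}, \ref{coro:sirt_chi}) and that only the objects being compared need to be relabeled. The crucial structural observation, provided by Lemma \ref{lemma:densityk} combined with \eqref{eq:pdf_dirt}, is that the pushforward measure $(\cmap{k})_\sharp \mref$ has a PDF of exactly the same form as the SIRT approximation, namely
\begin{equation*}
\apdfx{k}(\bx) = \frac{1}{\bar{z}_k}\,\bar{g}_k^2(\bx)\,\lambda(\bx),
\end{equation*}
where $\bar{g}_k$ and $\bar{z}_k$ are as in \eqref{eq:un_densityk} and the surrounding text. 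Thus the situation is formally identical to SIRT with the substitutions $\tilde{g}\leftrightarrow \bar{g}_k$, $\pi\leftrightarrow \pi_k$, $z\leftrightarrow z_k$, $\hat{z}\leftrightarrow \bar{z}_k$, and $\epsilon\leftrightarrow e_k$.

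First I would replay the argument of Proposition \ref{prop:sirt_z} verbatim under this relabeling to obtain $|\sqrt{z_k}-\sqrt{\bar{z}_k}|\le e_k$. The proof only uses the triangle and H\"older inequalities together with the fact that $\bar{g}_k \geq 0$, so the hypothesis $\lpnormx{\sqrt{\pi_k}-\bar{g}_k}{2}\leq e_k$ (delivered by Theorem \ref{thm:dirt_exact} or Theorem \ref{thm:dirt_approx}) is exactly what is required. Next I would repeat the computation in the proof of Theorem \ref{thm:sirt_h} with the same substitutions: split $\sqrt{\pi_k}/\sqrt{z_k}-\bar{g}_k/\sqrt{\bar{z}_k}$ as $(\sqrt{\pi_k}-\bar{g}_k)/\sqrt{z_k} + \bar{g}_k(1-\sqrt{z_k/\bar{z}_k})/\sqrt{z_k}$, use the triangle inequality on the weighted $L^2$ norm, apply the normalising-constant bound, and collect terms to obtain the Hellinger bound $D_{\rm H}(\nu_k\|(\cmap{k})_\sharp\mref)\le \sqrt{2/z_k}\,e_k$.

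The TV statement then follows immediately from the general inequality $D_{\rm TV}\le \sqrt{2}\,D_{\rm H}$, precisely as in Corollary \ref{coro:sirt_tv}. For the $\chisq$-divergence bound, the proof of Corollary \ref{coro:sirt_chi} proceeds by rewriting
\begin{equation*}
D_{\chisq}(\nu_k\|(\cmap{k})_\sharp \mref) = \nu_k\!\left(\frac{f_{\bX^k}}{\apdfx{k}}\right) - ((\cmap{k})_\sharp \mref)\!\left(\frac{f_{\bX^k}}{\apdfx{k}}\right) = \frac{\bar{z}_k}{z_k}\,\Big(\nu_k(\pi_k/\bar{g}_k^2) - ((\cmap{k})_\sharp\mref)(\pi_k/\bar{g}_k^2)\Big),
\end{equation*}
and then applying Proposition \ref{prop:exp_hell} with the test function $h=\pi_k/\bar{g}_k^2$ together with the Hellinger bound already established. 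I would adopt this same chain of steps verbatim; the only technicality to verify is that the second moments of $\pi_k/\bar{g}_k^2$ under both measures are finite, which follows from the boundedness of this ratio discussed in Remark \ref{remark:dirt_bound_ratio} (i.e.\ the DIRT analogue of \eqref{eq:ratio_bound}).

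The main obstacle, if any, is purely notational rather than mathematical: one must be careful that the normalising constants $z_k$ and $\bar{z}_k$ correspond to the bridging density $\pi_k$ and its DIRT surrogate $\bar{g}_k^2$ respectively, and that the factor $\bar{z}_k/z_k$ in the $\chisq$ estimate, which takes the place of $\hat{z}/z$ in Corollary \ref{coro:sirt_chi}, is tracked correctly through the Radon--Nikodym derivative. Once this bookkeeping is in place, the three bounds follow with no new ideas beyond those already developed for the single-layer SIRT.
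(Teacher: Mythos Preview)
Your proposal is correct and follows essentially the same approach as the paper: the paper's proof simply states that the results follow directly from Theorem~\ref{thm:sirt_h}, Corollary~\ref{coro:sirt_tv}, and Corollary~\ref{coro:sirt_chi}, together with the boundedness condition $\sup_{\bx\in\mathcal{X}}\pi_k(\bx)/\bar{g}_k^2(\bx)<\infty$ from Remark~\ref{remark:dirt_bound_ratio} for the $\chisq$ bound. You have spelled out the relabeling $\tilde g\leftrightarrow\bar g_k$, $\pi\leftrightarrow\pi_k$, $z\leftrightarrow z_k$, $\hat z\leftrightarrow\bar z_k$, $\epsilon\leftrightarrow e_k$ more explicitly than the paper does, but the argument is the same.
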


\begin{proof}
The results can be obtained by applying the same proofs of Theorem \ref{thm:sirt_h}, Corollary \ref{coro:sirt_tv}, and Corollary \ref{coro:sirt_chi}, respectively. Note that for the $\chisq$-divergence, the condition $\sup_{x\in \mathcal{X}} \pi_k(\bx)/\hat\pi_k(\bx) < \infty$ holds as discussed in Remark \ref{remark:dirt_bound_ratio}.
\end{proof}

\section{Debiasing}\label{sec:sampling}
Applying either SIRT or DIRT, one can obtain an approximate map $T: \mathcal{U} \mapsto \mathcal{X}$ that enables the simulation of a random variable $\hat{\bX} \sim T_\sharp\, \mref$ approximating the target random variable $\bX \sim \mtar$.
In a situation where SIRT and DIRT have high accuracy in approximating the target measure, one can approximate the expectation $\mtar(h)$ of a function of interest $h(\bx)$ directly, using the expectation of $h(\bx)$ over $T_\sharp \,\mref$, i.e., $(T_\sharp\mref)(h) \equiv \mref(h\circ T)$. The bias of the approximated expectation is proportional to the Hellinger distance $D_{\rm H}( \mtar \|  T_\sharp \mref)$ (see Proposition~\ref{prop:exp_hell}).
In addition, we can apply the approximate inverse Rosenblatt transport $T$ within the Metropolis-Hastings method and importance sampling to reduce the bias in computing $\mtar(h)$.
For the sake of completeness, here we discuss some debiasing strategies based on existing work.

\begin{algorithm}[htb]
\centering
\caption{IRT-MCMC}
\label{alg:metro_ind}
\begin{algorithmic}[1]
 \State Choose an initial state $\bX^{(0)} = \bx^\ast$ for the Markov chain.
   \For{$j=1,2,\ldots,N$} 
     \State Draw $\bU \sim \mref$ and compute the proposal candidate $\hat{\bX} = T(\bU)$ .
     \State Given the previous state  of the Markov chain $\bX^{(j-1)} = \bx$, with probability
     \begin{equation}\label{eq:mh_acc}
		\alpha(\bx, \hat{\bX}) = \min\left[1, \frac{f_{\bX}(\hat{\bX}) \, \hat{f}_{\hat{\bX}}(\bx) }{f_{\bX}(\bx) \, \hat{f}_{\hat{\bX}}(\hat{\bX})}\right], 
     \end{equation}
     \quad\; accept $\hat{\bX}$ by setting $\bX^{(j)} = \hat{\bX}$, otherwise set $\bX^{(j)} = \bx$. 
     \State Evaluate the function $H_j = h(\bX^{(j)})$.
   \EndFor
   \State Estimate $\mtar(h)$ by the sample average $\frac{1}{N}\sum_{j=1}^{N} H_j$.
\end{algorithmic}
\end{algorithm}

We first consider the IRT-MCMC (Algorithm~\ref{alg:metro_ind}), in which the approximate IRT is used as a proposal mechanism in the Metropolised independent sampler for constructing a Markov chain of random variables that converges to the target measure.
In the acceptance probability \eqref{eq:mh_acc}, $f_{\bX}(\cdot)$ is the PDF of the target measure $\mtar$ and $\hat{f}_{\hat{\bX}}(\cdot)$ is the PDF of $T_\sharp \mref$ that is defined by either the SIRT \eqref{eq:pdf_sirt} or the DIRT \eqref{eq:pdf_dirt}.
Following the result of Mengersen and Tweedie \cite{mengersen1996rates}, the bounds discussed in Remarks \ref{remark:sirt_bound_ratio} and \ref{remark:dirt_bound_ratio} can guarantee the uniform ergodicity of the Markov chain constructed by Algorithm~\ref{alg:metro_ind}.
In addition, the average rejection probability is bounded by $2\,D_{\rm TV}(\mtar \| T_\sharp  \mref )$, see Lemma 1 of \cite{dafs-tt-bayes-2019}. This provides an indicator on the performance of the Metropolised independent sampler. However, our bound on $D_{\rm TV}(\mtar \| T_\sharp  \mref )$ does not directly connect to the bound on the convergence rate of the Metropolised independent sampler, in which the use of acceptance/rejection may require a more precise control on the pointwise error, e.g., $\lpnormx{\tilde{g} - \sqrt{\pi}}{\infty}$, to assess the convergence rate of the sampler.

\begin{algorithm}[htb]
\centering
\caption{IRT-IS}
\label{alg:is}
\begin{algorithmic}[1]
   \For{$j=1,2,\ldots,N$} 
     \State Draw $\bU^{(j)} \sim \mref$ and compute the approximate target random variable $\hat{\bX}^\lowersup{(j)} = T(\bU^{(j)})$ .
     \State Evaluate the unnormalised weight  \vspace{-0.5em}
     \[
     W_j = w(\hat{\bX}^\lowersup{(j)}) := \frac{\ptar(\hat{\bX}^\lowersup{(j)}) \,\lambda(\hat{\bX}^\lowersup{(j)})}{\hat{f}_{\hat{\bX}}(\hat{\bX}^\lowersup{(j)}) } \vspace{-1em}
     \] 
     \quad\, and the function $\hat{H}_j = h(\hat{\bX}^\lowersup{(j)})$.  
   \EndFor
   \State Estimate the normalising constant using $\bar{z}_N = \frac{1}{N}\sum_{j=1}^{N} W_j $.
   \State Compute the sample average $\bar{h}_N = \frac{1}{N}\sum_{j=1}^{N} W_j \hat{H}_j$.\vspace{0.2em}
   \State Estimate $\mtar(h)$ by the ratio estimator $I_N = \bar{h}_N/\bar{z}_N$.
\end{algorithmic}
\end{algorithm}

One can also employ the approximate IRT built by either the SIRT or the DIRT as the biasing distribution in importance sampling, which leads to the IRT-IS algorithm (Algorithm~\ref{alg:is}).
Compared to IRT-MCMC, IRT-IS generates random variables $\hat{\bX}^\lowersup{(j)}$ from the approximate IRT and correct the bias using the weights $W_j$.
By avoiding the Markov chain, importance sampling offers several advantages over the Metropolised independent sampler: ({\romannumeral 1}) it can be easily parallelised; and ({\romannumeral 2}) variance reduction techniques such as antithetic variable and control variates (see \cite[Chapter~4]{robert2013monte} and references therein) and efficient high-dimensional quadrature methods such as quasi Monte Carlo~\cite{Kuo-QMC-2013} can be naturally applied within importance sampling.

The error bounds established in Sections \ref{sec:sirt} and \ref{sec:dirt} offer insights into the efficiency of IRT-IS. As discussed in \cite[Chapter~9]{mcbook}, for $N$ approximate target random variables, one can use the effective sample size (ESS)
\[
{\rm ESS}(N) = N\,\frac{(T_\sharp \mref)(w)^2 }{(T_\sharp \mref)(w^2)},
\]
where $w(\bx) = \ptar(\bx)/\hat\pi(\bx)$ for SIRT and $w(\bx) = \ptar(\bx)/\hat\pi_L(\bx)$ for DIRT, to measure the efficiency of importance sampling for representing the target measure $\mtar$.
\begin{lemma}
Given the $\chisq$-divergence of $\mtar$ from $T_\sharp \,\mref$, the ESS of Algorithm~\ref{alg:is} satisfies
\[
{\rm ESS}(N) = \frac{N}{1 + D_{\chisq}(\mtar\| T_\sharp\mref) }.
\]
\end{lemma}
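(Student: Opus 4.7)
The plan is to compute the two moments $(T_\sharp\mref)(w)$ and $(T_\sharp\mref)(w^2)$ of the importance weight under the proposal measure, and then recognise the ratio as a standard identity for the $\chisq$-divergence.

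First I would compute the first moment. Since $w(\bx) = \ptar(\bx)\lambda(\bx)/\hat{f}_{\hat{\bX}}(\bx)$ and $\hat{f}_{\hat\bX}$ is the density of $T_\sharp\mref$ with respect to the Lebesgue measure, a direct change of variables gives
\begin{equation*}
(T_\sharp\mref)(w) = \int_{\mathcal X} \frac{\ptar(\bx)\lambda(\bx)}{\hat f_{\hat\bX}(\bx)}\,\hat f_{\hat\bX}(\bx)\,d\bx = \int_{\mathcal X}\ptar(\bx)\lambda(\bx)\,d\bx = z,
\end{equation*}
the (possibly unknown) normalising constant of $\ptar$. Hence $(T_\sharp\mref)(w)^2 = z^2$.

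Next I would relate the second moment to the $\chisq$-divergence. Using $f_{\bX}(\bx) = \ptar(\bx)\lambda(\bx)/z$ and the standard identity $D_{\chisq}(P\|Q) = \int (dP/dQ)^2\,dQ - 1$, valid whenever $P\ll Q$, I obtain
\begin{equation*}
D_{\chisq}\bigl(\mtar\,\|\,T_\sharp\mref\bigr) = \int_{\mathcal X}\frac{f_{\bX}^2(\bx)}{\hat f_{\hat\bX}(\bx)}\,d\bx - 1 = \frac{1}{z^2}\int_{\mathcal X}\frac{\ptar^2(\bx)\lambda^2(\bx)}{\hat f_{\hat\bX}(\bx)}\,d\bx - 1 = \frac{(T_\sharp\mref)(w^2)}{z^2} - 1,
\end{equation*}
so that $(T_\sharp\mref)(w^2) = z^2\bigl(1 + D_{\chisq}(\mtar\|T_\sharp\mref)\bigr)$. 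For this manipulation to be meaningful I need $\mtar \ll T_\sharp\mref$, which is ensured by the boundedness conditions in Remarks \ref{remark:sirt_bound_ratio} and \ref{remark:dirt_bound_ratio} (otherwise $D_{\chisq}$ is infinite and the claim holds trivially with ${\rm ESS}(N)=0$).

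Substituting both moments into the definition of ESS then yields
\begin{equation*}
{\rm ESS}(N) = N\,\frac{(T_\sharp\mref)(w)^2}{(T_\sharp\mref)(w^2)} = N\,\frac{z^2}{z^2\bigl(1+D_{\chisq}(\mtar\|T_\sharp\mref)\bigr)} = \frac{N}{1+D_{\chisq}(\mtar\|T_\sharp\mref)},
\end{equation*}
as claimed. The calculation is essentially a two-line exercise; there is no genuine obstacle, the only thing to watch is cancelling the unknown normalising constant $z$ cleanly by writing the weight in unnormalised form, and flagging the absolute-continuity assumption so the $\chisq$-divergence is well defined.
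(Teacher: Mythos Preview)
Your proof is correct and follows essentially the same approach as the paper: compute $(T_\sharp\mref)(w)=z$, rewrite $D_{\chisq}(\mtar\|T_\sharp\mref)$ as $(T_\sharp\mref)(w^2)/(T_\sharp\mref)(w)^2-1$, and substitute into the definition of ESS. The paper's version is just a terser statement of the same identity.
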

\begin{proof}
Since we have $(T_\sharp \mref) ( \ptar \lambda \big/ \hat{f}_{\hat{\bX}} ) = z$, where $z$ is the normalising constant of the target density, the $\chisq$-divergence of $\mtar$ from $T_\sharp \,\mref$ satisfies
\[
D_{\chisq}(\mtar\| T_\sharp\mref) = \frac{(T_\sharp \mref)\big(( \ptar \lambda \big/ \hat{f}_{\hat{\bX}})^2\big)}{\big((T_\sharp \mref) ( \ptar \lambda \big/ \hat{f}_{\hat{\bX}}) \big)^2 } -1 = \frac{(T_\sharp \mref) (w^2)}{(T_\sharp \mref)(w)^2} - 1,
\]
in which one can choose $w$ to be the ratio $\ptar \lambda \big/ \hat{f}_{\hat{\bX}}$ multiplied by any nonzero constant. Thus, the result follows.
\end{proof}

The bounds discussed in Remarks \ref{remark:sirt_bound_ratio} and \ref{remark:dirt_bound_ratio} imply that $\bar{z}_N$ and $\bar{h}_N$ computed by Algorithm~\ref{alg:is} are unbiased estimators for the normalising constant $z$ and the expectation $(T_\sharp \mref)(w\,h)$, respectively. However the ratio estimator $I_N$ is only asymptotically unbiased such that
\[
\mathbb{P}\big[ \lim_{N\rightarrow \infty} I_N = \mtar(h)\big] = 1.
\]
For a finite sample size, $N < \infty$, the ratio estimator $I_N$ is a biased estimator of $\mtar(h)$. 
However, for sufficiently large sample size $N$, one can apply the Delta method (see \cite[Chapter~2]{mcbook} and references therein) to show that the mean square error (MSE) of $I_N$ yields the approximation
\begin{equation}\label{eq:mse}
{\rm MSE}(I_N) \equiv \mathbb{E}\big[\big(I_N - \mtar(h)\big)^2\big] \approx \frac{1}{N} \frac{ (T_\sharp \mref) \big( w^2 \big( h - \mtar(h) \big)^2\big)} {(T_\sharp \mref)(w)^2} . %+\mathcal{O}(N^{-2}).
\end{equation}
Thus, for a sufficiently regular function $h$, the MSE of the ratio estimator $I_N$ can also be controlled by the $\chisq$-divergence $D_{\chisq}(\mtar\| T_\sharp\mref)$.

%!TEX root = ./sequential_TT.tex

\newcommand{\ness}{N / {\rm ESS}(N)}

\section{Numerical examples}\label{sec:examples}

We demonstrate the efficiency and various aspects of DIRT, which employs SIRT within each layer, using four Bayesian inference problems arising in dynamical systems and PDEs.
In all numerical examples, as efficiency measures of DIRT (or SIRT in the single layer case), we employ the estimated integrated autocorrelation time (IACT) for IRT-MCMC (Algorithm~\ref{alg:metro_ind}) and the ratio between the total number of samples and the estimated ESS, $\ness$, for IRT-IS (Algorithm~\ref{alg:is}). % 
For both IACT and $\ness$, a lower value indicates a better sampling efficiency. The minimum value of both IACT and $\ness$ is $1$.
Since $\ness = 1 + D_{\chisq}(\nu_k\| T_\sharp\mref)$, it also measures directly the accuracy of DIRT for approximating the posteriors.
Matlab implementation of IRT methods and numerical examples is available at \url{https://github.com/dolgov/TT-IRT}.

\subsection{Predator and prey}\label{sec:pp}

The predator-prey model is a system of coupled ODEs frequently used to describe the dynamics of biological systems. The populations of predator (denoted by $Q$) and prey (denoted by $P$) change over time according to a pair of ODEs
\begin{equation}\label{eq:pp}
\left\{\begin{array}{ll}
\displaystyle \frac{d P}{dt} & = \displaystyle r P \Big(1 - \frac{P}{K}\Big) - s \Big(\frac{PQ }{\alpha + P}\Big), \vspace{0.3em} \\
\displaystyle \frac{d Q}{dt} & = \displaystyle u \Big(\frac{PQ }{\alpha + P}\Big) - v Q, 
\end{array}\right.
\end{equation}
with initial conditions $P(t=0) = P_0$ and $Q(t=0) = Q_0$.
The dynamical system is controlled by several parameters.
In the absence of the predator, the population of the prey evolves according to the logistic equation characterised by $r$ and $K$. In the absence of the prey, the population of the predator decreases exponentially with a rate $v$. In addition, the two populations have a nonlinear interaction characterised by $\alpha$, $s$, and $u$.
We often do not know the initial populations and the parameters $r$, $K$, $\alpha$, $s$, $u$, and $v$. This way, we need to estimate unknowns
\[
\bx = \left[ P_0, Q_0, r, K, \alpha, s, u, v \right]^\top,
\]
from observed populations of the predator and prey at time instances $t_i$ for $i = 1, \ldots, n_T$.

\subsubsection{Posterior density}
Let $\by \in \mathbb{R}^{2n_T}$ denote the observed populations of the predator and prey.
We define a forward model $G: \mathcal{X}\mapsto \mathbb{R}^{2n_T}$ in the form of $G(\bx) = [P(t_i), Q(t_i)]_{i=1}^{n_T}$ to represent the populations of the predator and prey computed at $\{t_i\}_{i =1}^{n_T}$ for a given parameters $\bx$.
Assuming independent and identically distributed (i.i.d.) normal noise in the observed data and assigning a prior density $\pi_0(\bx)$ to the unknown parameter, one can define the unnormalized posterior density
\[
\ptar(\bx) \propto \exp\Big(-\frac{1}{2\sigma^2}\|G(\bx) - \by\|^2_2\Big)\,\pi_0(\bx),
\]
where $\sigma$ is the standard deviation of the normally distributed noise. 
Synthetic observed data are used in this example. With $n_T = 13$ time instances $t_{i} = (i-1)\times 25/6$ and a given parameter $\bx_{\rm true} = [50, 5, 0.6, 100, 1.2, 25, 0.5, 0.3]^\top$,
we generate synthetic noisy data $\by = \by_{\rm true} + \eta$, where $\eta$ is a realization of the i.i.d. zero mean normally distributed noise with the standard derivation $\sigma = \sqrt{2}$.
A uniform prior density
$
\pi_0(\bx) = \prod_{k = 1}^{8} \mathbb{I}_{[a_k,b_k]}(\bx_k)
$
is specified to restrict the support of $\bx_k$ to the interval $[a_k, b_k]$, where
$
a = [30, 3, 0.36, 60, 0.72, 15, 0.3, 0.18]^\top
$
and
$
b = [80, 8, 0.96, 160, 1.92, 40, 0.8, 0.48]^\top.
$
To illustrate the behaviour of the posterior density, we plot the kernel density estimates of the marginal posterior densities in Figure~\ref{fig:pp:kde}.
Note that some of the parameters are significantly correlated, which makes the posterior density function difficult to explore by both MCMC and a straightforward TT approximation.

\begin{figure}[htb]
\centering
\includegraphics[width=\linewidth]{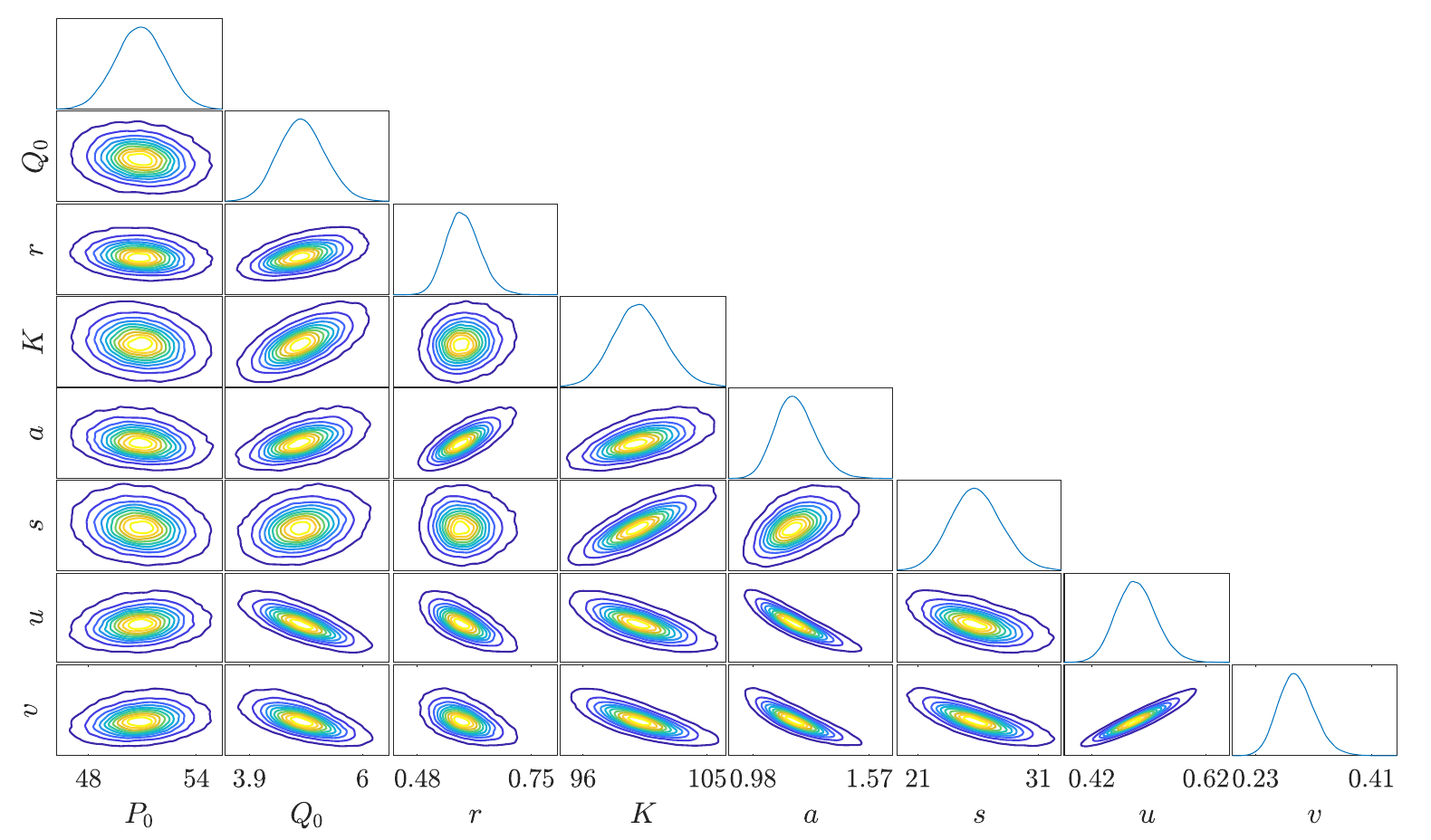}
\caption{Marginal posterior densities of the predator-prey model estimated from $10^6$ posterior samples.} \label{fig:pp:kde}
\end{figure}

\begin{figure}[htb]
%\centering
\begin{tikzpicture}
\begin{axis}[%
xlabel=\texttt{Rho},
ylabel=IACT,
ymin=1,ymax=11,
xmin=0.9,xmax=4.1,
legend style={at={(0.01,0.005)},anchor=south west,fill=none},
legend cell align={left},
title=(a) uniform reference,
width=0.38\linewidth,
height=0.33\linewidth,
]

\addplot+[error bars/.cd,y dir=both,y explicit] coordinates{
(1, 5.49496)+-(0, 2.79262)
(2, 3.76391)+-(0, 2.8392 )
(3, 4.96062)+-(0, 3.43749)
(4, 3.46202)+-(0, 1.51225)
}; %\addlegendentry{$\mathtt{MaxIt}=3$};

\addplot+[error bars/.cd,y dir=both,y explicit] coordinates{
(1, 9.59369)+-(0, 5.41685)
(2, 7.36183)+-(0, 3.82529)
(3, 8.77143)+-(0, 3.79918)
(4, 6.28285)+-(0, 1.92802)
}; %\addlegendentry{$\mathtt{MaxIt}=2$};
\end{axis}
\end{tikzpicture}
\begin{tikzpicture}
\begin{axis}[%
xlabel=\texttt{Rho},
ylabel=IACT,
ymin=1,ymax=11,
xmin=0.9,xmax=4.1,
legend style={at={(0.99,1.05)},anchor=north east,fill=none},
legend cell align={left},
title=(b) truncated normal ref.,
width=0.38\linewidth,
height=0.33\linewidth,
]

\addplot+[error bars/.cd,y dir=both,y explicit] coordinates{
(1, 3.08216)+-(0, 0.872035)
(2, 2.69924)+-(0, 0.490011)
(3, 3.17509)+-(0, 1.02337 )
(4, 2.81417)+-(0, 0.340211)
}; \addlegendentry{$\mathtt{MaxIt}=3$};

\addplot+[error bars/.cd,y dir=both,y explicit] coordinates{
(1, 2.63931)+-(0, 0.34377)
(2, 2.9153 )+-(0, 0.56250)
(3, 4.3565 )+-(0, 2.41603)
(4, 3.83012)+-(0, 1.39494)
}; \addlegendentry{$\mathtt{MaxIt}=2$};

\addplot+[mark=diamond*,mark options={mark size=3},error bars/.cd,y dir=both,y explicit] coordinates{
(1, 2.67797)+-(0, 0.527912)
(2, 2.74325)+-(0, 0.459087)
(3, 3.44049)+-(0, 1.64924 )
(4, 2.70956)+-(0, 0.365264)
}; \addlegendentry{$\mathtt{MaxIt}=1$};
\end{axis}
\end{tikzpicture}
\begin{tikzpicture}
\begin{axis}[%
title=(c) number of density eval.,
xlabel=\texttt{Rho},
ylabel=$\times 10^3$,
y label style={at={(0.13,0.95)},rotate=-90},
ymin=15,ymax=48,
xmin=0.9,xmax=4.1,
legend style={at={(1.05,1.05)},anchor=north east,fill=none},
legend cell align={left},
width=0.38\linewidth,
height=0.33\linewidth,
]

\addplot+[error bars/.cd,y dir=both,y explicit] coordinates{
(1, 42784/1e3)
(2, 37600/1e3)
(3, 33952/1e3)
(4, 31840/1e3)
}; %\addlegendentry{$\mathtt{MaxIt}=3$};

\addplot+[error bars/.cd,y dir=both,y explicit] coordinates{
(1, 30880/1e3)
(2, 29264/1e3)
(3, 28224/1e3)
(4, 27760/1e3)
}; %\addlegendentry{$\mathtt{MaxIt}=2$};

\addplot+[mark=diamond*,mark options={mark size=3},error bars/.cd,y dir=both,y explicit] coordinates{
(1, 16752/1e3)
(2, 17056/1e3)
(3, 17552/1e3)
(4, 18240/1e3)
}; %\addlegendentry{$\mathtt{MaxIt}=1$};
\end{axis}
\end{tikzpicture} 
\\

\begin{tikzpicture}
\begin{axis}[%
xlabel=\texttt{Rho},
ylabel=$\ness$,
ymin=1,ymax=11,
xmin=0.9,xmax=4.1,
legend style={at={(0.01,0.005)},anchor=south west,fill=none},
legend cell align={left},
title=(d) uniform reference,
width=0.38\linewidth,
height=0.33\linewidth,
]

\addplot+[error bars/.cd,y dir=both,y explicit] coordinates{
(1, 3.16609)+-(0, 2.05733 )
(2, 1.92633)+-(0, 0.963272)
(3, 2.05082)+-(0, 0.753237)
(4, 2.20485)+-(0, 1.29423 )
}; %\addlegendentry{$\mathtt{MaxIt}=3$};

\addplot+[error bars/.cd,y dir=both,y explicit] coordinates{
(1, 11.9378)+-(0, 14.1539)
(2, 5.51376)+-(0, 5.11776)
(3, 4.47826)+-(0, 2.5528 )
(4, 5.65129)+-(0, 3.77855)
}; %\addlegendentry{$\mathtt{MaxIt}=2$};
\end{axis}
\end{tikzpicture}
\begin{tikzpicture}
\begin{axis}[%
xlabel=\texttt{Rho},
ylabel=$\ness$,
ymin=1,ymax=11,
xmin=0.9,xmax=4.1,
legend style={at={(0.99,1.05)},anchor=north east,fill=none},
legend cell align={left},
title=(e) truncated normal ref.,
width=0.38\linewidth,
height=0.33\linewidth,
]
\addplot+[error bars/.cd,y dir=both,y explicit] coordinates{
(1, 2.16572)+-(0, 1.77117 )
(2, 1.94595)+-(0, 1.35591 )
(3, 1.72072)+-(0, 0.258144)
(4, 1.59266)+-(0, 0.129924)
}; % \addlegendentry{$\mathtt{MaxIt}=3$};

\addplot+[error bars/.cd,y dir=both,y explicit] coordinates{
(1, 1.48506)+-(0, 0.137407)
(2, 1.69311)+-(0, 0.331761)
(3, 2.55416)+-(0, 1.39255 )
(4, 2.59482)+-(0, 2.64939 )
}; % \addlegendentry{$\mathtt{MaxIt}=2$};

\addplot+[mark=diamond*,mark options={mark size=3},error bars/.cd,y dir=both,y explicit] coordinates{
(1, 1.47938)+-(0, 0.126075)
(2, 1.64249)+-(0, 0.294334)
(3, 1.9157 )+-(0, 0.686514)
(4, 1.57841)+-(0, 0.107509)
}; % \addlegendentry{$\mathtt{MaxIt}=1$};
\end{axis}
\end{tikzpicture}

\caption{(a): IACT$\pm$standard deviation over $10$ runs with the uniform reference measure. (b): IACT$\pm$standard deviation with the truncated normal reference measure. (c): Number of density evaluations in TT-cross at each layer. (d): $\ness\pm$standard deviation over $10$ runs with the uniform reference measure. (e): $\ness\pm$standard deviation with the truncated normal reference measure. Initial TT rank is adjusted such that the maximal TT rank is $13$ in all tests, whereas enrichment ranks \texttt{Rho} and numbers of TT-cross iterations \texttt{MaxIt} are varied.
}
\label{fig:pp:kick}
\end{figure}
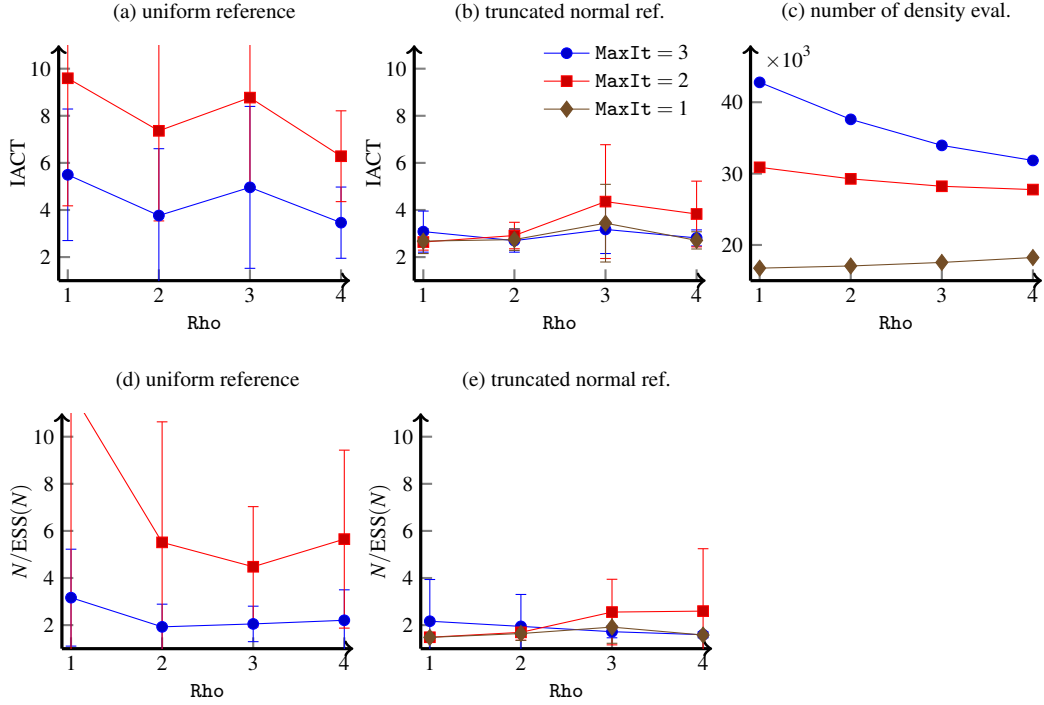

\begin{figure}[htb]
%\centering
\begin{tikzpicture}
\begin{axis}[%
xlabel=$\mathtt{R}_{\max}$,
xmin=8.7,xmax=16.1,
ymin=1,ymax=6.9,
title=(a) IACT,
width=0.4\linewidth,
height=0.33\linewidth,
]

\addplot+[error bars/.cd,y dir=both,y explicit] coordinates{
(3+9, 6.30951)+-(0, 3.65508 )
(4+9, 3.64766)+-(0, 2.11751 )
(5+9, 2.95461)+-(0, 1.24639 )
(6+9, 2.56037)+-(0, 0.611001)
(7+9, 2.18981)+-(0, 0.721095)
}; % \addlegendentry{\hspace{-2.5em}uniform};

\addplot+[error bars/.cd,y dir=both,y explicit] coordinates{
(9 , 6.30021)+-(0, 3.84185 )
(10, 5.02011)+-(0, 3.13215 )
(11, 3.44121)+-(0, 1.44929 )
(12, 2.99803)+-(0, 0.403069)
(13, 2.86486)+-(0, 1.10868 )
(14, 2.69505)+-(0, 0.24583 )
(15, 2.39004)+-(0, 0.275344)
(16, 2.38654)+-(0, 0.208617)
}; % \addlegendentry{trunc. normal};

\addplot+[mark=diamond*,mark options={mark size=3},error bars/.cd,y dir=both,y explicit] coordinates{
(9 , 16.38  )+-(0, 12.4156)
(10, 4.64453)+-(0, 1.7727 )
(11, 4.13841)+-(0, 0.9621 )
(12, 3.40321)+-(0, 1.1824 )
(13, 2.86638)+-(0, 0.9173 )
(14, 3.0299 )+-(0, 0.85412)
(15, 2.2677 )+-(0, 0.233  )
(16, 2.24754)+-(0, 0.285  )
}; % \addlegendentry{$\substack{\mbox{trunc. normal}\\\mbox{\hspace{-1.3em}exact ratio}}$};
\end{axis}
\end{tikzpicture}
\begin{tikzpicture}
\begin{axis}[%
xlabel=$\mathtt{R}_{\max}$,
%ylabel=IACT,
xmin=8.7,xmax=16.1,
ymin=1,ymax=6.9,
title=(b) $\ness$,
width=0.4\linewidth,
height=0.33\linewidth,
legend style={at={(0.99,0.99)},anchor=north east,fill=none},
legend cell align={left},
]
\addplot+[error bars/.cd,y dir=both,y explicit] coordinates{
(3+9, 2.83585)+-(0, 1.34011 )
(4+9, 1.67911)+-(0, 0.510679)
(5+9, 2.18784)+-(0, 2.00272 )
(6+9, 1.28077)+-(0, 0.13787 )
(7+9, 1.27141)+-(0, 0.157085)
}; \addlegendentry{uniform};

\addplot+[error bars/.cd,y dir=both,y explicit] coordinates{
(9 , 2.75371)+-(0, 1.75379 )
(10, 2.69817)+-(0, 2.14111 )
(11, 1.63366)+-(0, 0.208055)
(12, 1.79356)+-(0, 0.291916)
(13, 1.65096)+-(0, 0.665414)
(14, 1.57965)+-(0, 0.243106)
(15, 1.47021)+-(0, 0.12442 )
(16, 1.40903)+-(0, 0.046743)
}; \addlegendentry{trunc. normal};

\addplot+[mark=diamond*,mark options={mark size=3},error bars/.cd,y dir=both,y explicit] coordinates{
(9 , 6.41936)+-(0, 4.47445 )
(10, 6.30205)+-(0, 8.51012 )
(11, 3.12404)+-(0, 2.75038 )
(12, 1.8952 )+-(0, 0.414468)
(13, 1.77086)+-(0, 0.572376)
(14, 1.85824)+-(0, 0.644606)
(15, 1.43442)+-(0, 0.092144)
(16, 1.38684)+-(0, 0.097461)
};  \addlegendentry{$\substack{\mbox{trunc. normal}\\\mbox{\hspace{-1.3em}exact ratio}}$};
\end{axis}
\end{tikzpicture}
\begin{tikzpicture}
\begin{axis}[%
xlabel=$\mathtt{R}_{\max}$,
ylabel=$\times 10^3$,
y label style={at={(0.13,0.95)},rotate=-90},
ymin=5,ymax=58,
xmin=8.7,xmax=16.1,
legend style={at={(0.01,0.95)},anchor=north west,fill=none},
legend cell align={left},
title=(c) number of density eval.,
width=0.4\linewidth,
height=0.33\linewidth,
]

\addplot+[error bars/.cd,y dir=both,y explicit] coordinates{
(3+9, 28416/1e3)
(4+9, 33952/1e3)
(5+9, 40064/1e3)
(6+9, 46752/1e3)
(7+9, 54016/1e3)
}; %\addlegendentry{\hspace{-2.5em}uniform};

\addplot+[error bars/.cd,y dir=both,y explicit] coordinates{
(9 , 8064 /1e3)
(10, 9920 /1e3)
(11, 11968/1e3)
(12, 14208/1e3)
(13, 16640/1e3)
(14, 19264/1e3)
(15, 22080/1e3)
(16, 25088/1e3)
}; %\addlegendentry{trunc. normal};
\end{axis}
\end{tikzpicture}

\caption{IACT$\pm$standard deviation over 10 runs (a), $\ness\pm$standard deviation over 10 runs (b), and number of density evaluations in TT-cross at each layer (c) with varying maximum TT ranks $\mathtt{R}_{\max}$ and different reference measures. For the uniform reference, $\mathtt{MaxIt}=3$ and $\mathtt{Rho}=3$ are used. For the truncated normal reference, $\mathtt{MaxIt}=1$ and $\mathtt{Rho}=0$ are used.}
\label{fig:pp:Rmax}
\end{figure}

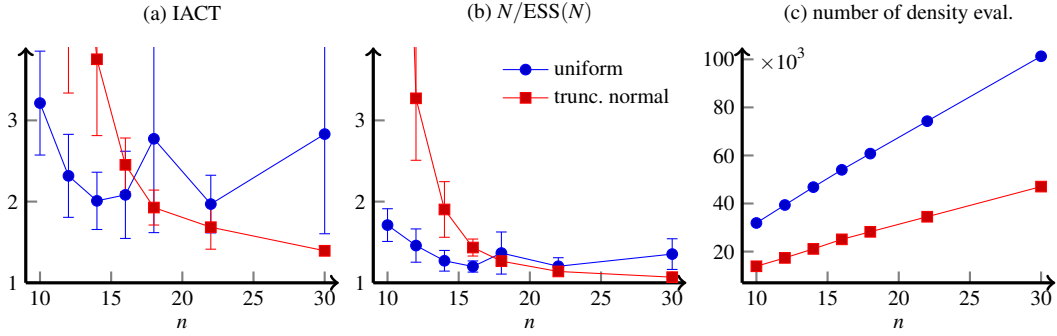
\begin{figure}[htb]
%\centering
\begin{tikzpicture}
\begin{axis}[%
xlabel=$n$,
xmin=9,xmax=31,
title=(a) IACT,
ymin=1,ymax=3.9,
width=0.4\linewidth,
height=0.33\linewidth,
]

\addplot+[error bars/.cd,y dir=both,y explicit] coordinates{
(32-2, 2.8312 )+-(0, 1.22722 )
(24-2, 1.96943)+-(0, 0.35502 )
(20-2, 2.77324)+-(0, 1.15527 )
(18-2, 2.08316)+-(0, 0.536825)
(16-2, 2.00921)+-(0, 0.352602)
(14-2, 2.31719)+-(0, 0.51103 )
(12-2, 3.21303)+-(0, 0.639731)
}; % \addlegendentry{\hspace{-2.5em}uniform}; % Rmax=16

\addplot+[error bars/.cd,y dir=both,y explicit] coordinates{
(32-2, 1.39474)+-(0, 0.061643)
(24-2, 1.68473)+-(0, 0.27081 )
(20-2, 1.92663)+-(0, 0.215249)
(18-2, 2.45273)+-(0, 0.330426)
(16-2, 3.75278)+-(0, 0.940035)
(14-2, 7.39083)+-(0, 4.05416 )
(12-2, 21.5178)+-(0, 8.56583 )
}; % \addlegendentry{trunc. normal}; % Rmax=16
\end{axis}
\end{tikzpicture}
\begin{tikzpicture}
\begin{axis}[%
xlabel=$n$,
xmin=9,xmax=31,
title=(b) $\ness$,
ymin=1,ymax=3.9,
legend style={at={(0.99,0.99)},anchor=north east,fill=none},
legend cell align={left},
width=0.4\linewidth,
height=0.33\linewidth,
]

\addplot+[error bars/.cd,y dir=both,y explicit] coordinates{
(32-2, 1.35364)+-(0, 0.188768 )
(24-2, 1.20384)+-(0, 0.105071 )
(20-2, 1.36661)+-(0, 0.25922  )
(18-2, 1.20055)+-(0, 0.0686949)
(16-2, 1.27138)+-(0, 0.127336 )
(14-2, 1.45867)+-(0, 0.204943 )
(12-2, 1.71075)+-(0, 0.201387 )
}; \addlegendentry{uniform}; % Rmax=16

\addplot+[error bars/.cd,y dir=both,y explicit] coordinates{
(32-2, 1.06945)+-(0, 0.0120888)
(24-2, 1.13982)+-(0, 0.0334811)
(20-2, 1.26673)+-(0, 0.0571684)
(18-2, 1.43434)+-(0, 0.104595 )
(16-2, 1.9031 )+-(0, 0.342407 )
(14-2, 3.27159)+-(0, 0.762781 )
(12-2, 13.911 )+-(0, 4.96217  )
}; \addlegendentry{trunc. normal}; % Rmax=16
\end{axis}
\end{tikzpicture}
\begin{tikzpicture}
\begin{axis}[%
xlabel=$n$,
ylabel=$\times 10^3$,
y label style={at={(0.13,0.95)},rotate=-90},
ymin=7,ymax=105,
xmin=9,xmax=31,
legend style={at={(0.02,0.98)},anchor=north west,fill=none},
legend cell align={left},
title=(c) number of density eval.,
width=0.4\linewidth,
height=0.33\linewidth,
]

\addplot+[error bars/.cd,y dir=both,y explicit] coordinates{
(32-2, 101280/1e3)
(24-2, 74272 /1e3)
(20-2, 60768 /1e3)
(18-2, 54016 /1e3)
(16-2, 46816 /1e3)
(14-2, 39384 /1e3)
(12-2, 31900 /1e3)
};  %\addlegendentry{\hspace{-2.5em}uniform}; % Rmax=16

\addplot+[error bars/.cd,y dir=both,y explicit] coordinates{
(32-2, 47040/1e3)
(24-2, 34496/1e3)
(20-2, 28224/1e3)
(18-2, 25088/1e3)
(16-2, 21112/1e3)
(14-2, 17376/1e3)
(12-2, 13880/1e3)
}; %\addlegendentry{trunc. normal}; % Rmax=16
\end{axis}
\end{tikzpicture}

\caption{IACT$\pm$standard deviation over 10 runs (a), $\ness\pm$standard deviation over 10 runs (b), and number of density evaluations in TT-cross at each layer (c) for varying numbers of collocation points $n$ and different reference measures. For the uniform reference, $\mathtt{MaxIt}=3$ and $\mathtt{Rho}=3$ are used. For the truncated normal reference, $\mathtt{MaxIt}=1$ and $\mathtt{Rho}=0$ are used.}
\label{fig:pp:n}
\end{figure}

\subsubsection{Numerical results} 
We use $L = 8$ bridging measures in the construction of DIRT by tempering the unnormalised posterior density with $\pi_k(\bx) = \pi(\bx)^{\beta_k}$, starting from $\beta_0=10^{-4}$ and following by $\beta_{k+1} = \sqrt{10}\cdot \beta_k$. 
This way, $\beta_L=1$ gives the target probability density.
We consider two reference measures: the uniform reference measure $\muni$ and the truncated normal reference measure $\mref_{\rm TG}$ with the density 
\[
\pref(\bu) \propto \prod_{k=1}^{8} \mathbb{I}_{[-4, 4]} (\bu_k) \exp(-\|\bu_k\|_2^2/2).
\]
Note that at layer $0$, the ratio function is just the tempered density $\pi_0(\bx)$
in the original domain $\bx_k \in [a_k, b_k]$.
We employ the piecewise-linear basis functions with $n$ equally spaced interior collocation points for both reference measures.
In addition, we tune TT-cross (Algorithm~\ref{alg:tt-cross}) using three parameters: the initial TT rank \texttt{R0}, enrichment TT ranks $\rho_1=\cdots=\rho_{d-1}=\mathtt{Rho}$, and the maximum number of TT-cross iterations \texttt{MaxIt}.
Those define uniquely the maximum TT rank $\mathtt{R}_{\max} = \mathtt{R0} + \mathtt{Rho}\cdot \mathtt{MaxIt}$.

Firstly, we vary one tuning variable at a time and investigate its impact on the efficiency and computational cost of the DIRT. 
We take the number of posterior density function evaluations in TT-cross in each DIRT layer to measure the computational cost for building DIRT. 

In Figure~\ref{fig:pp:kick}, we vary the enrichment rank \texttt{Rho} and the number of TT-cross iterations \texttt{MaxIt}. 
The initial TT rank \texttt{R0} is adjusted such that the maximum TT rank is $13$ in all cases. We set the number of collocation points to be $n = 16$. All the DIRTs are constructed using the approximate ratio~\eqref{eq:qk_approx}. 
With each \texttt{Rho} and \texttt{MaxIt}, we repeat the IRT-MCMC and IRT-IS for 10 experiments and report the estimated mean and standard deviation of the efficiency indicators.
For the uniform reference (Figure~\ref{fig:pp:kick} (a) and (d)), carrying out $\texttt{MaxIt}=1$ iteration gives very inaccurate results with $\mbox{IACT}>10$ and $\ness > 10$. 
Increasing the number of TT-cross iterations for the uniform reference measure significantly improves the DIRT accuracy.
Since the ratio function varies considerably from layer to layer, TT-cross needs at least 3 iterations and a nontrivial enrichment to adapt the approximation to the new function. This comes at the expense of tripling the number of density evaluations, in addition to those needed to compute the enrichment, as shown in Figure~\ref{fig:pp:kick} (c).
In contrast, using the truncated normal reference measure (Figure~\ref{fig:pp:kick} (b) and (e)) can significantly improve the efficiency in this example. With only one TT-cross iteration, it can reduce the final IACT to below 4 and $\ness$ to below 3.
\begin{remark}
At levels $k > 0$ of the DIRT construction, the ratio functions may have a similar shape (see~Figure~\ref{fig:dirt-cartoon}). Thus, one can take the TT of the ratio function at the previous level $k>0$ as the initial guess for building TT at level $k+1$. This initialization provides good index sets in TT-cross,
such that only one TT-cross iteration is sufficient with the truncated normal reference measure.
\end{remark}

In Figure~\ref{fig:pp:Rmax}, we vary the maximum TT rank $\mathtt{R}_{\max}$. With the uniform reference, we set $\texttt{Rho}=3$ and $\texttt{MaxIt}=3$. 
With the truncated normal reference, we set $\texttt{Rho}=0$ and $\texttt{MaxIt}=1$, which makes the number of density evaluations equal to the number of degrees of freedom in the TT decomposition, $n_1 r_1 + \sum_{k=2}^{d-1} n_k r_{k-1} r_k + r_{d-1} n_d = (d-2)n\mathtt{R}_{\max}^2 + 2n\mathtt{R}_{\max}$.
We observe that the two reference measures give eventually comparable IACTs and ESSs with increasing $\mathtt{R}_{\max}$. However, the truncated normal reference achieves this with much fewer density evaluations.

In Figure~\ref{fig:pp:Rmax}, we compare also the approximate ratio~\eqref{eq:qk_approx} used in all experiments with the exact ratio~\eqref{eq:qk_exact}. The diamond shaped markers in Figure~\ref{fig:pp:Rmax} (a) and (b) show IACTs and ESSs obtained by the exact ratio approach.
In this example, it gives worse results with a larger IACT and $\ness$ for the truncated normal reference measure with lower $\mathtt{R}_{\max}$ values, and does not lead to any meaningful results for the uniform reference measure.

In Figure~\ref{fig:pp:n}, we vary the number of collocation points $n$ used in each dimension. The truncated normal reference starts with a larger error since $n=10$ points cannot resolve the rather large reference domain $[-4,4]$. With increasing $n$, the IACT obtained using the truncated normal reference decays rapidly.
In comparison, the IACT obtained using the uniform reference exhibits a spike and does not show rapid decay with increasing $n$. This may be caused by the boundary layers in the ratio function.
Similar trends are observed in the reported $\ness$.
Again, the truncated normal reference requires significantly fewer density evaluations to achieve the same level of accuracy compared to the uniform reference in this experiment.

Next, we benchmark DIRT with the truncated normal reference, $\mathtt{MaxIt}=1$, $\mathtt{Rho}=0$, $n=16$, and $\mathtt{R0}=\mathtt{R}_{\max}=13$ against other sampling algorithms, including the Delayed Rejection Adaptive Metropolis (DRAM)~\cite{Haario-DRAM-2006}, the Stein variational Newton (SVN)~\cite{detommaso-SVN-2018}, and the Hierarchical Invertible Neural Transport (HINT)~\cite{Detommaso-HINT-2019}.

DRAM is initialized with the covariance matrix $5\mI$, adaptation scale $2.4/\sqrt{d}$, adaptation interval $10$ and delayed rejection scale $2$. These parameters are commonly recommended in general case.

For this example, SVN is sensitive to the choice of the step size and to the initial distribution of particles. We choose the step size to be $2\cdot 10^{-2}$ and generate the initial particle set from the normal distribution $\mathcal{N}(\bx_{\rm true}, (2\cdot 10^{-2}\bx_{\rm true})^2)$, which gives a reasonable balance between the stability and the rate of convergence. We carry out $23$ Newton iterations in SVN to approach stationarity.

HINT is an autoregressive normalising flow estimator for the joint probability density $\pi(\by,\bx)$.
Since both the prior random variable $\bX$ with the density $\pi_0(\bx)$ and the noise random variable $\eta$ can be directly simulated, drawing samples from the joint distribution is easy.
One can first draw a sample $\bX$ from the prior, and then simulate the corresponding data sample $\bY = G(\bX) + \eta$ by generating a noise random variable $\eta$.
Drawn a set of independent and identically distributed samples $(\bY^{(i)},\bX^{(i)})_{i=1}^{N}$ from the joint measure, HINT computes a triangular invertible map $(\bu_Y, \bu_X ) = S_{\theta}(\by,\bx)$ from the (joint) target measure to the reference measure by minimizing the maximum likelihood
$$
L(\theta) = \sum_{i=1}^{N} \frac{1}{2}\|S_{\theta}(\bY^{(i)},\bX^{(i)}) \|_2^2 - \log |\nabla S_{\theta}(\bY^{(i)},\bX^{(i)})|
$$
over the parameter $\theta$ that defines the neural networks $S_\theta$.
Given observed data $\by$, this allows one to define a conditional map $\bx = T_{\theta}(\bu_X; \by) :=  \big(S_{\theta}^{\bX}\big)^{-1}(\by, \bu_X)$ that maps from the reference measure of $\bU_X$ to the posterior measure conditioned on data $\by$.

We simulate each method $M = 10$ times with $N$ samples produced in each simulation,
denoted by $\{\bx^{(\ell,j)}\}_{j=1}^{N}$, where $\ell=1,\ldots,M$ indexes the simulations.
For each simulation, we compute the empirical posterior covariance matrix $\mC^\ell = \frac{1}{N}\sum_{j=1}^{N} (\bx^{(\ell,j)} - \bar{\bx}^{\ell})(\bx^{(\ell,j)} - \bar{\bx}^{\ell})^\top$, where $\bar{\bx}^{\ell} = \frac{1}{N}\sum_{j=1}^{N} \bx^{(\ell,j)}$ is the empirical posterior mean.
Then, we use the average deviation of covariance matrices to benchmark the sampling performance of different sampling algorithms.
Here we employ the F{\"o}rstner--Moonen distance~\cite{forstner2003metric} over the cone of symmetric and positive definite (SPD) matrices,
\[
d_{\rm FM}(\mA, \mB) = \sum_{i=1}^{d} \ln^2\big(\lambda_i(\mA, \mB)\Big),
\]
where $\lambda_i(\mA, \mB)$ denotes the $i$-th generalised eigenvalue of the pair of SPD matrices $(\mA, \mB)$, to measure the deviation.
This way, averaging the F{\"o}rstner--Moonen distance between the $\ell$-th empirical covariance matrix and the average covariance matrix over all $M$ simulations,
\begin{equation}
 \mathcal{E}_C = \frac{1}{M} \sum_{\ell=1}^{M} d_{\rm FM}(\mC^\ell, \bar{\mC}), \quad \text{where} \quad \bar{\mC} = \frac{1}{M}\sum_{\ell=1}^{M} \mC^\ell,
 \label{eq:pp:err_C}
\end{equation}
provides an estimated deviation of empirical covariance matrices computed by a given algorithm.

In Figure~\ref{fig:pp:bench}, we plot the covariance deviations~\eqref{eq:pp:err_C} obtained by IRT-MCMC, DRAM, SVN and HINT versus the total number of target density function evaluations and the total CPU time needed by each algorithm.
Here the reported total numbers of density evaluations and CPU times include the construction of DIRT in each simulation experiment.
The 10 independent simulations are run in parallel on a workstation with a Intel Xeon E5-2640v4 CPU at 2.4GHz.
We can notice that DIRT produces estimated covariance matrices with smallest deviations in almost all tests.
Moreover, DIRT is computationally more efficient in terms of the CPU time, because the evaluation of DIRT can take advantage of vector instructions.

\begin{figure}[htb]
\centering
\begin{tikzpicture}
\begin{axis}[%
xlabel=$N_{total}$,
title=covariance error \eqref{eq:pp:err_C},
xmode=log,
ymode=log,
xmin=2e3,xmax=2e6,
legend style={at={(0,0)},anchor=south west,fill=none,font=\tiny},
legend cell align={left},
width=0.4\linewidth,
height=0.33\linewidth,
]

\addplot+[line width=1pt,error bars/.cd,y dir=both,y explicit] coordinates{
(16640*9 + 1e3, 0.246274   )+-(0, 0.0830641  )
(16640*9 + 1e4, 0.0186566  )+-(0, 0.00682567 )
(16640*9 + 1e5, 0.00184439 )+-(0, 0.000583848)
(16640*9 + 1e6, 0.000247861)+-(0, 8.00401e-05)
}; \addlegendentry{IRT-MCMC};

\addplot+[line width=1pt,error bars/.cd,y dir=both,y explicit] coordinates{
(18416.3    , 0.223314  )+-(0, 0.0720316 )
(187127     , 0.0197326 )+-(0, 0.00738233)
(1.87527e+06, 0.00368555)+-(0, 0.00236237)
}; \addlegendentry{DRAM};

\addplot+[line width=1pt,mark=diamond*,mark options={mark size=3pt},error bars/.cd,y dir=both,y explicit] coordinates{
(1e2*23, 1.32325 )+-(0, 1.52    )
(1e3*23, 0.159928)+-(0, 0.072684)
(1e4*23, 0.103588)+-(0, 0.128889)
}; \addlegendentry{SVN};

\addplot+[line width=1pt,mark=triangle*,mark options={mark size=3pt},error bars/.cd,y dir=both,y explicit] coordinates{
(1e6,1e0)
}; \addlegendentry{HINT$^*$};
\end{axis}
\end{tikzpicture}
\qquad
\begin{tikzpicture}
\begin{axis}[%
xlabel=CPU time (sec.),
title=covariance error \eqref{eq:pp:err_C},
xmode=log,
ymode=log,
xmin=9e0,xmax=5e3,
legend style={at={(0.99,0.7)},anchor=east,fill=none},
legend cell align={left},
width=0.4\linewidth,
height=0.33\linewidth,
]

\addplot+[line width=1pt,error bars/.cd,y dir=both,y explicit] coordinates{
(13.3531  +   0.13557, 0.246274   )+-(0, 0.0830641  )
(13.9874  +   1.25176, 0.0186566  )+-(0, 0.00682567 )
(13.8229  +   16.7719, 0.00184439 )+-(0, 0.000583848)
(13.5395  +   182.411, 0.000247861)+-(0, 8.00401e-05)
}; % \addlegendentry{DIRT};

\addplot+[line width=1pt,error bars/.cd,y dir=both,y explicit] coordinates{
(46.5561, 0.223314  )+-(0, 0.0720316 )
(469.654, 0.0197326 )+-(0, 0.00738233)
(4684.91, 0.00368555)+-(0, 0.00236237)
}; % \addlegendentry{DRAM};

\addplot+[line width=1pt,mark=diamond*,mark options={mark size=3pt},error bars/.cd,y dir=both,y explicit] coordinates{
(18.0352, 1.32325 )+-(0, 1.52    )
(188.971, 0.159928)+-(0, 0.072684)
(4278.12, 0.103588)+-(0, 0.128889)
}; % \addlegendentry{SVN};

\addplot+[line width=1pt,mark=triangle*,mark options={mark size=3pt},error bars/.cd,y dir=both,y explicit] coordinates{
(4e3,1e0)
};
\end{axis}
\end{tikzpicture}

\caption{Estimated deviation of empirical covariance matrices~\eqref{eq:pp:err_C} computed by IRT-MCMC, DRAM, SVN and HINT for different total numbers of density evaluations (left) and CPU times (right). {\footnotesize (* the marker in the figure is only indicative, and the actual results are larger)}}
\label{fig:pp:bench}
\end{figure}
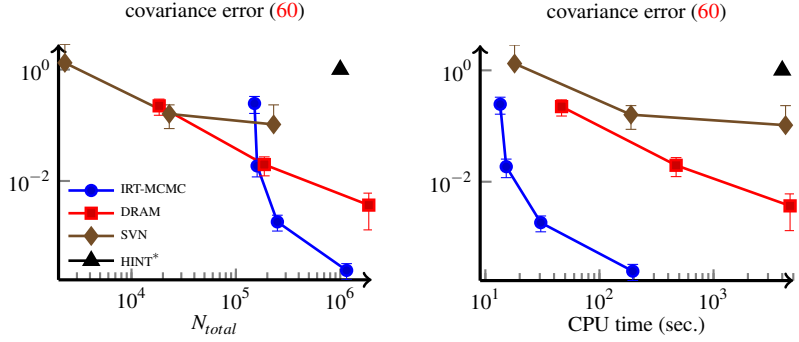

In this example, HINT gives the worst results since the joint density estimation from samples is a much higher dimensional problem compared to the posterior density approximation.
In particular, the dimension of $\by$ in the predator-prey model is $2n_T=26$, so the total dimensionality of the problem increases to $34$.
This required us to construct HINT networks with $8$ blocks containing $200 \times 200$ weights each.
This totalled to 2~691~308 trainable parameters in the entire HINT.
We trained the networks using 5~000~000 training samples for 50 epochs, consisting of 500 batches of 10~000 samples each.
The other (e.g. ADAM) parameters are left unchanged from~\cite{Detommaso-HINT-2019}.
The training took 5.8 hours on a NVidia GeForce GTX 1650 Max-Q GPU card.
To avoid disproportionate scaling of axes in Figure~\ref{fig:pp:bench} compared to other methods, we put just an indicative marker for HINT.
The actual error~\eqref{eq:pp:err_C} with 100~000 test samples taken directly from HINT was 16.7.
Using the test samples as proposals in the MCMC rejection against the exact posterior gives a rejection rate of 97\% and IACT of 127, and the covariance matrix computed from the rejected samples gives the F{\"o}rstner--Moonen distance of 0.1.
This indicates that the data-driven joint density estimation should be more applicable for lower-dimensional data,
whereas if one is only interested in the posterior, the function approximation methods seem to be a better choice.

\subsection{Lorenz-96}
This is a widely used benchmark model in atmospheric circulations. We consider a Lorenz-96 model that is specified by the system of ODEs
\begin{align}\label{eq:lorenz}
 \frac{dP_i(t)}{dt} & = (P_{i+1} - P_{i-2})P_{i-1} - P_i + 8,\quad \text{for} \quad i=1,\ldots,d,
\end{align}
with periodic boundary conditions
and an unknown initial condition $P_i(0) = \bx_i$ for $i=1,\ldots,d$. The state dimension is set to $d=40$.
Observing noisy states with even indices at the final time $T=0.1$, we aim to infer the initial state $\bx$ in this example.
This way, we have observed data $\by \in \mathbb{R}^{\frac{d}2}$ and can define a forward model $G: \mathcal{X} \mapsto \mathbb{R}^{d/2}$ in the form of $G(\bx) = [P_{2k}(T)]_{k=1}^{d/2}$ to represent simulated observables for a given initial condition $\bx$.

\begin{figure}[htb]
\centering
\begin{tikzpicture}
\begin{axis}[%
width=0.8\linewidth,
height=0.4\linewidth,
xlabel={coordinate},
ymin=-1,ymax=3,
xmin=1,xmax=40,
font=\small,
]

\addplot[no marks,blue,dashed,line width=2pt] coordinates{
(1,     0.9969)
(2,     0.8645)
(3,     0.9701)
(4,     0.9528)
(5,     0.9836)
(6,     0.8136)
(7,     0.9602)
(8,     0.8241)
(9,     0.9638)
(10,    0.9787)
(11,    0.9913)
(12,    0.9393)
(13,    0.9814)
(14,    1.1973)
(15,    1.0119)
(16,    1.1731)
(17,    1.0122)
(18,    1.0343)
(19,    0.9979)
(20,    1.1652)
(21,    1.0204)
(22,    0.9172)
(23,    0.9844)
(24,    1.0228)
(25,    0.9909)
(26,    0.9210)
(27,    0.9787)
(28,    1.0590)
(29,    0.9960)
(30,    0.9723)
(31,    0.9838)
(32,    1.1697)
(33,    1.0093)
(34,    0.8526)
(35,    0.9706)
(36,    0.9164)
(37,    0.9769)
(38,    1.0843)
(39,    1.0150)
(40,    1.0903)
};

\addplot[no marks,color=white,name path=minus] coordinates{
(1,    -0.9716)
(2,     0.3782)
(3,    -1.0037)
(4,     0.5649)
(5,    -0.9958)
(6,     0.4269)
(7,    -1.0154)
(8,     0.4416)
(9,    -1.0085)
(10,    0.5928)
(11,   -0.9900)
(12,    0.5530)
(13,   -0.9973)
(14,    0.8088)
(15,   -0.9647)
(16,    0.7811)
(17,   -0.9559)
(18,    0.6414)
(19,   -0.9802)
(20,    0.7760)
(21,   -0.9494)
(22,    0.5250)
(23,   -0.9902)
(24,    0.6365)
(25,   -0.9835)
(26,    0.5326)
(27,   -0.9981)
(28,    0.6713)
(29,   -0.9786)
(30,    0.5832)
(31,   -0.9953)
(32,    0.7823)
(33,   -0.9598)
(34,    0.4639)
(35,   -0.9922)
(36,    0.5319)
(37,   -1.0030)
(38,    0.6992)
(39,   -0.9423)
(40,    0.7019)
};

\addplot[no marks,color=white,name path=plus] coordinates{
(1,     2.9654)
(2,     1.3507)
(3,     2.9438)
(4,     1.3407)
(5,     2.9630)
(6,     1.2003)
(7,     2.9359)
(8,     1.2066)
(9,     2.9362)
(10,    1.3646)
(11,    2.9726)
(12,    1.3256)
(13,    2.9601)
(14,    1.5858)
(15,    2.9884)
(16,    1.5650)
(17,    2.9804)
(18,    1.4272)
(19,    2.9760)
(20,    1.5544)
(21,    2.9902)
(22,    1.3094)
(23,    2.9591)
(24,    1.4090)
(25,    2.9653)
(26,    1.3094)
(27,    2.9555)
(28,    1.4466)
(29,    2.9707)
(30,    1.3613)
(31,    2.9630)
(32,    1.5571)
(33,    2.9784)
(34,    1.2413)
(35,    2.9335)
(36,    1.3009)
(37,    2.9569)
(38,    1.4694)
(39,    2.9724)
(40,    1.4787)
};

\addplot[blue!16!white] fill between[of = minus and plus];

\addplot[no marks,red,solid,line width=1pt] coordinates{
(1,     0.9916)
(2,     1.0039)
(3,     0.9947)
(4,     0.9942)
(5,     0.9990)
(6,     0.9904)
(7,     1.0068)
(8,     1.0057)
(9,     1.0001)
(10,    0.9986)
(11,    1.0051)
(12,    0.9940)
(13,    1.0070)
(14,    0.9924)
(15,    0.9961)
(16,    1.0021)
(17,    1.0069)
(18,    0.9998)
(19,    1.0055)
(20,    0.9945)
(21,    1.0068)
(22,    1.0039)
(23,    0.9824)
(24,    0.9950)
(25,    1.0052)
(26,    0.9904)
(27,    1.0084)
(28,    1.0038)
(29,    0.9885)
(30,    0.9999)
(31,    1.0057)
(32,    1.0135)
(33,    1.0053)
(34,    0.9949)
(35,    1.0005)
(36,    1.0039)
(37,    1.0093)
(38,    1.0048)
(39,    1.0138)
(40,    0.9954)
};
\end{axis}
\end{tikzpicture}
\caption{Lorenz-96 model, true initial state (solid red) and posterior expectation, mean (blue dashed) $\pm$ 2 standard deviations (shaded area).}
\label{fig:lorenz}
\end{figure}
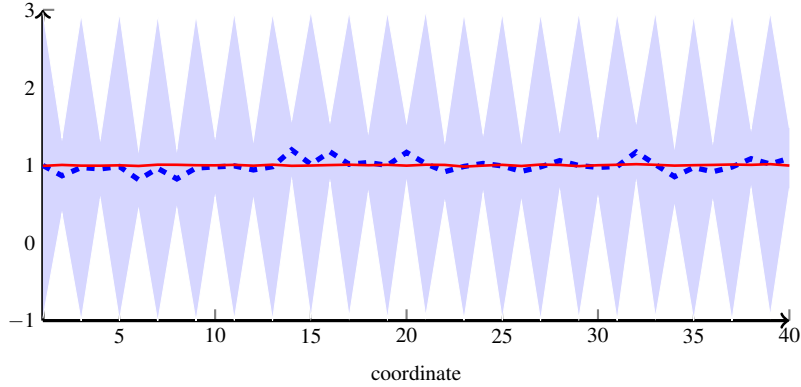

Assuming i.i.d. normal noise in the observed data and assigning a truncated normal prior density to the initial condition, we have the unnormalized posterior density
\[
 \ptar(\bx) = \exp\Big(-\frac{1}{2\sigma^2}\|G(\bx) - \by\|_2^2\Big)\, \prod_{k=1}^{d} \Big( \mathbb{I}_{[-10, 10]}(\bx_k) \exp\big(-\frac12 (\bx_k-1)^2\big)\Big).
\]
We use a synthetic data set $\by = G(\bx_{\rm true}) + \eta$, where $\bx_{\rm true}$ is drawn from $\sim\mathcal{N}(1, 10^{-4}\mI_{d})$,
and $\eta$ is a realisation of the i.i.d. zero mean normal noise with the standard deviation $\sigma = 10^{-1}$.

For the TT-cross approximations, we use the truncated normal reference measure on $[-3,3]^d$, piecewise linear basis functions with $n=15$ interior collocation points, $\texttt{MaxIt}=1$ TT-cross iteration, and TT ranks $\mathtt{R}_{\max}=15$.
DIRT is built with the tempered density
\[
 \ptar_k(\bx) = \exp\Big(-\frac{\beta_k}{2\sigma^2}\|G(\bx) - \by\|_2^2\Big) \cdot \prod_{k=1}^{d} \Big( \mathbb{I}_{[-10, 10]}(\bx_k) \exp\big(-\frac{\beta_k^{0.25}}2 (\bx_k-1)^2\big)\Big),
\]
with $\beta_0=10^{-4}$ and $\beta_{k+1}=\sqrt{10}\cdot \beta_k$. This way, we need $L=8$ layers to reach the posterior density. 
A weaker tempering of the prior is used to reduce its impact on the intermediate levels. This allows most of the intermediate DIRT levels to be used to bridge the more complicated likelihood.
This setup requires a total of $1.2 \times 10^6$ density evaluations in TT-cross at all layers, and provides an average ESS of $N/1.55$ in IRT-IS and an average IACT of $2.6$ in IRT-MCMC.

Using the posterior density, we can quantify the uncertainty of the inferred initial state and make predictions of the terminal state. The predicted initial state is shown in Figure~\ref{fig:lorenz}. Note that the chaotic regime of Lorenz-96 makes it difficult to predict the unobserved odd coordinates. Nevertheless, DIRT demonstrates high numerical and sampling efficiency in approximating this complicated posterior.

\subsection{Elliptic PDE}\label{sec:elliptic}
In the third example, we apply both SIRT and DIRT to the classical inverse problem governed by the stochastic diffusion equation
\begin{equation}
-\nabla \cdot \big(\kappa_d(s; \bx) \nabla u(s)\big) = 0 \quad \mbox{on} \quad s \in D:= (0,1)^2,
\label{eq:pdeproblem}
\end{equation}
with Dirichlet boundary conditions $u|_{s_1=0}=1$ and $u|_{s_1=1}=0$ on the left and right boundaries, and homogeneous Neumann conditions on other boundaries.
The goal is to infer the unknown diffusion coefficient $\kappa_d(s; \bx)$ from incomplete observations of the potential function $u(s)$. 
Here we adopt the same setup used in \cite{dafs-tt-bayes-2019,scheichl-qmc-bayes-2017}. 

\subsubsection{Posterior density}
The unknown diffusion coefficient $\kappa_d(s; \bx)$ 
is parametrized by a $d$-dimensional random variable $\bx$.
We take each of the parameters $\bx_k$, $k=1,\ldots,d,$ to be uniformly distributed on $[-\sqrt{3},\sqrt{3}]$.
Then, for any $\bx \in [-\sqrt{3},\sqrt{3}]^d$ and $s = (s_1,s_2) \in D$,  the logarithm of the diffusion coefficient at $s$ is defined by the following expansion
\begin{equation}
\ln \kappa_d(s; \bx) = \sum_{k=1}^{d} \bx_k \, \sqrt{\eta_k} \, \cos(2\pi \rho_1(k) s_1) \cos(2\pi \rho_2(k) s_2),
\label{eq:kle_art}
\end{equation}
where 
\begin{align*}
\eta_k = \frac{k^{-(\nu+1)}}{\sum_{k=1}^{d} k^{-(\nu+1)}}, \quad
\rho_1(k) = k - \frac{\tau(k)^2 + \tau(k)}{2},  \quad {\rm and} \quad
\rho_2(k) = \tau(k)-\rho_1(k),
\end{align*}
with $\tau(k) = \lfloor \frac12 (\sqrt{1+k/2}  - 1 )\rfloor$.
To discretise the PDE in \eqref{eq:pdeproblem}, we tessellate the spatial domain $D$ with a uniform Cartesian grid with mesh size $h$. Then, we replace the infinite dimensional solution $u \in V \equiv H^1(D)$ by the continuous, piecewise bilinear finite element (FE) approximation $u_h \in V_h$ associated with the discretisation grid. To find $u_h$, we solve the resulting Galerkin system using a sparse direct solver.
A fixed discretisation with $d=11$, $h=2^{-6}$, and $\nu=2$ is used in this example. 

The observed data $\by \in \mathbb{R}^{m}$ consist of $m$ local averages of the potential function $u(s)$ over subdomains $D_{i} \subset D$, $i=1,\ldots,m$.
To simulate the observable model outputs, we define the forward model $G^h:\mathcal{X}\mapsto \mathbb{R}^{m}$ with 
$$
G^h_i(\bx) = \frac{1}{|D_{i}|}\int_{D_i} u_h(s;\bx) ds, \quad i=1,\ldots,m\,.
$$
The subdomains $D_{i}$ are squares with side length $2/(\sqrt{m}+1)$ centred at the interior vertices of a uniform Cartesian grid on $D=[0,1]^2$ with grid size $1/(\sqrt{m}+1)$, which form an overlapping partition of $D$.
Synthetic data for these $m$ local averages are produced from the ``true'' parameter $\bx_{\rm true} = (1.5,\ldots,1.5)$ by adding i.i.d. zero mean normally distributed noise with the standard deviation $\sigma$.
This way, we have the unnormalized posterior density
$$
\ptar(\bx) = \exp\Big(-\frac1{{2\sigma^2}}\big\| G^h(\bx) - \by\big\|^2_2\Big) \, \prod_{k=1}^{d} \Big( \mathbb{I}_{[-\sqrt{3}, \sqrt{3}]}(\bx_k)\Big).
$$

\subsubsection{Numerical results}
In this example, we compare the impact of different tempering schemes, different numbers of measurements, and different measurement noise levels on DIRT. We also compare different basis functions used in the DIRT construction.
In all experiments, we feed $N=2^{16}$ independent samples generated by DIRT to both IRT-MCMC and IRT-IS. 

In Figure~\ref{fig:ell:ml}, we compare DIRT with three different tempering sequences $\beta = \left[\beta_0,\ldots,\beta_L\right]$,
varying the grid size $n$ and the TT ranks $\mathtt{R}_{\max}$.
Note that with $L=0$ we have the single-layer SIRT.
The reported number of density function evaluations is a sum of the numbers of evaluations in TT-cross at all layers.
We use the truncated normal reference measure on $[-4,4]^d$ with both piecewise linear and Fourier bases for the multilayer DIRT.

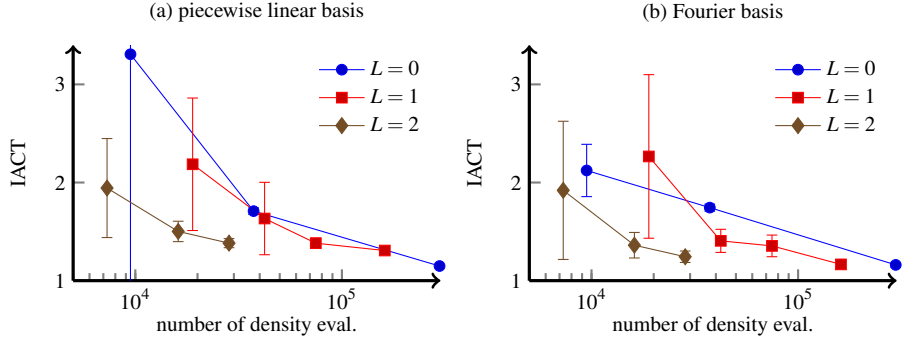
\begin{figure}[htb]
\centering
\begin{tikzpicture}
\begin{axis}[%
 xlabel=number of density eval.,
 ylabel=IACT,
 y label style={at={(-0.15,0.5)}},
 title=(a) piecewise linear basis,
 xmode=log,
 ymin=1,ymax=3.4,
 xmin=5e3,xmax=3e5,
 width=0.45\linewidth,
 height=0.33\linewidth,
 nodes near coords,point meta=explicit symbolic,
 every node near coord/.append style={anchor=south west,font=\small,color=black},
 ]
 \addplot+[error bars/.cd,y dir=both,y explicit] coordinates{
 (9472  , 3.30706)+-(0, 3.42258  )
 (37376 , 1.70847)+-(0, 0.0331986)
 (296960, 1.14902)+-(0, 0.0176813)
 }; \addlegendentry{$L=0$};

 \addplot+[error bars/.cd,y dir=both,y explicit] coordinates{
 (18944, 2.18601)+-(0, 0.675067 )
 (42240, 1.63324)+-(0, 0.369048 )
 (74752, 1.38177)+-(0, 0.051353 )
 (161280,1.30699)+-(0, 0.0284209)
 }; \addlegendentry{$L=1$};

 \addplot+[mark=diamond*,mark options={mark size=3pt},error bars/.cd,y dir=both,y explicit] coordinates{
 (7296,  1.94373)+-(0, 0.50446  )
 (16128, 1.50164)+-(0, 0.103885 )
 (28416, 1.38227)+-(0, 0.0464364)
 }; \addlegendentry{$L=2$};
\end{axis}
\end{tikzpicture}
\begin{tikzpicture}
\begin{axis}[%
 xlabel=number of density eval.,
 ylabel=IACT,
 y label style={at={(-0.15,0.5)}},
 title=(b) Fourier basis,
 xmode=log,
 ymin=1,ymax=3.4,
 xmin=5e3,xmax=3e5,
 width=0.45\linewidth,
 height=0.33\linewidth,
 nodes near coords,point meta=explicit symbolic,
 every node near coord/.append style={anchor=south west,font=\small,color=black},
 ]
 \addplot+[error bars/.cd,y dir=both,y explicit] coordinates{
 (9472  , 2.12292)+-(0, 0.266536 )
 (37376 , 1.74427)+-(0, 0.0356088)
 (296960, 1.1605 )+-(0, 0.0232985)
 }; \addlegendentry{$L=0$};

 \addplot+[error bars/.cd,y dir=both,y explicit] coordinates{
 (18944, 2.26577)+-(0, 0.833482 )
 (42240, 1.40564)+-(0, 0.117714 )
 (74752, 1.35416)+-(0, 0.110273 )
 (161280,1.16564)+-(0, 0.0472221)
 }; \addlegendentry{$L=1$};

 \addplot+[mark=diamond*,mark options={mark size=3pt},error bars/.cd,y dir=both,y explicit] coordinates{
 (7296,  1.92027)+-(0, 0.704082 )
 (16128, 1.36119)+-(0, 0.130157 )
 (28416, 1.24363)+-(0, 0.0582417)
 }; \addlegendentry{$L=2$};
\end{axis}
\end{tikzpicture}
\caption{Elliptic PDE with $\sigma^2=10^{-2}$ and $m=3^2$. IACT vs. number of density evaluations in DIRT for different number of layers: $L=0$ ($\beta=1$),  $L=1$ ($\beta=\{0.1,1\}$) and $L=2$ ($\beta=\{0.1,\sqrt{0.1},1\}$).
Note $\mathtt{R}_{\max}$ varying from 8 to 32 for $L=0$, but only from 4 to 8 for $L=2$.
}
\label{fig:ell:ml}
\end{figure}

With the noise variance $\sigma^2=10^{-2}$ and a rather small data size $m=3^2$, the posterior density is relatively simple to characterise, and hence can be tackled directly using the single-layer SIRT (see the case $L=0$ in Figure~\ref{fig:ell:ml} and \cite{dafs-tt-bayes-2019}). However, the multilayer DIRT uses much smaller number of collocation points and TT ranks for producing an approximate posterior density with the same accuracy.
Here the 3-layer DIRT needs only 10\% of the density evaluations required for the single-layer counterpart.

\begin{figure}[htb]
\centering
\begin{tikzpicture}
 \begin{axis}[%
    xmode=log,
    ymode=normal,
    xlabel=$m$,
    title=(a) numbers of density eval.,
    xmin=7,xmax=1100,
    ylabel=$\times 10^3$,
    y label style={at={(0.13,0.95)},rotate=-90},
    legend style={at={(0.01,0.99)},anchor=north west,fill=none},
    legend cell align={left},
    width=0.4\linewidth,
    height=0.33\linewidth,
    ]
  \addplot+[line width=2pt] coordinates{%
   (3^2 , 63360 /1e3)
   (5^2 , 84480 /1e3)
   (7^2 , 84480 /1e3)
   (11^2, 105600/1e3)
   (15^2, 105600/1e3)
   (23^2, 126720/1e3)
   (31^2, 126720/1e3)
   };
  \addplot+[line width=1pt] coordinates{%
   (3^2 , 63360 /1e3)
   (5^2 , 84480 /1e3)
   (7^2 , 84480 /1e3)
   (11^2, 105600/1e3)
   (15^2, 105600/1e3)
   (23^2, 126720/1e3)
   (31^2, 126720/1e3)
   };
 \end{axis}
\end{tikzpicture}
\hspace{-12pt}
\begin{tikzpicture}
 \begin{axis}[%
    xmode=log,
    ymode=normal,
    xlabel=$m$,
    title=(b) $\ness$,
    xmin=7,xmax=1100,
    width=0.4\linewidth,
    height=0.33\linewidth,
    legend style={at={(0.01,0.99)},anchor=north west,fill=none},
    legend cell align={left},
 ]
 \addplot+[line width=1pt,error bars/.cd,y dir=both,y explicit,] coordinates{
  (3 ^2, 1.06979)+-(0, 0.00132477)
  (5^2 , 1.14526)+-(0, 0.0025336 )
  (7 ^2, 1.16562)+-(0, 0.00716722)
  (11^2, 1.26355)+-(0, 0.00716935)
  (15^2, 1.30243)+-(0, 0.00949426)
  (23^2, 1.39299)+-(0, 0.00797188)
  (31^2, 1.43036)+-(0, 0.022603  )
 }; \addlegendentry{piecewise linear};
 \addplot+[line width=2pt,error bars/.cd,y dir=both,y explicit,] coordinates{
  (3 ^2, 1.00717)+-(0, 0.00193883)
  (5^2 , 1.01064)+-(0, 0.00222291)
  (7 ^2, 1.02153)+-(0, 0.00482665)
  (11^2, 1.02775)+-(0, 0.00772593)
  (15^2, 1.03095)+-(0, 0.00680918)
  (23^2, 1.02027)+-(0, 0.00288953)
  (31^2, 1.08657)+-(0, 0.0190964 )
 }; \addlegendentry{Fourier};

\end{axis}
\end{tikzpicture}
\hspace{-12pt}
\begin{tikzpicture}
 \begin{axis}[%
    xmode=log,
    ymode=normal,
    xlabel=$m$,
    title=(c) IACT,
    xmin=7,xmax=1100,
    width=0.4\linewidth,
    height=0.33\linewidth,
 ]
 \addplot+[line width=1pt,error bars/.cd,y dir=both,y explicit,] coordinates{
  (3 ^2, 1.32474)+-(0, 0.0181528)
  (5^2 , 1.51505)+-(0, 0.0260164)
  (7 ^2, 1.54861)+-(0, 0.0361176)
  (11^2, 1.7701 )+-(0, 0.0382799)
  (15^2, 1.81671)+-(0, 0.0599083)
  (23^2, 2.10944)+-(0, 0.0438052)
  (31^2, 2.18685)+-(0, 0.105047 )
 }; % \addlegendentry{Linear};
 \addplot+[line width=2pt,error bars/.cd,y dir=both,y explicit,] coordinates{
  (3 ^2, 1.09261)+-(0, 0.0148774)
  (5^2 , 1.11218)+-(0, 0.0178975)
  (7 ^2, 1.1838 )+-(0, 0.0447108)
  (11^2, 1.19464)+-(0, 0.0284256)
  (15^2, 1.18868)+-(0, 0.05525  )
  (23^2, 1.18967)+-(0, 0.0435337)
  (31^2, 1.42214)+-(0, 0.0793362)
 }; % \addlegendentry{Fourier};

\end{axis}
\end{tikzpicture}
\caption{Elliptic PDE with varying  numbers of measurements $m$. (a): Total Number of density evaluations in all layers; (b): reciprocal sample size; and (c): IACT. Tempering is carried out with $\beta_0 = 4^{-\lceil \log_4 m \rceil}$, $\beta_{k+1} = 4 \cdot \beta_k$. TT-cross parameters: $n=16$, $\mathtt{R}_{\max}=\mathtt{R}_0=12$, and $\mathtt{MaxIt}=1$.}
\label{fig:ell:m0}
\end{figure}
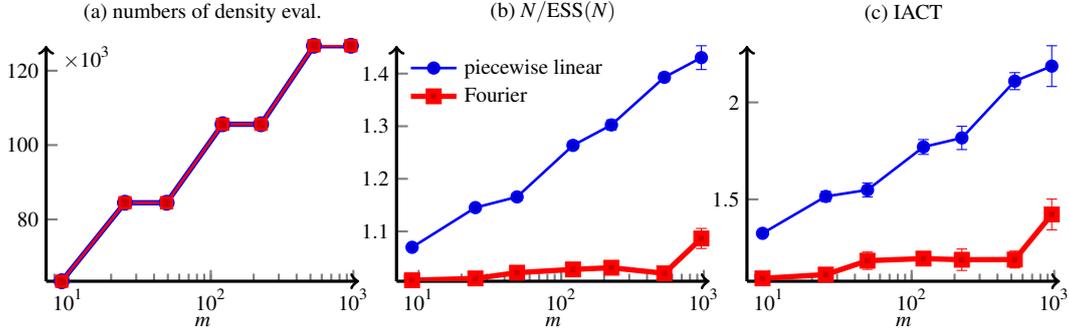

Next, we test the multilayer DIRT on more difficult posterior densities, with larger numbers of measurements and smaller observation noise. We set the number of collocation points to be $n=16$, maximum TT-cross iteration to be $\mathtt{MaxIt}=1$, and maximum TT rank to be $\mathtt{R}_{\max}=12$.
In Figure~\ref{fig:ell:m0}, we fix $\sigma^2 = 10^{-2}$ and vary the number of measurements.
Since halving the measurement grid size $2/(\sqrt{m}+1)$ corresponds to multiplying $m$ by approximately a factor of $4$, we use a different tempering strategy, starting with $\beta_0 = 4^{-\lceil \log_4 m \rceil}$, and setting $\beta_{k+1}=4\cdot\beta_k$ for next layers.
This way, the number of layers grows proportionally to $\log m$, and the number of density evaluations in TT-cross for fixed TT ranks is also proportional to $\log m$, which can be confirmed by Figure~\ref{fig:ell:m0} (a).
Here we can see that the Fourier basis is significantly more accurate than the piecewise-linear basis for the same grid size.
With the linear basis, both IACT and $\ness$ grow logarithmically in the number of measurements.
With the Fourier basis, the IACT stays almost constant below $1.5$ and the $\ness$ stays almost constant below $1.1$, increasing slightly only for the most difficult case with $m = 31^2$ (Figure~\ref{fig:ell:m0} (b) and (c)).
With increasing number of measurements, the likelihood becomes more concentrated. This makes it more challenging to characterise the posterior using prior-based approaches such as QMC~\cite{scheichl-qmc-bayes-2017} or single-layer TT approximation.
For example, even with a much larger number of collocation points $n=65$ and $5$ iterations of TT-cross (giving a maximal TT rank of $41$), we still can not produce reasonable results for $m=15^2$ with the single-layer SIRT.

We carry out an additional test with decreasing noise variance $\sigma^2$.
In Figure~\ref{fig:ell:sigma_n}, we fix $m=15^2$ and vary $\sigma^2$ from $10^{-1}$ to $10^{-5}$.
In this experiment, fixing TT ranks becomes insufficient for representing posterior densities with low observation noise.
In particular, the piecewise linear basis does not have sufficient accuracy for the case of the smallest noise variance.
In contrast, the Fourier basis can still retain low IACT and $\ness$ for low noise variance cases, where IACT and $\ness$ grow proportionally to $\log \sigma$.
Together with the log-scaling of the number of evaluations, the effective complexity of the entire IRT-MCMC and IRT-IS schemes becomes \emph{poly-logarithmic} in the variance.
Although the Fourier basis is computationally more expensive to evaluate than the piecewise-linear basis, with a factor of $2.5$ in the worst case scenario in this experiment, this additional computational effort is well compensated by a much higher accuracy. This makes DIRT a viable approach for a range of concentrated distributions.

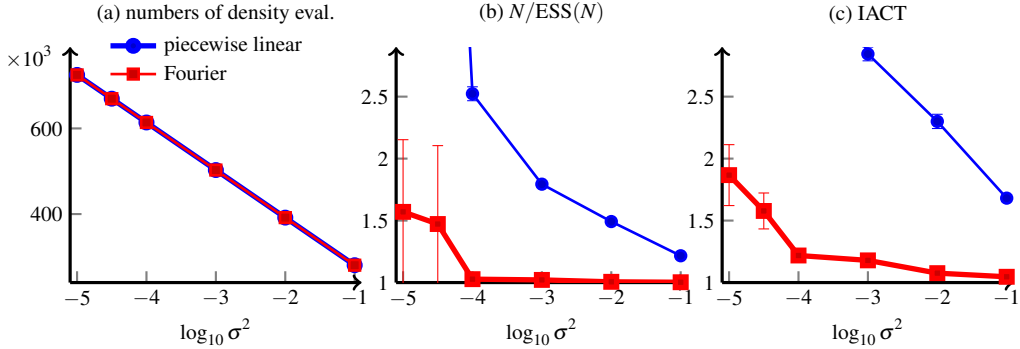
\begin{figure}[htb]
\centering
\begin{tikzpicture}
 \begin{axis}[%
    xmode=normal,
    ymode=normal,
    xlabel=$\log_{10}\sigma^2$,
    title=(a) numbers of density eval.,
    legend style={at={(0.1,1.09)},anchor=north west,fill=none},
    legend cell align={left},
    ymin=240,ymax=790,
    xmin=-5.1,xmax=-0.9,
    ylabel=$\times 10^3$,
    y label style={at={(-0.14,0.96)},rotate=-90},
    width=0.38\linewidth,
    height=0.33\linewidth,
    ]
  \addplot+[line width=2pt] coordinates{%
   (-1.0, 280576/1e3)
   (-2.0, 391680/1e3)
   (-3.0, 502784/1e3)
   (-4.0, 613888/1e3)
   (-4.5, 669440/1e3)
   (-5.0, 724992/1e3)
   }; \addlegendentry{piecewise linear};
  \addplot+[line width=1pt] coordinates{%
   (-1.0, 280576/1e3)
   (-2.0, 391680/1e3)
   (-3.0, 502784/1e3)
   (-4.0, 613888/1e3)
   (-4.5, 669440/1e3)
   (-5.0, 724992/1e3)
   };\addlegendentry{Fourier};
 \end{axis}
\end{tikzpicture}
\hspace{-12pt}
\begin{tikzpicture}
 \begin{axis}[%
    xmode=normal,
    ymode=normal,
    xlabel=$\log_{10}\sigma^2$,
    title=(b) $\ness$,
    ymin=1,ymax=2.9,
    xmin=-5.1,xmax=-0.9,
    width=0.38\linewidth,
    height=0.33\linewidth,
    legend style={at={(0.99,0.99)},anchor=north east,fill=none},
    legend cell align={left},
 ]

 \addplot+[line width=1pt,error bars/.cd,y dir=both,y explicit,] coordinates{
   (-1.0, 1.21617)+-(0, 0.00145605)
   (-2.0, 1.49223)+-(0, 0.00399947)
   (-3.0, 1.79368)+-(0, 0.0118506 )
   (-4.0, 2.52323)+-(0, 0.0561081 )
   (-4.5, 7.67189)+-(0, 2.51237   )
 };

 \addplot+[line width=2pt,error bars/.cd,y dir=both,y explicit,] coordinates{
   (-1.0, 1.00266)+-(0, 0.000282278)
   (-2.0, 1.00605)+-(0, 0.000496591)
   (-3.0, 1.02145)+-(0, 0.00899243 )
   (-4.0, 1.02746)+-(0, 0.00683004 )
   (-4.5, 1.4723 )+-(0, 0.632422   )
   (-5.0, 1.57029)+-(0, 0.582327   )
 }; 

\end{axis}
\end{tikzpicture}
\hspace{-12pt}
\begin{tikzpicture}
 \begin{axis}[%
    xmode=normal,
    ymode=normal,
    xlabel=$\log_{10}\sigma^2$,
    title=(c) IACT,
    ymin=1,ymax=2.9,
    xmin=-5.1,xmax=-0.9,
    width=0.38\linewidth,
    height=0.33\linewidth,
 ]
 \addplot+[line width=1pt,error bars/.cd,y dir=both,y explicit,] coordinates{
   (-1.0, 1.68102)+-(0, 0.0286865 )
   (-2.0, 2.30019)+-(0, 0.0574817 )
   (-3.0, 2.84429)+-(0, 0.0554225 )
   (-4.0, 4.28874)+-(0, 0.313905  )
   (-4.5, 25.2584)+-(0, 20.4251   )
 }; % \addlegendentry{Linear};
 \addplot+[line width=2pt,error bars/.cd,y dir=both,y explicit,] coordinates{
   (-1.0, 1.04664)+-(0, 0.0068179)
   (-2.0, 1.07601)+-(0, 0.0184154)
   (-3.0, 1.17903)+-(0, 0.0268432)
   (-4.0, 1.21841)+-(0, 0.0351191)
   (-4.5, 1.57773)+-(0, 0.144928 )
   (-5.0, 1.86681)+-(0, 0.245747 )
 }; % \addlegendentry{Fourier};

\end{axis}
\end{tikzpicture}
\caption{Elliptic PDE with varying noise variances~$\sigma^2$. (a): Total numbers of density evaluations in all layers; (b): reciprocal sample size; (c) IACT. Tempering is carried out with $\beta_0 = 0.1\sigma^2$, $\beta_{k+1} = \sqrt{10} \cdot \beta_k$. TT parameters: $n=16$, TT rank $20$, one TT-cross iteration.}
\label{fig:ell:sigma_n}
\end{figure}

\subsection{Parabolic PDE}\label{sec:heat}

In the fourth example, we consider an inverse problem of identifying the diffusion coefficient of a two-dimensional parabolic PDE from point observations of its solution.
In the problem domain $D = [0, 3]\times [0, 1]$, with boundary $\partial D$, we model the time-varying potential function $p(s,t)$ for given diffusion coefficient field $\kappa_d(s)$ and forcing function $f(s,t)$ using the heat equation
\begin{equation}\label{eq:heat}
\frac{\partial p(s,t)}{\partial t} = \nabla \cdot \left( \kappa_d(s; \bx) \nabla p(s,t) \right) + f(s,t), \quad s \in D, \; t \in [0, T],
\end{equation}
where $T = 10$. Parabolic PDEs of this type are widely used in modeling groundwater flow, optical diffusion tomography, the diffusion of thermal energy, and numerous other common scenarios for inverse problems.
Let $\partial D_{\text n} = \{ s \in \partial D \,|\, s_2 = 0\}  \cup  \{ s \in \partial D \,|\, s_2 = 1\}$ denote the top and bottom boundaries, and $\partial D_{\text d} = \{ s \in \partial \Omega \,|\, s_1 = 0\} \cup \{ s \in \partial \Omega \,|\, s_1 = 3\}$ denote the left and right boundaries. 
For $t \geq 0$, we impose the mixed boundary condition:
\[
p(s,t) = 0, \forall s \in \partial D_{\text d}, \quad \textrm{and} \quad (\kappa_d(s; \theta) \nabla p(s,t) ) \cdot \vec{n}(s) = 0, \forall x \in \partial D_{\text n},
\] 
where $\vec{n}(s)$ is the outward normal vector on the boundary. We also impose a zero initial condition, i.e., $p(s,0) = 0, \forall s \in D$, and let the potential field be driven by a time-invariant forcing function
\[
f(s, t) = c\,\Big( \exp\big(-\frac{1}{2 r^2} \| s - a\|^2 \big) - \exp\big(-\frac{1}{2 r^2} \| s - b\|^2 \big) \Big), \forall t \geq 0,
\]
with $r = 0.05$, which is the superposition of two normal-shaped sink/source terms centered at $a = (0.5, 0.5)$ and $b = (2.5, 0.5)$, scaled by a constant $c = 5\pi\times 10^{-5}$.

\subsubsection{Posterior density}
The logarithm of the diffusion coefficient, $\ln \kappa_d(s; \bx)$, is endowed with the process convolution prior \cite{higdon2002space},
\begin{equation}
\ln \kappa_d(s; \bx) = \ln \bar{\kappa} +  \sum_{k=1}^{d} \bx_k \exp\Big(-\frac12 \|s - s^{(k)}\|^2\Big),
\label{eq:gp_conv}
\end{equation}
where $d = 27$, $\ln \bar{\kappa} = -5$, each coefficient $\bx_k$ follows a standard normal prior $\mathcal{N}(0, 1)$ (which can be truncated to $[-5,5]$ with sufficient accuracy), and $s^{(k)}, k = 1, \ldots, d$ are centers of the kernel functions (shown as blue crosses in Figure \ref{fig:heat_setup} (a)).
Similarly to the previous example, the potential function $p(s,t)$ in \eqref{eq:heat} is approximated by $p_h(s,t)$ using the finite element method with piecewise bilinear basis functions and implicit Euler time integration.

\begin{figure}[htb]
\includegraphics[width=\linewidth]{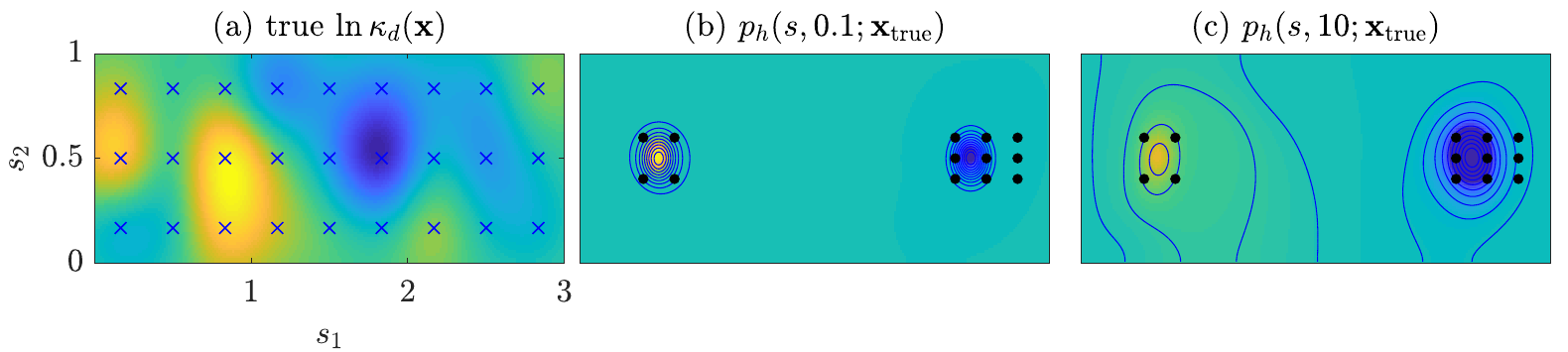}
\caption{Setup of the parabolic example. (a): Logarithm of the ``true'' diffusion coefficient and $d = 27$ centers of the process convolution prior (blue crosses); (b): the potential function $p_h(s,t; \bx_{\rm true})$ at $t = 0.1$, computed with $h = 1/80$; and (c): the potential function $p_h(s,t; \bx_{\rm true})$ at $t = 10$. Black dots in (b) and (c) are locations of measurements.}
\label{fig:heat_setup}
\end{figure}

The observed data $\by \in \mathbb{R}^{m\times n_T}$ consist of the time-varying potential function $p(s,t)$ measured at $m = 13$ locations (shown as black dots in Figure \ref{fig:heat_setup} (b) and (c)) at $n_T = 10$ discrete time points equally spaced between $t=1$ and $t=10$.
To simulate the observable model outputs, we define the forward model $G^h:\mathcal{X}\mapsto \mathbb{R}^{m\times n_T}$ with 
$$
G^h_{i,j}(\bx) = p_h(s_i, t_j;\bx), \quad i=1,\ldots,m, \quad j = 1, \ldots, n_T.
$$
Using a ``true'' parameter $\bx_{\rm true}$ drawn from the prior distribution and a forward model with $h = 1/80$, synthetic data $\by \in \mathbb{R}^{m\times n_T}$ are produced by adding i.i.d. normal noise with zero mean and the standard deviation $\sigma = 1.65\times 10^{-2}$ to $G^h(\bx_{\rm true})$. The corresponding $\ln \kappa_d(s; \bx_{\rm true})$ and the simulated potential function at several time snapshots are shown in Figure~\ref{fig:heat_setup}. The standard deviation $\sigma = 1.65\times 10^{-2}$ corresponds to a signal-to-noise ratio of $10$.
This way, we have the unnormalized posterior density
$$
\ptar(\bx) = \exp\Big(-\frac1{{2\sigma^2}}\big\| G^h(\bx) - \by\big\|_F^2\Big) \, \prod_{k=1}^{d}\Bigl(\mathbb{I}_{[-5,5]}(\bx_k) \exp \big( - \frac12 \bx_k^2\big)  \Bigr).
$$

\subsubsection{Numerical results}
To construct DIRT, we employ a geometric grading in $ \beta$, refining towards $1$,
\[
 \log_{10}\beta_k \in \{-5, -4, -3, -2.5, -2, -1.5, -1, -0.75, -0.5, -0.25, 0\}.
\]
The posterior is very concentrated in this example, so we employ separate tempering of prior and likelihood in the bridging densities,
\[
\ptar_k(\bx) = \exp\Big(-\frac{\beta_k}{{2\sigma^2}}\left\| G^h(\bx) - \by\right\|_F^2\Big) \, \, \prod_{k=1}^{d}\Bigl(\mathbb{I}_{[-5,5]}(\bx_k) \exp \big( - \frac{\beta_k^{0.01}}2  \bx_k^2 \big)  \Bigr).
\]
in which a weakly tempered prior is used.
We use a truncated normal reference measure on the domain $(-4,4]^d$ with the Fourier basis to build DIRT. In TT-cross, a maximum iteration $\mathtt{MaxIt}=1$ without enrichment ($\mathtt{Rho}=0$) is used.
The number of collocation points in each dimension is set to be $n=16$ and the TT ranks are chosen to be $\mathtt{R0}=\mathtt{R}_{\max}=\mathtt{R}_k$, where
\[
 \mathtt{R}_k \in \{15, 15, 15, 15, 15, 15, 13, 9, 9, 8, 7\}
\]
at the $k$-th layer of DIRT.

The PDE in \eqref{eq:heat} is computationally expensive to solve.
Here our goal is to explore the posterior density defined by a forward model, $G^{h_f}$, with refined grid size $h_f = 1/80$. A coarse forward model, $G^{h_c}$ with $h_c = 1/20$, and an intermediate forward model, $G^{h_m}$ with $h_m = 1/40$, are used in defining the bridge densities to speed-up the DIRT construction. 
This multilevel construction shares similarities with the multi-fidelity preconditioning strategy of \cite{peherstorfer2019transport}, except that DIRT is based on TT rather than optimisation and our multilevel models are blended into the bridging densities.
In numerical experiments, we consider the CPU time of solving the coarse model evaluation as one work unit. The CPU times for evaluating the intermediate model and the fine model are about $12.5$ work units and $160$ work units, respectively.

In the first experiment, we employ the coarse forward model, $G^{h_c}$, to compare the sampling performance of DIRT with that of DRAM.
The results are reported in Figure~\ref{fig:samples_heat} (a), where the number of independent samples is calculated as the length of the Markov chain divided by the estimated IACT.
The estimated IACTs for DRAM and DIRT are about $132$ and $3.04$, respectively, and the importance sampling with DIRT produces $\mbox{ESS}=N/1.5$.
For DRAM, we exclude the burn-in samples in the number of work units, whereas the number of work units for the DIRT includes the construction cost of DIRT (993392 density evaluations). 
In this experiment, despite the high construction cost, DIRT can generate a Markov chain with almost independent samples, which is significantly more efficient than DRAM. Furthermore, the construction cost of DIRT will be less significant if one needs to generate more posterior samples, as shown in Figure~\ref{fig:samples_heat}~(a).

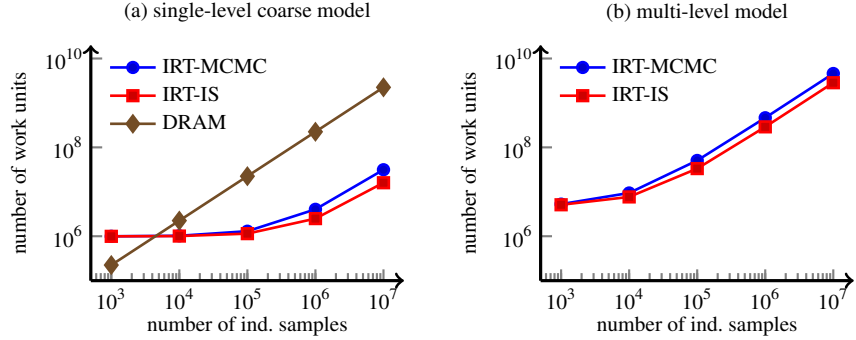
\begin{figure}[htb]
\centering
\begin{tikzpicture}
 \begin{axis}[%
    xmode=log,
    ymode=log,
    ylabel=number of work units,
    xlabel=number of ind. samples,
    title=(a) single-level coarse model,
    legend style={at={(0.03,0.99)},anchor=north west,fill=none},
    legend cell align={left},
    ymin=1e5,ymax=2e10,
    xmin=0.5e3,xmax=2e7,
    y label style={at={(-0.23,0.5)}},
    width=0.4\linewidth,
    height=0.33\linewidth,
    ]
  \addplot+[line width=1pt] coordinates{%
   (1e3, 993392+3.04e3)
   (1e4, 993392+3.04e4)
   (1e5, 993392+3.04e5)
   (1e6, 993392+3.04e6)
   (1e7, 993392+3.04e7)
   };  \addlegendentry{IRT-MCMC};
  \addplot+[line width=1pt] coordinates{%
   (1e3, 993392+1.5e3)
   (1e4, 993392+1.5e4)
   (1e5, 993392+1.5e5)
   (1e6, 993392+1.5e6)
   (1e7, 993392+1.5e7)
   };  \addlegendentry{IRT-IS};
  \addplot+[mark=diamond*,mark options={mark size=3pt},line width=1pt] coordinates{%
   (1e3, 224.4e3)
   (1e4, 224.4e4)
   (1e5, 224.4e5)
   (1e6, 224.4e6)
   (1e7, 224.4e7)
   };  \addlegendentry{DRAM};
 \end{axis}
\end{tikzpicture}
\qquad
\begin{tikzpicture}
 \begin{axis}[%
    xmode=log,
    ymode=log,
    ylabel=number of work units,
    xlabel=number of ind. samples,
    title=(b) multi-level model,
    legend style={at={(0.03,0.99)},anchor=north west,fill=none},
    legend cell align={left},
    ymin=1e5,ymax=2e10,
    xmin=0.5e3,xmax=2e7,
    y label style={at={(-0.23,0.5)}},
    width=0.4\linewidth,
    height=0.33\linewidth,
    ]
  \addplot+[line width=1pt] coordinates{%
   (1e3, 915024+58544*12.5+19824*160+160*2.87e3)
   (1e4, 915024+58544*12.5+19824*160+160*2.87e4)
   (1e5, 915024+58544*12.5+19824*160+160*2.87e5)
   (1e6, 915024+58544*12.5+19824*160+160*2.87e6)
   (1e7, 915024+58544*12.5+19824*160+160*2.87e7)
   };  \addlegendentry{IRT-MCMC};
  \addplot+[line width=1pt] coordinates{%
   (1e3, 915024+58544*12.5+19824*160+160*1.78e3)
   (1e4, 915024+58544*12.5+19824*160+160*1.78e4)
   (1e5, 915024+58544*12.5+19824*160+160*1.78e5)
   (1e6, 915024+58544*12.5+19824*160+160*1.78e6)
   (1e7, 915024+58544*12.5+19824*160+160*1.78e7)
   };  \addlegendentry{IRT-IS};
 \end{axis}
\end{tikzpicture}
\caption{Number of independent samples computed by IRT-MCMC, IRT-IS, and DRAM versus the total computational cost. (a): Comparison using a single level coarse model. (b) Comparison using the multilevel model in the DIRT construction. In both plots, the construction costs of IRT-MCMC and IRT-IS are included, whereas the burn-in cost of DRAM is not included. The work unit is the computational cost of one coarse model evaluation. }
\label{fig:samples_heat}
\end{figure}

In the second experiment, we demonstrate the construction of DIRT using not only the bridge densities with different temperatures, but also the forward models with different grid resolutions.
For initial temperatures such that $\beta_k<10^{-0.5}$, we use the coarse forward model $G^{h_c}$. For $\beta_k=10^{-0.5}$ and $\beta_k=10^{-0.25}$, we use the intermediate forward model $G^{h_m}$.
For $\beta_k=1$ we use the fine forward model $G^{h_f}$, so that the fine model is used to define the target posterior density.
We need $915024$, $58544$, and, $19824$ evaluations of the coarse, intermediate, and fine models, respectively, to construct DIRT. 
Once the DIRT is constructed, Algorithm~\ref{alg:metro_ind} generates a Markov chain with IACT $2.87$ that samples the posterior defined by the fine model. Again, the importance sampling is more efficient with $\mbox{ESS}=N/1.78$. The number of independent samples versus the number of work units is reported in Figure~\ref{fig:samples_heat}~(b).
In this experiment, it is computationally infeasible to apply DRAM directly (or any MCMC in general) to sample the posterior defined by the fine model.
In contrast, the evaluation of DIRT and the corresponding posterior densities
can be embarrassingly parallelised, which can further accelerate the posterior inference using high-performance computers.
The IRT-IS algorithm can bypass the construction of Markov chains, which makes it suitable to be integrated into multilevel Monte Carlo or multilevel quasi Monte Carlo estimators to improve the convergence rate of the computation of posterior expectations.
We leave this as a future research question.

\section{Conclusion}

We have enabled functional tensor decompositions of complicated and concentrated continuous probability density functions that suffer from impractically large tensor ranks when approximated directly. Instead, we build an adaptive sequential change of coordinates that drives the target density towards a product function. This change of variables is realised by the composition of order-preserving SIRTs computed from functional TT decompositions of ratios of bridging densities.
Each of the ratio functions recovers one scale of correlations of the target density, and hence it can be approximated with fixed TT ranks. Together with the triangular structure of the Rosenblatt transport, this makes the total complexity linear in the number of variables.

This deep composition of the inverse Rosenblatt transports shares similarities with deep neural networks with nonlinear activation functions. However, DIRT has several advantages.
\begin{itemize}
\item Each DIRT layer, defined by the bridging densities, can be associated with the scale of noise or observation function. Any prior knowledge of model hierarchies can improve the selection of bridging densities. In contrast, the influence of a particular fully-connected layer in a neural network is difficult to predict or understand.
\item DIRT layers can be computed independently. As soon as the layer is approximated up to the desired accuracy,
it can be saved and never recomputed again. This enables a simple interactive construction, where the tuning parameters can be set layer per layer. Neural networks require optimisation of all layers simultaneously.
\item The construction of each DIRT layer is powered by efficient TT-cross algorithms, which can converge much faster than the stochastic gradient descent used by neural networks in many cases. The dense linear algebra operations used by TT decompositions can take full advantage of modern CPU and GPU vectorisations, whereas an embarrassing parallelism with respect to target density evaluations is well scalable to modern high performance computers.
\end{itemize}

This work opens many potential applications and further enhancements of DIRT. For example, the transport maps defined by DIRT can be naturally extended to approximate the optimal biasing density in importance sampling, which can be valuable for solving rare event simulations. 
In Section \ref{sec:heat}, we offered some preliminary investigation on constructing DIRT using multilevel models. The multilevel idea can be further integrated with DIRT to improve the convergence rate of the importance sampling estimator. 
For problems involving extremely high-dimensional or infinite-dimensional random variables, DIRT can be combined with the likelihood-informed subspace (LIS) \cite{cui2014likelihood,cui2021unified,spantini2015optimal,zahm2018certified} to characterise the highly non-Gaussian effective random variable dimensions identified by LIS. 
In addition, for sequential Bayesian inference, we can apply DIRT to iteratively characterise the filtered posterior measures changing over time, where the evolution of the random states and time-dependent observations naturally define a sequence of bridging measures.
%

%!TEX root = ./sequential_TT_R.tex

%
\section{Appendices}

\subsection{Appendix A: construction of functional TT by cross interpolation}\label{sec:tt_cross}
Here we recall an alternating iteration algorithm for constructing the TT decomposition of a multivariate function $h : \mathcal{X} \mapsto \R$.
We seek an TT of the form
\begin{equation}
h(\bx) \approx \tilde{h}(\bx) = 
\sum_{\alpha_0=1}^{r_0}\sum_{\alpha_1 = 1}^{r_1} \cdots \sum_{\alpha_d = 1}^{r_d} \ttcorez{\bx}{\mH} \cdots \ttcorej{k}{\bx}{\mH} \cdots \ttcorej{d}{\bx}{\mH} \label{eq:FTT_r},
\end{equation}
with $r_0 = r_d = 1$. 
Each univariate function $\ttcorej{k}{\bx}{\mH}: \mathcal{X}_k \mapsto \R$ is represented as a linear combination of a set of
$n_k$ basis functions $\{\phi_{k}^{(i)}(\bx_k)\}_{i = 1}^{n_k}$, that is,
\begin{equation}
\ttcorej{k}{\bx}{\mH} = \sum_{i =1}^{n_k}\phi_{k}^{(i)}(\bx_k) \,\tA_k [\alpha_{k-1}, i, \alpha_k]\label{eq:FTT_basis},
\end{equation}
where $\tA_k \in \R^{r_{k-1}\times n_k \times r_k}$ is (the coefficient tensor of) the $k$-th TT core. 
The number of degrees of freedom in the TT decomposition, that is, in the tensors $\{\tA_k\}_{k=1}^{d}$, is linear in $d$ provided the TT ranks $r_0,\ldots,r_d$ are bounded.
For the numerical efficiency it is essential that the TT cores can be constructed using a similar number of evaluations of $h(\bx)$. This can be achieved using cross interpolation methods.
The following definition is used to construct cross interpolations. 

\begin{definition}\label{def:interpolation-basis}
For each variable $\bx_k$, we consider a set of \emph{interpolation basis functions} that can be represented by a vector-valued function 
\[
\bphi_k(\bx_k^{}) = \big[\phi_k^{(1)}(\bx_k^{}), \cdots, \phi\vphantom{X}_k^{(n_k)}(\bx_k^{})\big]\in\R^{1\times n_k}, 
\]
and a set of collocation points $\mathtt{X}_k^{} = \{\bx_k^{(i)}\}_{i=1}^{n_k}$ such that
the $n_k \times n_k$ dimensional Vandermonde matrix
\begin{equation}\label{eq:phi2d}
\bphi_k(\mathtt{X}_k^{})[i,j] \equiv \phi_k^{(j)}(\bx_k^{(i)}),
\end{equation}
is an identity matrix. 
A typical construction is the (piecewise) Lagrange basis functions defined by a point set $\mathtt{X}_k$. 
We can also construct the interpolation basis from other basis functions of a separable Hilbert space, denoted by 
\[
\bpsi_k(\bx_k^{}) = \big[\psi_k^{(1)}(\bx_k^{}), \cdots, \psi_k^{(n_k)}(\bx_k^{})\big]\in\R^{1\times n_k},
\] 
and a point set $\mathtt{X}_k$ with a nonsingular Vandermonde matrix by setting
\[
\bphi_k(\bx_k^{}) = \bpsi_k(\bx_k^{}) \bpsi_k(\mathtt{X}_k^{})^{-1}.
\]
Specifically, if $\bpsi_k(\bx_k^{})$ is a set of $\lambda_k$-orthogonal functions and $\mathtt{X}_k$ are the roots of the function $\psi_{k}^{(n_k+1)}(\bx_k^{})$, we recover the pseudo-spectral methods and have
\[
 \bpsi_k(\mathtt{X}_k^{})^{-1} = \bpsi_k(\mathtt{X}_k^{})^\top \rm{diag}({\bs \omega}_k),
\]
where the vector ${\bs \omega}_k \in \mathbb{R}^{n_k}$ contains quadrature weights associated with $\mathtt{X}_k$ and $\rm{diag} (\cdot)$ brings a vector into a diagonal matrix.

Furthermore, we define the mass matrix $\mM_k = \int \bphi\vphantom{\bx}_k(\bx_k)^\top \bphi\vphantom{\bx}_k(\bx_k) \lambda_k(\bx_k) d\bx_k $. 
We let $\chol_k \in \mathbb{R}^{n_k \times n_k}$ be the Cholesky factor of the mass matrix, i.e.,  $\chol_{k}^{} \chol_{k}^\top = \mM_k^{}$.
For an interpolation basis constructed from $\lambda_k$-orthogonal functions and the roots of $\psi_{k}^{(n_k+1)}(\bx_k^{})$, we have $\mM_k = \rm{diag} ({\bs \omega}_k)$.
\end{definition}

\subsubsection{Two dimensional case}

Consider first the TT decomposition of a bivariate function 
\begin{equation}\label{eq:h2d-factor}
h(\bx_1,\bx_2)\approx \tilde h(\bx_1,\bx_2) = \sum_{\alpha=1}^{r} \mH_1^{(\alpha)}(\bx_1) \mH_2^{(\alpha)}(\bx_2),
\end{equation}
where the rank-$r$ cores
\[
\mH_1(\bx_1) \equiv \bphi_1(\bx_1)\mA_1 \in \mathbb{R}^{1\times r} \quad {\rm and} \quad \mH_2(\bx_2) \equiv \mA_2 \bphi_2(\bx_2)^\top  \in \mathbb{R}^{r \times 1} 
\] 
are specified by basis functions $\bphi_1(\bx_1)\in \mathbb{R}^{1 \times n_1} $ and $\bphi_2(\bx_2)\in \mathbb{R}^{1 \times n_2} $, and the corresponding coefficient matrices $\mA_1 \in \mathbb{R}^{n_1 \times r}$ and $\mA_2\in \mathbb{R}^{r \times n_2}$, respectively.
We aim to recover $\mA_1$ and $\mA_2$ such that the $L^2$ norm of the error 
\[
\lpnormx{h(\bx_1, \bx_2) - \mH_1(\bx_1) \mH_2(\bx_2)}{2}, 
\]
is minimised.
Note that with interpolation bases, the matrices $\mA_1$ and $\mA_2$ are also pointwise evaluations of the functions $\mH_1(\bx_1)$ and $\mH_2(\bx_2)$ at collocation points $\mathtt{X}_1$ and $\mathtt{X}_2$, respectively. 
This way, $h(\bx_1, \bx_2)$ yields a discrete approximation 
\begin{equation}
h(\bx_1, \bx_2) \approx \bphi_1(\bx_1)  h(\mathtt{X}_1,\mathtt{X}_2) \bphi_2(\bx_2)^\top,
\end{equation}
where $h(\mathtt{X}_1,\mathtt{X}_2)\equiv [h(\bx_1^{(i)}, \bx_2^{(j)})] \in \R^{n_1 \times n_2}$ for $\bx_1^{(i)}\in\mathtt{X}_1$ and $\bx_2^{(j)}\in\mathtt{X}_2$ is the matrix of nodal values of $h(\bx_1, \bx_2)$ similarly to~\eqref{eq:phi2d}.
This way, the $L^2$ error of the continuous factorisation yields a discrete approximation
\begin{align}\label{eq:h2d-discrete-norm}
\lpnormx{h(\bx_1, \bx_2) - \mH_1(\bx_1) \mH_2(\bx_2)}{2} & \approx \lpnormx{ \bphi_1(\bx_1) \big( h(\mathtt{X}_1,\mathtt{X}_2) ^\top - \mA_1 \mA_2 \big) \bphi_2(\bx_2)^\top }{2} \nonumber \\
& = \big\| \chol_1^\top \big( h(\mathtt{X}_1,\mathtt{X}_2) - \mA_1 \mA_2 \big) \chol_2 \big\|_F.
\end{align}
Thus, we can recover the matrices $\mA_1$ and $\mA_2$ by solving some low-rank matrix decomposition of $h(\mathtt{X}_1,\mathtt{X}_2)$.
However, assembling the matrix $h(\mathtt{X}_1,\mathtt{X}_2)$ requires evaluating the function $h(\bx_1,\bx_2)$ at the Cartesian union of the collocation points $\mathtt{X}_1 \times \mathtt{X}_2$, which can be computationally prohibitive in the generalisation to $d > 2$.

Instead, we can use some \emph{interpolation point sets} $\overline{\mathtt{X}}_1 \subset \mathtt{X}_1$ and $\overline{\mathtt{X}}_2\subset \mathtt{X}_2$ of cardinality $\#\overline{\mathtt{X}}_1 = \#\overline{\mathtt{X}}_2 = r$ such that the matrix $h(\overline{\mathtt{X}}_1, \overline{\mathtt{X}}_2)\in \R^{r \times r}$ is nonsingular, and rank-$r$ \emph{interpolation cores}
\[
\mG_1(\bx_1) \equiv \bphi_1(\bx_1) \mB_1 \in \mathbb{R}^{1\times r} \quad  {\rm and} \quad \mG_2(\bx_2) \equiv \mB_2 \bphi_2(\bx_2)^\top \in \mathbb{R}^{r\times 1}, 
\] 
with $\mB_1 \in \mathbb{R}^{n_1 \times r}$ and $\mB_2\in \mathbb{R}^{r \times n_2}$, to approximate $h(\bx_1, \bx_2)$ by interpolation. 
The interpolation cores satisfy the property that $\mG_1(\overline{\mathtt{X}}_1)$ and $\mG_2(\overline{\mathtt{X}}_2)$ are identity matrices.
This yields interpolated approximations to  $h(\bx_1, \bx_2) $, for example, 
\begin{align*}
h(\bx_1, \bx_2)  \approx \mG_1(\bx_1) h(\overline{\mathtt{X}}_1, \bx_2) \quad {\rm and} \quad h(\bx_1, \bx_2)  \approx \mG_1(\bx_1) h(\overline{\mathtt{X}}_1, \overline{\mathtt{X}}_2) \mG_2(\bx_2).
\end{align*}
This way, the goal becomes identifying the optimal point sets $(\overline{\mathtt{X}}_1, \overline{\mathtt{X}}_2)$ and the cores $(\mG_1, \mG_2)$ that minimise the interpolated rank-$r$ factorisation error
\begin{equation}\label{eq:h2d-cross-problem}
\lpnormx{h(\bx_1, \bx_2) -  \mG_1(\bx_1) h(\overline{\mathtt{X}}_1, \overline{\mathtt{X}}_2) \mG_2(\bx_2) }{2} .
\end{equation}
In practice, an \emph{alternating direction} strategy can be employed to solve the above nonlinear minimisation problem via a sequence of subproblems at a lower computational cost compared to that of the full matrix decomposition induced by~\eqref{eq:h2d-discrete-norm}.
For example, we start from some initial guess of $\mB_2$ and $\overline{\mathtt{X}}_2$ to solve for $\mB_1$ and $\overline{\mathtt{X}}_1$ via  the minimisation problem
\begin{equation}\label{eq:h2d-cross-sub-1}
\mB_1, \overline{\mathtt{X}}_1 = \argmin_{\mB_1', \overline{\mathtt{X}}\vphantom{\mC}'_1 } \lpnormx{ h(\bx_1^{}, \overline{\mathtt{X}}_2^{}) \mB_2^{} \bphi_2(\bx_2^{})^\top  -  \bphi_1(\bx_1^{}) \mB_1'  h(\overline{\mathtt{X}}\vphantom{\mB}'_1, \overline{\mathtt{X}}_2^{}) \mB_2^{} \bphi_2(\bx_2^{})^\top }{2},
\end{equation}
then we use the updated $\mB_1$ and $\overline{\mathtt{X}}_1$ to renew $\mB_2$ and $\overline{\mathtt{X}}_2$ via
\begin{equation*}\label{eq:h2d-cross-sub-2}
\mB_2, \overline{\mathtt{X}}_2 = \argmin_{\mB_2', \overline{\mathtt{X}}\vphantom{\mB}'_2 } \lpnormx{ \bphi_1(\bx_1^{}) \mB_1^{}   h(\overline{\mathtt{X}}_1^{}, \bx_2^{})  -  \bphi_1(\bx_1^{}) \mB_1^{}  h(\overline{\mathtt{X}}^{}_1, \overline{\mathtt{X}}\vphantom{\mB}'_2) \mB_2' \bphi_2(\bx_2^{})^\top }{2},
\end{equation*}
and repeat until convergence.
Given the collocation points $\overline{\mathtt{X}}_1$ and $\overline{\mathtt{X}}_2$, the coefficient matrices $\mB_1^{}$ and $\mB_2^{}$ satisfy a simple quadratic optimisation, and can be computed from
\begin{align}\label{eq:h2d-cross}
\mB_1 h(\overline{\mathtt{X}}_1, \overline{\mathtt{X}}_2)= h(\mathtt{X}_1, \overline{\mathtt{X}}_2)  \quad {\rm and} \quad h(\overline{\mathtt{X}}_1, \overline{\mathtt{X}}_2) \mB_2  = h(\overline{\mathtt{X}}_1, \mathtt{X}_2),
\end{align}
respectively. Solving \eqref{eq:h2d-cross} only requires $(n_1+n_2-r)r$ evaluations of $h(\bx_1,\bx_2)$.

In~\eqref{eq:h2d-cross}, one needs to find the interpolation point sets $\overline{\mathtt{X}}_1$ and $\overline{\mathtt{X}}_2$ so that the resulting interpolation operator is an optimal approximation to the projection operator that spans the same linear subspace. However, finding the optimal interpolation point sets is an NP-hard problem.
In practice, accurate quasi-optimal solutions can be obtained by greedy algorithms such as the (discrete) empirical interpolation~\cite{barrault2004empirical,chaturantabut2010nonlinear} or the maximum volume (\emph{MaxVol})~\cite{goreinov2010find,goreinov1997theory,goreinov1997pseudo} methods.
Here we outline the procedure of the \emph{MaxVol} algorithm~\cite{goreinov2010find} for solving~\eqref{eq:h2d-cross-sub-1}, which can be equivalently expressed as the problem of searching for an index set $\mathtt{I}\subset \{1,\ldots,n\}$ of cardinality $r$ such that the norm of $\mB = \mH \overline{\mH}\vphantom{\mH}^{-1} \in\R^{n \times r}$ is minimized.
Here $\overline{\mH} = \mH[\mathtt{I},:]\in\R^{r\times r}$ is the submatrix of a given matrix $\mH$ in the MATLAB notation.
For example, one can have $\mH = h(\mathtt{X}_1, \overline{\mathtt{X}}_2)$, and then the interpolation point set $\overline{\mathtt{X}}_1$ is given by $\mathtt{I}$ and the coefficient matrix is set by $\mB_1 = \mB$. 

\begin{algorithm}[htb]
\caption{MaxVol}
\label{alg:maxvol}
\centering
\begin{algorithmic}[1]
 \State Choose an initial set $\mathtt{I}$ and a stopping threshold $\delta>0$.
 \While{$\max_{i,j} |\mB[i,j]| > 1+\delta$}
   \State Let $i_\star, j_\star = \arg\max_{i,j} |\mB[i,j]|$.
   \State Replace the index $\mathtt{I}[j_\star]$ in the set by $i_\star$.
   \State Recompute $\overline{\mH} = \mH[\mathtt{I},:]$ and $\mB = \mH \overline{\mH}\vphantom{\mH}^{-1}$.
 \EndWhile
\end{algorithmic}
\end{algorithm}

Given an initial index set, which can be chosen as the $r$ dominant pivots from Gaussian elimination, \emph{MaxVol} proceeds as Algorithm~\ref{alg:maxvol}.
Note that $\mB[\mathtt{I},:] \in \mathbb{R}^{r \times r}$ is an identity matrix by construction. \emph{MaxVol} ensures that no other row is more ``important'' by searching for a \emph{dominant} submatrix $\overline{\mH}$ such that $|\mB[i,j]|\le 1+\delta$, which is a proxy to the  \emph{maximum volume} submatrix $\overline{\mH}_\star$ such that $|\det(\overline{\mH}_\star)| = \max_{\mathtt{I}} |\det(\mH[\mathtt{I},:])|$.
The update of $\mB$ can be computed efficiently via the Sherman--Morrison--Woodbury formula~\cite{goreinov2010find} with a total cost of $\mathcal{O}(nr^2)$ per iteration.

For the numerical stability it is beneficial to compute the thin generalised QR factorization $\mH \mR = h(\mathtt{X}_1, \overline{\mathtt{X}}_2) $, where the matrix $\mH$ has $\mM_1$-orthonormal columns. This way, the set of functions $\bphi_1(\bx_1) \mH$ forms a $\lambda_1$-orthogonal basis. 
The factorisation $\mH \mR = h(\mathtt{X}_1, \overline{\mathtt{X}}_2) $ can be obtained by the thin QR factorization $\mQ \mR = \chol_1^\top h(\mathtt{X}_1, \overline{\mathtt{X}}_2) $ and $\mH =  \chol_1^{-\top} \mQ$.
Then, one can apply \emph{MaxVol} to  $\mH$, which is the evaluation of $\bphi_1(\bx_1) \mH$ at $\mathtt{X}_1$, 
to select the index set $\mathtt{I}$, and thus the interpolation points $\overline{\mathtt{X}}_1 \subset \mathtt{X}_1$. 
We have
\(
\mH[\mathtt{I},:] = h(\overline{\mathtt{X}}_1, \overline{\mathtt{X}}_2) \mR^{-1},
\)
which yields
\(
\mB = \mH \mH[\mathtt{I},:]^{-1} =  h(\mathtt{X}_1, \overline{\mathtt{X}}_2) h(\overline{\mathtt{X}}_1, \overline{\mathtt{X}}_2)^{-1} .
\)
Thus, we can set the coefficient matrix as $\mB_1 = \mB$ and define the interpolation core $\mG_1(\bx_1) = \bphi_1(\bx_1) \mB_1$ such that $\mG_1(\mathtt{X}_1)$ is an identity matrix. 

We can obtain $(\mB_1, \overline{\mathtt{X}}_1)$ and $(\mB_2, \overline{\mathtt{X}}_2)$ by applying \emph{MaxVol} within alternating iterations. Then, we can set $\mA_1 = \mB_1$ and $\mA_2 = h(\overline{\mathtt{X}}_1, \overline{\mathtt{X}}_2) \mB_2$ to recover the factorisation in the form of~\eqref{eq:h2d-factor}.

\subsubsection{Multi-dimensional case}

The TT-cross algorithm~\cite{oseledets2010tt} recursively extends~\eqref{eq:h2d-cross} to $d > 2$.
In the first step, we assume that a reduced point set $\overline{\mathtt{X}}_{>1} = \{(\bx_2^{(\alpha_1)},\ldots,\bx_d^{(\alpha_1)})\}$ of $r_1$ points is given. We can for example draw it from some tractable reference measure.
We compute an analogue of the first equation in~\eqref{eq:h2d-cross}
\[
\mA_1 h(\overline{\mathtt{X}}_1,\overline{\mathtt{X}}_{>1}) = h(\mathtt{X}_1,\overline{\mathtt{X}}_{>1})\in\R^{n_1 \times r_1},
\]
where 
$
 h(\mathtt{X}_1, \overline{\mathtt{X}}_{>1}) = [h(\bx_1^{(i_1)}, \ \bx_{2}^{(\alpha_{1})},\ldots,\bx_{d}^{(\alpha_{1})} ) ]
$
is a matrix filled with the function $h(\bx)$ evaluated at the ``reduced'' set of points $\mathtt{X}_1 \times \overline{\mathtt{X}}_{>1}$.
Now we apply \emph{MaxVol} to compute reduced subsets $\mathtt{I}_{1} \subset \{1,\ldots,n_1\}$ and $\overline{\mathtt{X}}_{<2} = \mathtt{X}_1(\mathtt{I}_{1}) \subset \mathtt{X}_1$.
Similarly to the matrix $\mB_1$ in the two dimensional case, we let the actual TT core be the ``stabilized'' matrix $\mA_1 = \mH_1 \mH_1[\mathtt{I}_1,:]^{-1}$, where $\mH_1 \mR_1 = h(\mathtt{X}_1,\overline{\mathtt{X}}_{>1})$ is the generalised QR decomposition.

In the $k$-th step, we assume reduced point sets
$\overline{\mathtt{X}}_{<k} = \{\bx_1^{(\alpha_{k-1})},\ldots,\bx_{k-1}^{(\alpha_{k-1})}\}$ and $\overline{\mathtt{X}}_{>k} = \{\bx_{k+1}^{(\alpha_{k})},\ldots,\bx_{d}^{(\alpha_{k})}\}$ are given.
We can compute a third order tensor
\begin{equation}\label{eq:hk-cross}
 \tH_k:= \left[h(\bx_1^{(\alpha_{k-1})},\ldots,\bx_{k-1}^{(\alpha_{k-1})}, \ \bx_k^{(i_k)}, \ \bx_{k+1}^{(\alpha_{k})},\ldots,\bx_{d}^{(\alpha_{k})} )\right] \in \R^{r_{k-1} \times n_k \times r_k},
\end{equation}
which consists of evaluations of $h(\bx)$ at the Cartesian union of the sets $\overline{\mathtt{X}}_{<k} \times \mathtt{X}_k \times \overline{\mathtt{X}}_{>k}$.
We let $\overline{\mathtt{X}}_{<1} = \overline{\mathtt{X}}_{>d} = \emptyset$ to enable the notation for all $k$.
We can \emph{unfold} $\tH_k$ into matrices of the form
\begin{align}\label{eq:hk-unfold}
 \mH_k^{(\rm L)} \in \R^{(r_{k-1}n_k) \times r_k}, & \quad {\rm and} \quad \mH_k^{(\rm R)} \in \R^{r_{k-1} \times (n_k r_k)}, % \quad \mH_k^{(\rm C)} \in \R^{n_k \times (r_{k-1}r_k)},
\end{align}
such that
\begin{align*}
\tH_k[\alpha_{k-1},i_k,\alpha_k] & =  \mH_k^{(\rm L)}[\alpha_{k-1}+(i_k-1) r_{k-1}, \alpha_k]
  = \mH_k^{(\rm R)}[\alpha_{k-1}, i_k+ (\alpha_k-1)n_k].
\end{align*}
The union of the indices $\alpha_{k-1}$ and $i_k$ corresponds to the union of the point sets $\mathtt{X}_{\le k}:=\overline{\mathtt{X}}_{<k} \times \mathtt{X}_k$.
Therefore, we can apply \emph{MaxVol} to $\mH_k^{(\rm L)}$ (or a generalised QR factor thereof) to obtain a discrete set $\mathtt{I}_k \subset \{1,\ldots,r_{k-1}n_k\}$,
and take a subsample of $\mathtt{X}_{\le k}$ for the next recursion step, $\overline{\mathtt{X}}_{<k+1} = \mathtt{X}_{\le k}(\mathtt{I}_k)$.
Similarly for the $k$th TT core we define
\begin{align}
\mB_k^{(\rm L)} & = \mH_k^{(\rm L)} \mH_k^{(\rm L)}[\mathtt{I}_k,:]^{-1}, \label{eq:Bk-cross} \\
\tA_k[\alpha_{k-1},i_k,\alpha_k] & = \mB_k^{(\rm L)}[\alpha_{k-1}+(i_k-1) r_{k-1}, \alpha_k]. \label{eq:Ak-cross}
\end{align}

If the function $h(\bx)$ admits an exact TT decomposition, and the initial point sets were chosen such that all $\mH_k^{(\rm L)}$ are full-rank, the recursion defined above reconstructs the decomposition exactly.
However, in practice the initial point sets can be a poor interpolation sets. In this case we can \emph{refine} them by carrying out several iterations. Having computed $\tA_d$, we reverse the recursion and iterate backwards, computing discrete sets $\mathtt{J}_k \subset \{1,\ldots,n_kr_k\}$ via \emph{MaxVol}  applied to $(\mH_k^{(\rm R)})^\top$, and setting $\overline{\mathtt{X}}_{>k-1} = \mathtt{X}_{\ge k}(\mathtt{J}_k)$, where $\mathtt{X}_{\ge k} = \mathtt{X}_{k} \times \overline{\mathtt{X}}_{>k}$.

The second key ingredient is the adaptation of TT ranks.
The TT ranks can be easily reduced. For example, it is sufficient to replace the generalised QR factorization of $\mH_k^{(\rm L)}$ or $(\mH_k^{(\rm R)})^\top$ by a generalised SVD, where the singular values below the desired threshold are truncated.
To increase the TT ranks, we can apply \emph{oversampling}. Using the forward iteration (with $k$ increasing) as an example, we can compute the tensor $\tH_k \in \R^{r_{k-1} \times n_k \times (r_k+\rho_k)}$ on the enriched point set $\overline{\mathtt{X}}_{<k} \times \mathtt{X}_k \times (\overline{\mathtt{X}}_{>k} \cup \widetilde{\mathtt{X}}_{>k})$, where $\widetilde{\mathtt{X}}_{>k} = \{(\bx_{k+1}^{(\alpha_{k})},\ldots,\bx_{d}^{(\alpha_{k})})\}_{\alpha_k=1}^{\rho_k}$ are auxiliary points.
These auxiliary points can be sampled at random~\cite{Os-mvk2-2011}, or more accurately, from a surrogate of the error~\cite{dolgov2014alternating}.
In the latter case, we carry out a second TT-cross to approximate the error $h(\bx) - \tilde h(\bx)$ by a TT decomposition with TT ranks $\rho_1,\ldots,\rho_{d-1}$, and take the \emph{MaxVol} points of the error as $\widetilde{\mathtt{X}}_{>k}$.
This \emph{enrichment} of the solution with error or residual information has proven to accelerate the convergence drastically even for small expansion ranks $\rho_k$ when applied to solving linear systems~\cite{dolgov2014alternating}.
The pseudocode of the TT-cross is provided in Algorithm~\ref{alg:tt-cross}.

\begin{algorithm}[htb]
\centering
\caption{TT-cross}
\label{alg:tt-cross}
\begin{algorithmic}[1]
 \State Choose initial sets $\overline{\mathtt{X}}_{<k}$, $k=2,\ldots,d$, stopping threshold $\delta>0$, enrichment TT ranks $\rho_k$.
 \While{first iteration or $\|\tilde h(\bx) - \tilde h_{\rm prev}(\bx) \|>\delta \|\tilde h(\bx)\|$}
   \For{$k=d,d-1,\ldots,2$} \Comment{backward iteration}
     \State Sample $\tH_k$ as shown in~\eqref{eq:hk-cross}, optionally expanding $\overline{\mathtt{X}}_{<k}$ to $\overline{\mathtt{X}}_{<k} \cup \widetilde{\mathtt{X}}_{<k}$.
     \State Compute $\mathtt{J}_k$ from \emph{MaxVol} on $(\mH_k^{(\rm R)})^\top$ or its SVD factor, let $\overline{\mathtt{X}}_{>k-1} = \mathtt{X}_{\ge k}(\mathtt{J}_k)$.
   \EndFor
   \For{$k=1,2,\ldots,d-1$} \Comment{forward iteration}
     \State Sample $\tH_k$ as shown in~\eqref{eq:hk-cross}, optionally expanding $\overline{\mathtt{X}}_{>k}$ to $\overline{\mathtt{X}}_{>k} \cup \widetilde{\mathtt{X}}_{>k}$.
     \State Compute $\mathtt{I}_k$ from \emph{MaxVol} on $\mH_k^{(\rm L)}$ or its SVD factor, let $\overline{\mathtt{X}}_{<k+1} = \mathtt{X}_{\le k}(\mathtt{I}_k)$.
     \State Reconstruct TT cores as shown in~\eqref{eq:Bk-cross}--\eqref{eq:Ak-cross}.
   \EndFor
   \State Sample the last TT core $\tA_d = \tH_d$ as shown in~\eqref{eq:hk-cross}.
 \EndWhile
\end{algorithmic}
\end{algorithm}

The tensor in~\eqref{eq:hk-cross} suggests that the TT-cross requires $\sum_{k=1}^{d} r_{k-1} n_k r_k$ evaluations of $h(\bx)$ per iteration, which is proportional to the number of unknowns in the TT cores.
To enhance the robustness (at the expense of a larger number of function evaluations), one may oversample $\mathtt{X}_k$ beyond $n_k$ basis functions, and use the \emph{rectangular MaxVol} algorithm~\cite{mo-rectmaxvol-2018} to oversample $\mathtt{I}_k,\mathtt{J}_{k+1}$ beyond $r_k$ indices. In this case, the matrix inverse in~\eqref{eq:Bk-cross} is replaced by a pseudoinverse.
For our DIRT framework, the standard \emph{MaxVol} equipped with the error enrichment is sufficiently robust to factorise the ratio functions, so we proceed with Algorithm~\ref{alg:tt-cross}.

\newcommand{\tmpnew}[2]{\sum_{{#1}_{#2-1}=1}^{r_{#2-1}}\!\!\!\! \mG_{<#2}^{({#1}_{#2-1})}\!(\bx_{<#2}) \sqbasis_{#2}^{({#1}_{#2-1},{\ell}_{#2})}\!(\bx_{#2})}

\subsection{Appendix B: proof of Proposition \ref{prop:sirt_recur}} \label{appen:prop:sirt_recur}
Recall the marginal function 
\begin{align}
\hat{\ptar}_{ \leq k}(\bx_{\leq k}) 
= \gamma  \prod_{i = k+1}^d  \lambda_i(\mathcal{X}_i) + \sum_{\ell_{k}=1}^{r_{k}} \Big(\mG_{1}(\bx_{1}) \cdots \mG_{\km}(\bx_{\km}) \, \sqbasis_{k}^{(\,:\,,\ell_{k})}(\bx_{k}) \Big)^2,
\end{align}
where $\sqbasis_{k}(\bx_{k}) : \mathcal{X}_{k} \mapsto \R^{ r_{k-1} \times r_{k} }$ is given by~\eqref{eq:L_k} with a coefficient tensor $\tB_k \in \R^{r_{k-1}\times n_k \times r_k}$.
The next marginal function $\hat{\ptar}_{ <k}(\bx_{<k})$ is defined by
\begin{align}
& \hspace{-12pt}\hat{\ptar}_{ < k}(\bx_{< k}) \nonumber \\
& = \int_{\mathcal{X}_k} \bigg(  \gamma \prod_{i = k+1}^d \lambda_i(\mathcal{X}_i)  + \sum_{\ell_{k}=1}^{r_{k}} \Big(\mG_{1}(\bx_{1}) \cdots \mG_{\km}(\bx_{\km}) \, \sqbasis_{k}^{(\,:\,,\ell_{k})}(\bx_{k}) \Big)^2 \bigg) \lambda_k(\bx_k)\,d\bx_k \nonumber \\
& =  \gamma \prod_{i = k}^d \lambda_i(\mathcal{X}_i) + \sum_{\ell_{k}=1}^{r_{k}} \int_{\mathcal{X}_k} \!\!\Big(\mG_{1}(\bx_{1}) \cdots \mG_{\km}(\bx_{\km}) \, \sqbasis_{k}^{(\,:\,,\ell_{k})}(\bx_{k}) \Big)^2 \,\lambda_k(\bx_k)\,d\bx_k. \label{eq:sirt_proof_0}
\end{align}
The second term of \eqref{eq:sirt_proof_0} can be expressed as
\begin{align}
& \sum_{\ell_{k}=1}^{r_{k}} \int_{\mathcal{X}_k} \!\!\Big(\mG_{1}(\bx_{1}) \cdots \mG_{\km}(\bx_{\km}) \, \sqbasis_{k}^{(\,:\,,\ell_{k})}(\bx_{k}) \Big)^2 \,\lambda_k(\bx_k)\,d\bx_k \nonumber \\
& = \sum_{\ell_{k}=1}^{r_{k}} \int_{\mathcal{X}_k} \tmpnew{\alpha}{k} \tmpnew{\beta}{k} \lambda_k(\bx_k)d\bx_k \nonumber\\
& = \sum_{\alpha_{k-1}=1}^{r_{k-1}} \sum_{\beta_{k-1}=1}^{r_{k-1}} \mG_{< k}^{(\alpha_{k-1})}(\bx_{< k})\,\mG_{< k}^{(\beta_{k-1})}(\bx_{< k}) \, \overline{\mM}_{k}[\alpha_{k-1}, \beta_{k-1}], \label{eq:sirt_proof_1}
\end{align}
where the symmetric matrix $\overline{\mM}_{k} \in \R^{ r_{k-1} \times r_{k-1} }$ is given by
\begin{align}\label{eq:mass_funcL}
\overline{\mM}_{k}[\alpha_{k-1}, \beta_{k-1}] = \sum_{\ell_{k}=1}^{r_{k}} \int_{\mathcal{X}_k} \sqbasis_{k}^{(\alpha_{k-1},\ell_{k})}(\bx_{k})\, \sqbasis_{k}^{(\beta_{k-1},\ell_{k})}(\bx_{k})\,\lambda_k(\bx_k)\,d\bx_k,
\end{align}
and $\mG_{< k}$ is defined in~\eqref{eq:G_<k}.
Plugging the expression~\eqref{eq:L_k} of $\sqbasis_k(\bx_k)$ into \eqref{eq:mass_funcL}, we obtain
\begin{align}
& \overline{\mM}_{k}[\alpha_{k-1}, \beta_{k-1}] \nonumber \\
& = \sum_{\ell_{k}=1}^{r_{k}} \int_{\mathcal{X}_k}\!\! \Big(\sum_{i =1}^{n_k}\phi_{k}^{(i)}(\bx_k) \, \tB_k [\alpha_{k-1}, i, \ell_k]\Big)\Big(\sum_{j =1}^{n_k}\phi_{k}^{(j)}(\bx_k) \, \tB_k [\beta_{k-1}, j, \ell_k]\Big)\,\lambda_k(\bx_k)\,d\bx_k \nonumber \\
& = \sum_{\ell_{k}=1}^{r_{k}} \sum_{i =1}^{n_k} \sum_{j =1}^{n_k} \tB_k [\alpha_{k-1}, i, \ell_k] \, \tB_k [\beta_{k-1}, j, \ell_k] \int_{\mathcal{X}_k}\!\!  \phi_{k}^{(i)}(\bx_k) \phi_{k}^{(j)}(\bx_k)\,\lambda_k(\bx_k)\,d\bx_k \nonumber \\
& = \sum_{\ell_{k}=1}^{r_{k}} \sum_{i =1}^{n_k} \sum_{j =1}^{n_k} \tB_k [\alpha_{k-1}, i, \ell_k] \, \tB_k [\beta_{k-1}, j, \ell_k] \, \mM_k[i,j], \label{eq:mass_funcL2}
\end{align}
where $\mM_k^{} \in \R^{n_k \times n_k} $ is the symmetric positive definite mass matrix defined in~\eqref{eq:mass-k}. 
Given the Cholesky decomposition $\chol_k \chol_k^\top = \mM_k$, we have
\[
\mM_k[i,j] = \sum_{\tau = 1}^{n_k} \chol_k[i, \tau]\, \chol_k[j, \tau].
\] 
Substituting the above identity into \eqref{eq:mass_funcL2}, we have
\begin{align}
& \hspace{-12pt} \overline{\mM}_{k}[\alpha_{k-1}, \beta_{k-1}] \nonumber \\
& = \sum_{\ell_{k}=1}^{r_{k}} \sum_{i =1}^{n_k} \sum_{j =1}^{n_k} \sum_{\tau = 1}^{n_k} \tB_k [\alpha_{k-1}, i, \ell_k] \, \tB_k [\beta_{k-1}, j, \ell_k] \, \chol_k[i, \tau]\, \chol_k[j, \tau] \nonumber \\
& = \sum_{\ell_{k}=1}^{r_{k}} \sum_{\tau = 1}^{n_k} \Big(\sum_{i =1}^{n_k} \tB_k [\alpha_{k-1}, i, \ell_k]\, \chol_k[i, \tau]\Big)\Big(\sum_{j =1}^{n_k} \tB_k [\beta_{k-1}, j, \ell_k]\, \chol_k[j, \tau]\Big).
\end{align}
Denoting $\tC_k[\alpha_{k-1}, \tau, \ell_{k}] = \sum_{i = 1}^{n_k} \tB_k[\alpha_{k-1}, i, \ell_{k}] \, \chol_k[i, \tau]$ and unfolding $\tC_k$ along the first coordinate similarly to~\eqref{eq:hk-unfold} to obtain a matrix $\mC_k^{(\rm R)} \in \R^{r_{k-1} \times (n_k  r_k)} $, we have
\begin{align*}
& \hspace{-12pt} \overline{\mM}_{k}[\alpha_{k-1}, \beta_{k-1}] \\
& = \sum_{\ell_{k}=1}^{r_{k}} \sum_{\tau = 1}^{n_k} \tC_k[\alpha_{k-1}, \tau, \ell_{k}]\, \tC_k[\beta_{k-1}, \tau, \ell_{k}] = \sum_{\iota = 1}^{n_kr_k}\mC_k^{(\rm R)}[\alpha_{k-1}, \iota] \, \mC_k^{(\rm R)}[\beta_{k-1}, \iota].
\end{align*}
Equivalently, we have $\overline{\mM}_{k} = \mC_k^{(\rm R)} \big(\mC_k^{(\rm R)}\big)^\top$. This way, computing the thin QR decomposition
\begin{align*}
\mQ_k \mR_k^{} = \big( \mC_k^{(\rm R)} \big)^\top,
\end{align*}
we obtain the Cholesky decomposition $\mR_k^\top \mR_k^{} = \overline{\mM}_{k}$ where $\mR_k \in \R^{r_{k-1} \times r_{k-1}}$ is upper-triangular. 
Substituting the identity
\[
\overline{\mM}_{k}[\alpha_{k-1}, \beta_{k-1}] = \sum_{\ell_{k-1} = 1}^{r_{k-1}} \mR_k[\ell_{k-1}, \alpha_{k-1}]\, \mR_k[\ell_{k-1}, \beta_{k-1}],
\] 
into \eqref{eq:sirt_proof_1}, 
the second term of \eqref{eq:sirt_proof_0} is defined by
\begin{align}
& \hspace{-12pt} \sum_{\ell_{k}=1}^{r_{k}} \int_{\mathcal{X}_k} \!\!\Big(\mG_{1}(\bx_{1}) \cdots \mG_{\km}(\bx_{\km}) \, \sqbasis_{k}^{(\,:\,,\ell_{k})}(\bx_{k}) \Big)^2 \,\lambda_k(\bx_k)\,d\bx_k \nonumber \\
& = \sum_{\ell_{k-1} = 1}^{r_{k-1}}  \sum_{\alpha_{k-1}=1}^{r_{k-1}} \sum_{\beta_{k-1}=1}^{r_{k-1}} \mG_{< k}^{(\alpha_{k-1})}(\bx_{< k})\,\mG_{< k}^{(\beta_{k-1})}(\bx_{< k}) \, \mR_k[\ell_{k-1}, \alpha_{k-1}]\, \mR_k[\ell_{k-1}, \beta_{k-1}] \nonumber \\
& = \sum_{\ell_{k-1} = 1}^{r_{k-1}} \Big( \sum_{\alpha_{k-1}=1}^{r_{k-1}} \mG_{< k}^{(\alpha_{k-1})}(\bx_{< k})\, \mR_k[\ell_{k-1}, \alpha_{k-1}]\Big)^2 \nonumber \\
& = \sum_{\ell_{k-1} = 1}^{r_{k-1}} \Big(\mG_{1}(\bx_{1}) \cdots \mG_{k-2}(\bx_{k-2}) \sum_{\alpha_{k-1}=1}^{r_{k-1}}  \mG_{k-1}^{(\,:\,,\, \alpha_{k-1})}(\bx_{k-1}) \, \mR_k[\ell_{k-1}, \alpha_{k-1}] \Big)^2,
\end{align}
where the last line follows from the identity in~\eqref{eq:G_<k}.
Following the recursive definition of $\sqbasis_{k}(\bx_{k})$ in Proposition \ref{prop:sirt_recur}, we have 
\begin{equation}\label{eq:ftt_group2}
\sqbasis_{k-1}^{(\,:\,,\ell_{k-1})}(\bx_{k-1}) = \sum_{\alpha_{k-1}=1}^{r_{k-1}}  \mG_{k-1}^{(\,:\,,\, \alpha_{k-1})}(\bx_{k-1}) \, \mR_k[\ell_{k-1}, \alpha_{k-1}].
\end{equation}
Substituting the definition of the TT core 
\[
\ttcorej{k-1}{\bx}{\mG} = \sum_{i=1}^{n_{k-1}}\phi_{k-1}^{(i)}(\bx_{k-1}) \tA_{k-1} [\alpha_{k-2}, i, \alpha_{k-1}]
\]
into \eqref{eq:ftt_group2}, we have
\[
\sqbasis_{k-1}^{(\alpha_{k-2},\ell_{k-1})}(\bx_{k-1}) =  \sum_{i =1}^{n_{k-1}}\phi_{k-1}^{(i)}(\bx_{k-1}) \Big( \sum_{\alpha_{k-1}=1}^{r_{k-1}}  \tA_{k-1} [\alpha_{k-2}, i, \alpha_{k-1}] \, \mR_k[\ell_{k-1}, \alpha_{k-1}] \Big).
\]
Thus, defining the coefficient tensor 
\[
\tB_{k-1}[\alpha_{k-2},i, \ell_{k-1}] =  \sum_{\alpha_{k-1} = 1}^{r_{k-1}} \tA_{k-1}[\alpha_{k-2},i, \alpha_{k-1}]\, \mR_k[\ell_{k-1},\alpha_{k-1}],
\]
we obtain the result of Proposition \ref{prop:sirt_recur}:
\[
\sqbasis_{k-1}^{(\alpha_{k-2},\ell_{k-1} )}(\bx_{k-1}) = \sum_{i = 1}^{n_{k-1}} \phi_{k-1}^{(i)}(\bx_{k-1})\, \tB_{k-1}[\alpha_{k-2},i, \ell_{k-1}].
\]

By setting index $k = 1$ and repeating the above procedure, we can obtain the normalising constant $\hat{z} = \int_{\mathcal{X}_1} \hat{\ptar}_{ \leq 1}(\bx_{1}) \lambda_1(\bx_1) d\bx_1 = \gamma \prod_{i = 1}^d \lambda_i(\mathcal{X}_i) + \mR_1^2$, where $\mR_1 \in \mathbb{R}$ as the unfolded $\tC_1$ along the first coordinate is a row vector $\mC_1^{(\rm R)} \in \R^{1 \times (n_1 r_1)} $.

\begin{acknowledgements}
The authors would like to thank Y. Marzouk and R. Scheichl for for many insightful discussions. T. Cui acknowledges support from the Australian Research Council, under grant number CE140100049. S. Dolgov acknowledges support from the International Visitor Program of Sydney Mathematical Research Institute,
and from the EPSRC New Investigator Award EP/T031255/1.
\end{acknowledgements}

\providecommand{\bysame}{\leavevmode\hbox to3em{\hrulefill}\thinspace}
\providecommand{\MR}{\relax\ifhmode\unskip\space\fi MR }
% \MRhref is called by the amsart/book/proc definition of \MR.
\providecommand{\MRhref}[2]{%
  \href{http://www.ams.org/mathscinet-getitem?mr=#1}{#2}
}
\providecommand{\href}[2]{#2}


\begin{thebibliography}{10}
\small 
\bibitem{barrault2004empirical}
Maxime Barrault, Yvon Maday, Ngoc~Cuong Nguyen, and Anthony~T Patera, \emph{An
  ‘empirical interpolation’method: application to efficient reduced-basis
  discretization of partial differential equations}, Comptes Rendus
  Mathematique \textbf{339} (2004), no.~9, 667--672.

\bibitem{bigoni2016spectral}
Daniele Bigoni, Allan~P Engsig-Karup, and Youssef~M Marzouk, \emph{Spectral
  tensor-train decomposition}, SIAM Journal on Scientific Computing \textbf{38}
  (2016), no.~4, A2405--A2439.

\bibitem{boyd2001chebyshev}
John~P Boyd, \emph{Chebyshev and fourier spectral methods}, Courier
  Corporation, 2001.

\bibitem{bigoni2019greedy}
Michael~C Brennan, Daniele Bigoni, Olivier Zahm, Alessio Spantini, and Youssef~M
  Marzouk, \emph{Greedy inference with structure-exploiting lazy maps},
  Advances in Neural Information Processing Systems, vol.~33, 2020,
  pp.~8330--8342.

\bibitem{griebel-sparsegrids-2004}
Hans-Joachim Bungatrz and Michael Griebel, \emph{Sparse grids}, Acta Numerica \textbf{13}
  (2004), no.~1, 147--269.

\bibitem{carlier2010knothe}
Guillaume Carlier, Alfred Galichon, and Filippo Santambrogio, \emph{From
  {K}nothe's transport to {B}renier's map and a continuation method for optimal
  transport}, SIAM Journal on Mathematical Analysis \textbf{41} (2010), no.~6,
  2554--2576.

\bibitem{caterini2020variational}
Anthony Caterini, Rob Cornish, Dino Sejdinovic, and Arnaud Doucet,
  \emph{Variational inference with continuously-indexed normalizing flows},
  arXiv preprint arXiv:2007.05426 (2020).

\bibitem{chaturantabut2010nonlinear}
Saifon Chaturantabut and Danny~C Sorensen, \emph{Nonlinear model reduction via
  discrete empirical interpolation}, SIAM Journal on Scientific Computing
  \textbf{32} (2010), no.~5, 2737--2764.

\bibitem{chen2019residualflows}
Ricky~T~Q Chen, Jens Behrmann, David~K Duvenaud, and Joern-Henrik Jacobsen,
  \emph{Residual flows for invertible generative modeling}, Advances in Neural
  Information Processing Systems, vol.~32, 2019, pp.~9916--9926.

\bibitem{pmlr-v119-cornish20a}
Rob Cornish, Anthony Caterini, George Deligiannidis, and Arnaud Doucet,
  \emph{Relaxing bijectivity constraints with continuously indexed normalising
  flows}, Proceedings of the 37th International Conference on Machine Learning,
  PMLR, vol. 119, 2020, pp.~2133--2143.

\bibitem{ROM:CMW_2014}
Tiangang Cui, Youssef~M Marzouk, and Karen~E Willcox, \emph{Data-driven model reduction for
  the {B}ayesian solution of inverse problems}, International Journal for
  Numerical Methods in Engineering \textbf{102} (2015), 966--990.

\bibitem{ROM:CMW_2016}
Tiangang Cui, Youssef~M Marzouk, and Karen~E Willcox, \emph{Scalable posterior approximations for large-scale {B}ayesian
  inverse problems via likelihood-informed parameter and state reduction},
  Journal of Computational Physic \textbf{315} (2016), 363--387.

\bibitem{cui2014likelihood}
Tiangang Cui, James Martin, Youssef~M Marzouk, Antti Solonen, and Alessio
  Spantini, \emph{Likelihood-informed dimension reduction for nonlinear inverse
  problems}, Inverse Problems \textbf{30} (2014), no.~11, 114015.

\bibitem{cui2021unified}
Tiangang Cui and Xin~T Tong, \emph{A unified performance analysis of
  likelihood-informed subspace methods}, arXiv preprint arXiv:2101.02417
  (2021).

\bibitem{detommaso-SVN-2018}
Gianluca Detommaso, Tiangang Cui, Youssef~M Marzouk, Alessio Spantini, and Robert Scheichl, \emph{A {S}tein
  variational {N}ewton method}, Advances in Neural Information Processing
  Systems, vol.~31, 2018, pp.~9169--9179.

\bibitem{Kuo-QMC-2013}
Josef Dick, Frances~Y Kuo, and Ian~H Sloan, \emph{High-dimensional
  integration: The quasi-{M}onte {C}arlo way}, Acta Numerica \textbf{22}
  (2013), 133--288.

\bibitem{dafs-tt-bayes-2019}
Sergey Dolgov, Karim Anaya-Izquierdo, Colin Fox, and Robert Scheichl, \emph{Approximation and
  sampling of multivariate probability distributions in the tensor train
  decomposition}, Statistics and Computing \textbf{30} (2020), 603--625.

\bibitem{dolgov2014alternating}
Sergey Dolgov and Dmitry~V Savostyanov, \emph{Alternating minimal energy
  methods for linear systems in higher dimensions}, SIAM Journal on Scientific
  Computing \textbf{36} (2014), no.~5, A2248--A2271.

\bibitem{forstner2003metric}
Wolfgang F{\"o}rstner and Boudewijn Moonen, \emph{A metric for covariance
  matrices}, Geodesy-the Challenge of the 3rd Millennium, Springer, 2003,
  pp.~299--309.

\bibitem{gelman1998simulating}
Andrew Gelman and Xiao-Li Meng, \emph{Simulating normalizing constants: From
  importance sampling to bridge sampling to path sampling}, Statistical Science
  (1998), 163--185.

\bibitem{MCMC:GiCal_2011}
Mark Girolami and Ben Calderhead, \emph{Riemann manifold {L}angevin and
  {H}amiltonian {M}onte {C}arlo methods}, Journal of the Royal Statistical
  Society: Series B (Statistical Methodology) \textbf{73} (2011), no.~2,
  123--214.

\bibitem{goreinov2010find}
Sergei~A Goreinov, Ivan~V Oseledets, Dimitry~V Savostyanov, Eugene~E
  Tyrtyshnikov, and Nikolay~L Zamarashkin, \emph{How to find a good submatrix},
  Matrix Methods: Theory, Algorithms And Applications: Dedicated to the Memory
  of Gene Golub, World Scientific, 2010, pp.~247--256.

\bibitem{goreinov1997theory}
Sergei~A Goreinov, Eugene~E Tyrtyshnikov, and Nickolai~L Zamarashkin, \emph{A
  theory of pseudoskeleton approximations}, Linear Algebra and its Applications
  \textbf{261} (1997), no.~1-3, 1--21.

\bibitem{goreinov1997pseudo}
Sergei~A Goreinov, Nikolai~L Zamarashkin, and Eugene~E Tyrtyshnikov,
  \emph{Pseudo-skeleton approximations by matrices of maximal volume},
  Mathematical Notes \textbf{62} (1997), no.~4, 515--519.

\bibitem{gorodetsky2019continuous}
Alex Gorodetsky, Sertac Karaman, and Youssef~M Marzouk, \emph{A continuous
  analogue of the tensor-train decomposition}, Computer Methods in Applied
  Mechanics and Engineering \textbf{347} (2019), 59--84.

\bibitem{griebel2019analysis}
Michael Griebel and Helmut Harbrecht, \emph{Analysis of tensor approximation
  schemes for continuous functions}, arXiv preprint arXiv:1903.04234 (2019).

\bibitem{Haario-DRAM-2006}
Heikki Haario, Marko Laine, Antonietta Mira, and Eero Saksman, \emph{{DRAM}:
  efficient adaptive {MCMC}}, Statistics and Computing \textbf{16} (2006),
  no.~4, 339--354.

\bibitem{hackbusch2012tensor}
Wolfgang Hackbusch, \emph{Tensor spaces and numerical tensor calculus},
  vol.~42, Springer Science \& Business Media, 2012.

\bibitem{higdon2002space}
Dave Higdon, \emph{Space and space-time modeling using process convolutions},
  Quantitative methods for current environmental issues, Springer, 2002,
  pp.~37--56.

\bibitem{holtz2012alternating}
Sebastian Holtz, Thorsten Rohwedder, and Reinhold Schneider, \emph{The
  alternating linear scheme for tensor optimization in the tensor train
  format}, SIAM Journal on Scientific Computing \textbf{34} (2012), no.~2,
  A683--A713.

\bibitem{hukushima1996exchange}
Koji Hukushima and Koji Nemoto, \emph{Exchange {M}onte {C}arlo method and
  application to spin glass simulations}, Journal of the Physical Society of
  Japan \textbf{65} (1996), no.~6, 1604--1608.

\bibitem{MCMC:KBJ_2014}
Nikolas Kantas, Alexandros Beskos, and Ajay Jasra, \emph{Sequential {M}onte {C}arlo methods
  for high-dimensional inverse problems: A case study for the {N}avier-{S}tokes
  equations}, SIAM/ASA Journal on Uncertainty Quantification \textbf{2} (2014),
  no.~1, 464--489.

\bibitem{knothe1957contributions}
Herbert Knothe, \emph{Contributions to the theory of convex bodies.}, The
  Michigan Mathematical Journal \textbf{4} (1957), no.~1, 39--52.

\bibitem{kolda2009tensor}
Tamara~G Kolda and Brett~W Bader, \emph{Tensor decompositions and
  applications}, SIAM Review \textbf{51} (2009), no.~3, 455--500.

\bibitem{Detommaso-HINT-2019}
Jakob Kruse, Gianluca Detommaso, Robert Scheichl, and Ullrich Koethe, \emph{{HINT}: Hierarchical
  invertible neural transport for density estimation and {B}ayesian inference},
  The AAAI Conference on Artificial Intelligence (AAAI), 2021, p.~to appear.

\bibitem{MCMC:Liu_2001}
Jun~S Liu, \emph{{M}onte {C}arlo strategies in scientific computing}, Springer,
  New York, 2001.

\bibitem{liu-stein-2016}
Qiang Liu and Dilin~Wang, \emph{Stein variational gradient descent: A general purpose
  {B}ayesian inference algorithm}, Advances In Neural Information Processing
  Systems, vol.~29, 2016, pp.~2378--2386.

\bibitem{mahoney2009cur}
Michael~W Mahoney and Petros Drineas, \emph{{CUR} matrix decompositions for
  improved data analysis}, Proceedings of the National Academy of Sciences
  \textbf{106} (2009), no.~3, 697--702.

\bibitem{marzouk2016sampling}
Youssef~M Marzouk, Tarek Moselhy, Matthew Parno, and Alessio Spantini,
  \emph{Sampling via measure transport: An introduction}, Handbook of
  Uncertainty Quantification (2016), 1--41.

\bibitem{meng1996simulating}
Xiao-Li Meng and Wing~Hung Wong, \emph{Simulating ratios of normalizing
  constants via a simple identity: a theoretical exploration}, Statistica
  Sinica (1996), 831--860.

\bibitem{mengersen1996rates}
Kerrie~L Mengersen, Richard~L Tweedie, et~al., \emph{Rates of convergence of
  the hastings and metropolis algorithms}, The Annals of Statistics \textbf{24}
  (1996), no.~1, 101--121.

\bibitem{mo-rectmaxvol-2018}
Aleksandr~Y Mikhalev and Ivan~V Oseledets, \emph{Rectangular maximum--volume
  submatrices and their applications}, Linear Algebra and its Applications
  \textbf{538} (2018), 187--211.

\bibitem{el2012bayesian}
Tarek Moselhy and Youssef~M Marzouk, \emph{{B}ayesian inference with optimal
  maps}, Journal of Computational Physics \textbf{231} (2012), no.~23,
  7815--7850.

\bibitem{neal1996sampling}
Radford~M Neal, \emph{Sampling from multimodal distributions using tempered
  transitions}, Statistics and computing \textbf{6} (1996), no.~4, 353--366.

\bibitem{Os-mvk2-2011}
Ivan~V Oseledets, \emph{{DMRG} approach to fast linear algebra in the
  {TT}--format}, Comput. Meth. Appl. Math. \textbf{11} (2011), no.~3, 382--393.

\bibitem{oseledets2010tt}
Ivan~V Oseledets and Eugene~E Tyrtyshnikov, \emph{{TT}-cross approximation for
  multidimensional arrays}, Linear Algebra and its Applications \textbf{432}
  (2010), no.~1, 70--88.

\bibitem{oseledets2011tensor}
Ivan~V Oseledets, \emph{Tensor-train decomposition}, SIAM Journal on Scientific
  Computing \textbf{33} (2011), no.~5, 2295--2317.

\bibitem{mcbook}
Art~B Owen, \emph{{M}onte {C}arlo theory, methods and examples}, 2013.

\bibitem{papamakarios2019normalizing}
George Papamakarios, Eric Nalisnick, Danilo~Jimenez Rezende, Shakir Mohamed,
  and Balaji Lakshminarayanan, \emph{Normalizing flows for probabilistic
  modeling and inference}, arXiv preprint arXiv:1912.02762 (2019).

\bibitem{parno2018transport}
Matthew~D Parno and Youssef~M Marzouk, \emph{Transport map accelerated {M}arkov
  chain {M}onte {C}arlo}, SIAM/ASA Journal on Uncertainty Quantification
  \textbf{6} (2018), no.~2, 645--682.

\bibitem{peherstorfer2019transport}
Benjamin Peherstorfer and Youssef~M Marzouk, \emph{A transport-based
  multifidelity preconditioner for {M}arkov chain {M}onte {C}arlo}, Advances in
  Computational Mathematics \textbf{45} (2019), no.~5-6, 2321--2348.

\bibitem{rezende2015variational}
Danilo Rezende and Shakir Mohamed, \emph{Variational inference with normalizing
  flows}, International Conference on Machine Learning, vol.~37, 2015,
  pp.~1530--1538.

\bibitem{robert2013monte}
Christian Robert and George Casella, \emph{{M}onte {C}arlo statistical
  methods}, Springer Science \& Business Media, 2013.

\bibitem{rdgs-tt-gauss-2020}
Paul~B Rohrbach, Sergey Dolgov, Lars Grasedyck, and Robert Scheichl, \emph{Rank bounds for
  approximating {Gaussian} densities in the {Tensor-Train} format}, {arXiv}
  preprint 2001.08187, 2020.

\bibitem{rosenblatt1952remarks}
Murray Rosenblatt, \emph{Remarks on a multivariate transformation}, The Annals
  of Mathematical Statistics \textbf{23} (1952), no.~3, 470--472.

\bibitem{scheichl-qmc-bayes-2017}
Robert Scheichl, Andrew~M Stuart, and Aretha~L Teckentrup, \emph{{Q}uasi-{M}onte {C}arlo
  and {M}ultilevel {M}onte {C}arlo methods for computing posterior expectations
  in elliptic inverse problems}, SIAM/ASA Journal on Uncertainty Quantification
  \textbf{5} (2017), no.~1, 493--518.

\bibitem{shen2011spectral}
Jie Shen, Tao Tang, and Li-Lian Wang, \emph{Spectral methods: algorithms,
  analysis and applications}, vol.~41, Springer Science \& Business Media,
  2011.

\bibitem{spantini2018inference}
Alessio Spantini, Daniele Bigoni, and Youssef~M Marzouk, \emph{Inference via
  low-dimensional couplings}, The Journal of Machine Learning Research
  \textbf{19} (2018), no.~1, 2639--2709.

\bibitem{spantini2015optimal}
Alessio Spantini, Antti Solonen, Tiangang Cui, James Martin, Luis Tenorio, and
  Youssef~M Marzouk, \emph{Optimal low-rank approximations of {B}ayesian linear
  inverse problems}, SIAM Journal on Scientific Computing \textbf{37} (2015),
  no.~6, A2451--A2487.

\bibitem{swendsen1986replica}
Robert~H Swendsen and Jian-Sheng Wang, \emph{Replica {M}onte {C}arlo simulation
  of spin-glasses}, Physical Review Letters \textbf{57} (1986), no.~21, 2607.

\bibitem{tabak2013family}
Esteban~G Tabak and Cristina~V Turner, \emph{A family of nonparametric density
  estimation algorithms}, Communications on Pure and Applied Mathematics
  \textbf{66} (2013), no.~2, 145--164.

\bibitem{trefethen2019approximation}
Lloyd~N Trefethen, \emph{Approximation theory and approximation practice}, vol.
  164, SIAM, 2019.

\bibitem{trigila2016data}
Giulio Trigila and Esteban~G Tabak, \emph{Data-driven optimal transport},
  Communications on Pure and Applied Mathematics \textbf{69} (2016), no.~4,
  613--648.

\bibitem{villani2008optimal}
C{\'e}dric Villani, \emph{Optimal transport: old and new}, vol. 338, Springer
  Science \& Business Media, 2008.

\bibitem{white1993density}
Steven~R White, \emph{Density-matrix algorithms for quantum renormalization
  groups}, Physical Review B \textbf{48} (1993), no.~14, 10345.

\bibitem{zahm2018certified}
Olivier Zahm, Tiangang Cui, Kody Law, Alessio Spantini, and Youssef~M Marzouk,
  \emph{Certified dimension reduction in nonlinear {B}ayesian inverse
  problems}, arXiv preprint arXiv:1807.03712 (2018).

\end{thebibliography}
\end{document}